\documentclass[journal]{IEEEtran}
\usepackage[T1]{fontenc}


\usepackage{calrsfs}
\usepackage{amsmath, amsthm, accents, amssymb, amsfonts, bm,
  mathtools, cases}
\usepackage[cal = boondox,%
  calscaled = 1.1,%
  bb = ams,%
  bbscaled = .94,%
  frakscaled = .97,%
  scr = esstix]{mathalpha}

\usepackage{algorithm}
\usepackage{algpseudocode}
\usepackage[inline]{enumitem}
\usepackage{graphicx}

\usepackage{caption}
\usepackage{subcaption}
\captionsetup[figure]{font=footnotesize, labelfont=bf}

\usepackage{xr-hyper}
\usepackage{hyperref}
\hypersetup{hidelinks}

\usepackage[capitalize]{cleveref}
\usepackage{url}
\usepackage{booktabs}

\usepackage{textcomp}

\usepackage{etoolbox}

\usepackage{siunitx}
\sisetup{%
exponent-mode = input
}

\usepackage[mathlines, switch]{lineno}
\crefname{line}{Line}{lines}

\usepackage[textsize=footnotesize]{todonotes}


\newcommand\given{{\mathbin{}\mid\mathbin{}}}
\newcommand{\IntegerP}{\mathbb{N}}
\newcommand{\IntegerPP}{\mathbb{N}_*}
\newcommand{\Real}{\mathbb{R}}
\newcommand{\RealP}{\mathbb{R}_{+}}
\newcommand{\RealPP}{\mathbb{R}_{++}}
\newcommand{\PS}{\mathbb{S}_{++}}
\newcommand\SetSymbol[1][]{
  \nonscript\,#1\vert \allowbreak \nonscript\,\mathopen{}}
\DeclarePairedDelimiterX\Set[1]{\lbrace}{\rbrace}%
{ \renewcommand\given{\SetSymbol[\delimsize]} #1 }
\DeclarePairedDelimiterX\norm[1]\lVert\rVert{\ifblank{#1}{\:\cdot\:}{#1}}
\DeclarePairedDelimiterX\innerp[2]{\langle}{\rangle}{#1
  \mathop{}\delimsize\vert\mathop{} #2}

\newcommand\vect[1]{\mathbf{#1}}
\newcommand\vectgr[1]{\bm{#1}}
\newcommand{\Hilbert}{\mathscr{H}}
\newcommand{\Banach}{\mathcal{B}}
\newcommand{\Bellman}{\mathbfcal{T}}

\DeclareMathOperator{\Fix}{Fix}

\DeclareMathOperator{\Df}{D}
\DeclareMathOperator{\trace}{tr}
\DeclareMathOperator{\Exp}{Exp}

\DeclareMathOperator{\clo}{clos}

\DeclareMathOperator{\inv}{inv}

\newtheoremstyle{mystyle}
  {}{\topsep}
  {\normalfont}{}
  {\normalfont\bfseries}{.}
  {1ex}
  {\thmname{#1}\thmnumber{ #2}\thmnote{ (#3)}}

\theoremstyle{mystyle}
\newtheorem{theorem}{Theorem}
\newtheorem{lemma}[theorem]{Lemma}
\newtheorem{assumption}[theorem]{Assumption}
\newtheorem{assumptions}[theorem]{Assumptions}
\newtheorem{proposition}[theorem]{Proposition}
\newtheorem{example}[theorem]{Example}

\newtheorem{fact}[theorem]{Fact}
\newtheorem{remark}[theorem]{Remark}

\newlist{thmlist}{enumerate}{1}
\setlist[thmlist]{label=\textbf{(\roman{*})}, ref=\thetheorem(\roman{*}), noitemsep}

\newlist{lemlist}{enumerate}{1}
\setlist[lemlist]{label=\textbf{(\roman{*})}, ref=\thelemma(\roman{*}), noitemsep}

\newlist{exlist}{enumerate}{1}
\setlist[exlist]{label=\textbf{(\roman{*})}, ref=\theexample(\roman{*}), noitemsep}

\newlist{factlist}{enumerate}{1}
\setlist[factlist]{label=\textbf{(\roman{*})}, ref=\thefact(\roman{*}), noitemsep}

\newlist{proplist}{enumerate}{1}
\setlist[proplist]{label=\textbf{(\roman{*})}, ref=\theproposition(\roman{*}), noitemsep}

\newlist{asslist}{enumerate}{1}
\setlist[asslist]{label=\textbf{(\roman{*})}, ref=\theassumption(\roman{*}), noitemsep}

\newlist{assslist}{enumerate}{1}
\setlist[assslist]{label=\textbf{(\roman{*})}, ref=\theassumptions(\roman{*}), noitemsep}

\newlist{deflist}{enumerate}{1}
\setlist[deflist]{label=\textbf{(\roman{*})}, ref=\thedefintion(\roman{*}), noitemsep}

\newlist{algolist}{enumerate}{1}
\setlist[algolist]{label=\textbf{(\roman{*})}, ref=\thealgo(\roman{*}), noitemsep}

\newlist{claimlist}{enumerate}{1}
\setlist[claimlist]{label=\textbf{(\roman{*})},
  ref=\theclaim(\roman{*}), noitemsep}

\crefname{theorem}{Theorem}{Theorems}
\crefname{proposition}{Proposition}{Propositions}
\crefname{assumption}{Assumption}{Assumptions}
\crefname{assumptions}{Assumptions}{Assumptions}
\crefname{lemma}{Lemma}{Lemmata}
\crefname{definition}{Definition}{Definitions}
\crefname{example}{Example}{Examples}
\crefname{algo}{Algorithm}{Algorithms}
\crefname{fact}{Fact}{Facts}
\crefname{claim}{Claim}{Claims}
\crefname{corollary}{Corollary}{Corollaries}

\crefname{thmlisti}{Theorem}{Theorems}
\crefname{lemlisti}{Lemma}{Lemmata}
\crefname{proplisti}{Proposition}{Propositions}
\crefname{asslisti}{Assumption}{Assumptions}
\crefname{assslisti}{Assumption}{Assumptions}
\crefname{deflisti}{Definition}{Definitions}
\crefname{exlisti}{Example}{Examples}
\crefname{algolisti}{Algorithm}{Algorithms}
\crefname{factlisti}{Fact}{Facts}
\crefname{claimlisti}{Claim}{Claims}

\crefname{figure}{Figure}{Figures}
\crefname{section}{Section}{Sections}

\allowdisplaybreaks

\usepackage[%
  backend = biber,%
  maxbibnames = 10,%
  minbibnames = 2,%
  style = ieee,%
  citestyle = numeric-comp,%
  bibencoding = utf8,%
  datamodel = software,%
  defernumbers = true,%
  sortcites=true,%
  doi=true,%
  isbn=false,%
  url=false,%
  eprint=true%
]{biblatex}

\addbibresource{refs.bib}

\setlength\bibitemsep{0pt}
\setcounter{biburlnumpenalty}{9000}
\setcounter{biburllcpenalty}{9000}
\setcounter{biburlucpenalty}{9000}

\makeatletter
\newlength{\negph@wd}
\DeclareRobustCommand{\negphantom}[1]{%
  \ifmmode
  \mathpalette\negph@math{#1}%
  \else
  \negph@do{#1}%
  \fi
}
\newcommand{\negph@math}[2]{\negph@do{$\m@th#1#2$}}
\newcommand{\negph@do}[1]{%
  \settowidth{\negph@wd}{#1}%
  \hspace*{-\negph@wd}%
}
\makeatother


\usepackage{tikz}
\usepackage{xcolor}
\usepackage{pgfplots}
\pgfplotsset{compat=newest}
\usepgflibrary{fillbetween}
\usetikzlibrary{angles, arrows, arrows.meta, shapes,
tikzmark, patterns}

\colorlet{dqn}{green!80!black}
\newcommand{\dqn}{
    \tikz[baseline=-0.5ex]{
        \draw[line width=1pt, color=dqn] (-0.3, 0) -- (0.3, 0);
        \draw[color=dqn] plot[mark size=4pt, mark=square*,
        mark options={line width=1pt}]
        coordinates {(0,0)};
    }
}

\colorlet{ppo}{violet!80!black}
\newcommand{\ppo}{
    \tikz[baseline=-0.5ex]{
        \draw[line width=1pt, color=ppo] (-0.3, 0) -- (0.3, 0);
        \draw[color=ppo] plot[mark size=4pt, mark=triangle*,
        mark options={line width=1pt}]
        coordinates {(0,0)};
    }
}

\colorlet{proposed}{blue!80!black}
\newcommand{\proposed}{
    \tikz[baseline=-0.5ex]{
        \draw[line width=1pt, color=proposed] (-0.3, 0) -- (0.3, 0);
        \draw[color=proposed] plot[mark size=4pt, mark=*,
        mark options={line width=1pt}]
        coordinates {(0,0)};
    }
}

\colorlet{klspi}{orange!80!black}
\newcommand{\klspi}{
    \tikz[baseline=-0.5ex]{
        \draw[line width=1pt, color=klspi] (-0.3, 0) -- (0.3, 0);
        \draw[color=klspi] plot[mark size=4pt,
        mark=diamond*,
        mark options={line width=1pt}]
        coordinates {(0,0)};
    }
}

\colorlet{obr}{magenta!80!black}
\newcommand{\obr}{
    \tikz[baseline=-0.5ex]{
        \draw[line width=1pt, color=obr] (-0.3, 0) -- (0.3, 0);
        \draw[color=obr] plot[mark size=4pt,
        mark=diamond*,
        mark options={line width=1pt, fill=white}]
        coordinates {(0,0)};
    }
}

\colorlet{emgmm}{red!80!black}
\newcommand{\emgmm}{
    \tikz[baseline=-0.5ex]{
        \draw[line width=1pt, color=emgmm] (-0.3, 0) -- (0.3, 0);
        \draw[color=emgmm] plot[mark size=4pt,
        mark=pentagon*,
        mark options={line width=1pt}]
        coordinates {(0,0)};
    }
}

\newcommand{\quicksymbol}[2]{
    \tikz[baseline=-0.5ex]{
        \draw[line width=1pt, color=#1] (-0.3, 0) -- (0.3, 0);
        \draw[color=#1] plot[mark size=4pt, #2]
        coordinates {(0,0)};
    }
}

\begin{document}


\title{ Gaussian-Mixture-Model Q-Functions for\\ Policy Iteration in Reinforcement Learning }

\author{Minh Vu and Konstantinos Slavakis,~\IEEEmembership{Senior Member,~IEEE}
  \thanks{The authors are with the Department of Information and Communications, Institute of Science
    Tokyo, 4259-G2-4 Nagatsuta-cho, Midori-ku, Yokohama, Kanagawa, Japan. Emails:
    \texttt{vu.d.a5c3@m.isct.ac.jp, slavakis@ict.eng.isct.ac.jp} } }

\maketitle

\begin{abstract}
  Unlike their conventional use as estimators of probability density functions in reinforcement learning
  (RL), this paper introduces a novel function-approximation role for Gaussian mixture models (GMMs) as
  direct surrogates for Q-function losses. The proposed estimators, termed GMM-QFs, possess substantial
  representational capacity, as they are shown to be universal approximators over a broad class of
  functions. They are further embedded within Bellman residuals, where their learnable parameters---a
  fixed number of mixing weights, together with Gaussian mean vectors and covariance matrices---are
  inferred from data via optimization on a product Riemannian manifold. This geometric perspective on the
  parameter space naturally introduces Riemannian optimization into the policy-evaluation step of
  standard policy-iteration (PI) frameworks. {\color{black} Moreover, rigorous theoretical analysis
    establishes performance bounds on Q-function estimation error under the proposed PI scheme.}
  Supporting numerical tests show that GMM-QFs deliver competitive performance and, in some cases,
  outperform state-of-the-art approaches across a range of benchmark RL tasks, all while maintaining a
  significantly smaller computational footprint than deep-learning methods.
\end{abstract}

\begin{IEEEkeywords}
  Reinforcement learning, Gaussian-mixture models, policy iteration, Q-functions, Riemannian optimization.
\end{IEEEkeywords}

\section{Introduction}\label{sec:intro}

Reinforcement learning (RL)~\cite{Bertsekas:RLandOC:19, Sutton:IntroRL:18} is a machine-learning paradigm
in which an intelligent ``agent'' interacts with its environment to determine an optimal policy that
minimizes the cumulative cost/loss associated with its ``actions.''  By modeling the environment as a
Markov decision process (MDP)~\cite{Bertsekas:RLandOC:19}, RL furnishes a rigorous mathematical framework
for sequential-making problems in complex domains such as robotics, wireless communications, data mining,
and bioinformatics~\cite{Bertsekas:RLandOC:19}.

To determine an optimal policy, RL strategies generally aim to estimate the value or loss (Q-function)
corresponding to the selection of a particular action in a given state, using feedback obtained from the
environment (\cref{fig:dp}). Classical methods such as Q-learning~\cite{watkins92Qlearning} and
state-action-reward-state-action (SARSA)~\cite{singh00sarsa} estimate Q-functions via lookup tables that
store Q-values for every possible state-action pair. Although effective in environments with discrete and
small-scale state-action spaces, these tabular approaches become computationally prohibitive in practical
settings characterized by continuous or large-scale state-action spaces. To mitigate this limitation,
significant attention has been devoted to the development of RL algorithms that leverage (typically
non-linear) models to approximate Q-functions~\cite{Bertsekas:RLandOC:19}.

Approximation models for Q-functions have a long-standing history in RL. Classical kernel-based (KB)RL
methods~\cite{ormoneit02kernel, ormoneit:autom:02, bae:mlsp:11} represent Q-functions as elements of
Banach spaces, typically those comprising all essentially bounded functions. Methods such as temporal
difference (TD)~\cite{sutton88td}, least-squares (LS)TD~\cite{lagoudakis03lspi, regularizedpi:16,
  xu07klspi}, Bellman-residual (BR) approaches~\cite{onlineBRloss:16}, and more recent nonparametric
designs~\cite{vu23rl, akiyama24proximal, akiyama24nonparametric} model Q-functions within user-defined
reproducing kernel Hilbert spaces (RKHSs)~\cite{aronszajn50kernels, scholkopf2002learning}, thereby
exploiting the underlying geometric structure and computational efficiency afforded by the reproducing
inner product. A notable drawback of these kernel-based models, however, is that their number of design
parameters typically scales with the quantity of observed data, which can result in significant memory
and computational overhead---particularly in dynamic settings with nonstationary data distributions. To
alleviate this limitation, dimensionality-reduction techniques have been introduced~\cite{xu07klspi,
  vu23rl}. Nevertheless, constraining often the number of basis elements in the approximation subspace to
effect dimensionality reduction may lead to degraded accuracy in the resulting Q-function estimates. A
comprehensive review of KBRL, (LS)TD, and BR methods, along with their connections to RKHSs, is provided
in~\cite{akiyama24nonparametric}.

\begin{figure}[t]
  \centering
  \includegraphics[width=.95\columnwidth]{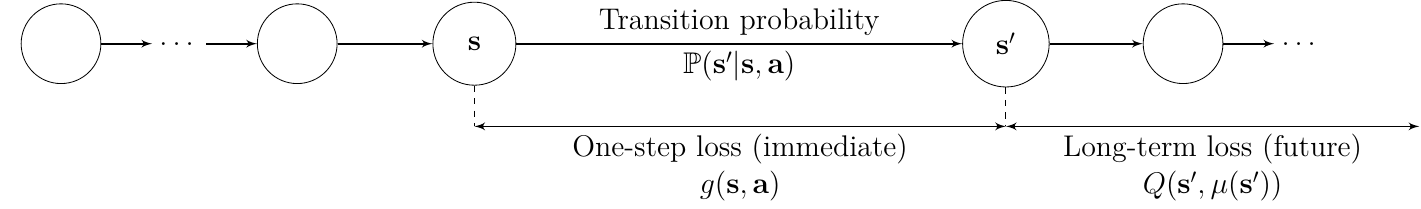}
  \caption{RL as a sequential decision-making process: at state $\vect{s}$, the RL agent takes
    decision/action $\vect{a} \coloneqq \mu( \vect{s} )$, suffers the one-step loss $g(\vect{s},
    \vect{a})$, and moves to the next state $\vect{s}^{\prime}$ according to some transition
    probability. Function $\mu(\cdot)$ denotes the policy or decision-making mechanism. The agent seeks
    to identify a policy that minimizes the cumulative (long-term) loss---quantified by the Q-function
    $Q(\cdot)$---incurred over its sequence of actions.}\label{fig:dp}
\end{figure}

Deep neural networks have been widely adopted as nonlinear Q-function approximators, most notably in the
form of deep Q-networks (DQNs), e.g.,~\cite{mnih13dqn, hasselt16ddqn}. These models are typically trained
using \textit{experience data}~\cite{lin93experience} gathered from previously employed policies and
stored in a \textit{replay buffer.} The replay buffer serves as a repository of prior knowledge and
contributes significantly to stabilizing the training of DQN-based algorithms. While DQNs offer a
practical and powerful means of training RL agents---leveraging the computational efficiency and
expressiveness of neural networks---they exhibit limitations in several key scenarios.  In particular,
online deployment requires rapid adaptation to new data, which is challenging for DQNs due to their
massive parameter count. While frequent re-training is theoretically possible in simulation, it is too
computationally and hardware-intensive for lightweight, real-time applications. Furthermore, large-scale
neural networks in DQNs are notoriously more prone to instability and overfitting, reducing their
robustness against non-stationary or unpredictable real-world conditions (out-of-distribution problems).

In distributional RL~\cite{bellemare17distRL, QRDQN, sato99em, mannor05rl, 
  agostini17gmmrl, choi19distRL}---a prominent and increasingly influential direction in RL---Q-functions
are usually treated as (statistics of) random variables (RVs). For instance, \cite{mannor05rl} assumes
that samples of value or Q-functions are jointly Gaussian, an assumption that, when combined with
classical least squares (LS) and Gauss-Markov theory, leads to Kalman-type algorithmic solutions. To
enable more expressive probabilistic modeling, Gaussian mixture models (GMMs)~\cite{Reynolds:GMM:19,
  McLachlan:FiniteMixtures:00} have been widely adopted to approximate either the joint PDF $p(Q,
\vect{s}, \vect{a})$---in which Q-function $Q$, state $\vect{s}$, and action $\vect{a}$ are all modeled
as realizations (observations) of RVs---or the conditional $p(Q \given \vect{s}, \vect{a})$. This
conventional use of GMMs, closely linked to maximum likelihood estimation~\cite{Demp:77,
  Figueiredo:mixtures:02}, naturally motivates the adoption of expectation-maximization (EM)
procedures~\cite{sato99em, mannor05rl,  agostini17gmmrl}. As such, distributional RL
estimates a desirable Q-function $Q$ \textit{indirectly}\/ as a statistical byproduct of the modeled
PDF---most commonly as the mean of the conditional distribution $p(Q \given \vect{s},
\vect{a})$. However, it is well-established that EM algorithms are highly sensitive to
initialization~\cite{Figueiredo:mixtures:02}, which can negatively impact convergence and stability.

{ 
  This manuscript introduces the following contributions.

  \begin{enumerate}[font=\bfseries]

  \item[(C1)] GMMs are employed \textit{directly}\/ as surrogates of Q-functions, rather than as
    approximators of their PDFs. The proposed GMMs serve as universal approximators for a broad class of
    functions---see \cref{thm:universal.approx}. Unlike distributional RL, no statistical assumptions on
    Q-functions themselves or their evaluations are made. Although GMMs have previously been utilized to
    model stochastic policy functions~\cite{RPPO}---that is, conditional PDFs---within the framework of
    policy-gradient and policy-search methods~\cite{sutton99pg}, their direct use as Q-function surrogates
    appears to be novel within the RL literature.

  \item[(C2)] This work establishes a \textit{parametric}\/ model, in contrast to the nonparametric
    strategies~\cite{park93rbf} employed in the popular KBRL~\cite{ormoneit02kernel, ormoneit:autom:02,
      bae:mlsp:11, sutton88td, lagoudakis03lspi, regularizedpi:16, xu07klspi, onlineBRloss:16}, where the
    number of tunable parameters---mixing weights of the kernel functions---scales with the volume of
    observed data. The proposed parametric GMM-QFs, by contrast, have a fixed number of learnable
    parameters---namely, mixing weights, mean vectors, and covariance matrices of the Gaussian
    kernels---regardless of the size of the observed dataset.

  \item[(C3)] The learnable parameters of GMM-QFs constitute a (product) \textit{Riemannian manifold.}
    This structure is not forcibly imposed on the design; rather, it arises naturally from the nature of
    the parameters themselves---covariance matrices inherently form a Riemannian manifold. This work
    harnesses this underlying geometry: GMM-QFs are integrated into a Bellman-residual-inspired loss, and
    parameters are learned on a Riemannian manifold, opening the door, for the first time, to the
    versatile machinery of Riemannian optimization~\cite{Absil:OptimManifolds:08} within the
    policy-evaluation step of standard PI schemes. The proposed parametric model, together with
    Riemannian optimization, fosters a computationally efficient framework---see \cref{sec:tests}, where
    it is shown that GMM-QFs require fewer parameters to achieve performance comparable to, and in many
    cases surpassing, that of standard DQNs.
  \end{enumerate}
}
The manuscript is organized as follows. \cref{sec:rl} reviews the classical Bellman mappings and
establishes the notation. The proposed class of GMM-QFs, together with their
universal approximation properties, is introduced
in~\cref{sec:GMM-QFs}. The Riemannian-optimization-driven policy iteration (PI)
framework incorporating these models, along with its performance
analysis, is detailed in~\cref{sec:rpi}. Numerical experiments on benchmark control tasks are
reported in~\cref{sec:tests}. Finally, \cref{sec:conclusion} provides concluding remarks and outlines
directions for future research. Additional numerical tests are presented in the supplementary
material. This manuscript substantially extends the conference paper~\cite{Vu:eusipco:25}.

\section{Policy Iteration and Bellman Residuals}\label{sec:rl}

\subsection{Notations}\label{subsec:notations}

Let $\mathfrak{S} \subset \Real^D$ denote the \textit{continuous}\/ state space, with state vector
$\vect{s} \in \mathfrak{S}$, for some $D \in \IntegerPP$ ($\IntegerPP$ is the set of all positive
integers). The \textit{discrete}\/ and \textit{finite}\/ action space is denoted by $\mathfrak{A}$, with
action $\vect{a} \in \mathfrak{A}$. An agent, currently at state $\vect{s} \in \mathfrak{S}$, takes
action $\vect{a} \in \mathfrak{A}$ and transits to a new state $\vect{s}^\prime \in\mathfrak{S}$ under
transition probability $p(\vect{s}^\prime \given \vect{s}, \vect{a})$, which is unknown to the user/agent
in most cases of practice. This transition 
incurs a real-valued 
\textit{one-step}\/ penalty (loss), which represents the immediate or short-term cost of the agent's
action $\vect{a}$ at state $\vect{s}$ and is typically provided by the environment; see \cref{fig:dp}.

For convenience, define also the state-action vector $\vect{z} \coloneqq \zeta(\vect{s}, \vect{a})$, with
a user-defined mapping $\zeta \colon \mathfrak{S} \times \mathfrak{A} \to \mathfrak{Z} \subset
\Real^{D_z}$, where $D_z \in \IntegerPP$. Mapping $\zeta (\cdot, \cdot)$ is introduced to accommodate a
broad range of state-action pairs, including cases where the discrete $\mathfrak{A}$ is categorical,
e.g., it contains actions such as ``move to the left'' or ``move to the right.''  Even when
$\mathfrak{A}$ is categorical, the mapping $\zeta (\cdot, \cdot)$ is introduced so that $\mathfrak{Z}$
becomes a subset of the continuous space $\Real^{D_z}$, thereby enabling the subsequent definition of
Gaussian functions on $\mathfrak{Z}$.

As such, $\mathfrak{Z}$ serves as the domain of the \textit{one-step loss}\/ $g \colon \mathfrak{Z} \to
\Real \colon \vect{z} \mapsto g(\vect{z})$, as well as of the \textit{Q-function}\/ $Q \colon
\mathfrak{Z} \to \Real \colon \vect{z} \mapsto Q(\vect{z})$. Consequently, the one-step penalty received
from the environment when the agent takes action $\vect{a}$ in state $\vect{s}$ is given by $g
(\zeta(\vect{s}, \vect{a})) \in \Real$, while $Q (\zeta(\vect{s}, \vect{a}) )$ represents the
\textit{long-term}\/ cost that the agent will incur over an infinite future horizon if it takes action
$\vect{a}$ in state $\vect{s}$; see \cref{fig:dp}. The exact functional forms of $g$ and $Q$ are
typically unknown to the agent, but are commonly modeled and inferred from observed data together with
a-priori knowledge of the surrounding environment.

Following~\cite{Bertsekas:RLandOC:19}, consider the set of all mappings $\mathcal{M} \coloneqq
\Set{\mu(\cdot) \given \mu(\cdot) \colon \mathfrak{S} \to \mathfrak{A} \colon \vect{s} \mapsto
  \mu(\vect{s})}$. In other words, $\mu(\vect{s})$ denotes the action that the agent takes at state
$\vect{s}$ under $\mu$. The set of policies is defined as $\Pi \coloneqq \mathcal{M}^{\IntegerP}
\coloneqq \Set{\mu_0, \mu_1, \dots, \mu_n, \dots \given \mu_n \in \mathcal{M}, n \in \IntegerP}$. A
policy will be denoted by $\pi \in \Pi$. Given $\mu\in\mathcal{M}$, a stationary policy $\pi_\mu$ is
defined as $\pi_\mu \coloneqq (\mu, \mu, \dots, \mu, \dots)$. It is customary for $\mu$ to denote also
$\pi_\mu$. Finally, for compact notations, let $\overline{1,K} \coloneqq \{1, \ldots, K\}$, for any $K\in
\IntegerPP$.

\subsection{Bellman mappings, policy iteration, and approximation spaces}\label{subsec:bellman}

The classical \textit{Bellman mappings}~\cite{Bertsekas:RLandOC:19} compute the total cost---that is, the
sum of the one-step loss and the expected long-term loss---the agent would incur if action $\vect{a}$
were taken at state $\vect{s}$. More specifically, if $\Banach$ stands for a \textit{user-defined}\/
space of Q-functions, typically the Banach space of all essentially bounded
functions~\cite{Bertsekas:RLandOC:19}, then the classical Bellman mappings $\Bellman^{\diamond}_\mu,
\Bellman^{\diamond} \colon \Banach \to \Banach \colon Q \mapsto \Bellman^{\diamond}_\mu Q,
\Bellman^{\diamond} Q$ are defined as~\cite{Bertsekas:RLandOC:19}
\begin{subequations}\label{Bellman.maps.standard}
  \begin{align}
    (\Bellman_{\mu}^{\diamond} Q)( \zeta(\vect{s}, \vect{a}) ) & \coloneqq g( \zeta(\vect{s}, \vect{a}) ) + \alpha
    \mathbb{E}_{\vect{s}^{\prime} \given (\vect{s}, \vect{a})} \{  Q( \zeta(\vect{s}^{\prime}, \mu(\vect{s}^{\prime})) )
    \}\,, \label{Bellman.standard.mu} \\
    (\Bellman^{\diamond} Q)( \zeta(\vect{s}, \vect{a}) ) & \coloneqq g( \zeta(\vect{s}, \vect{a}) ) + \alpha
    \mathbb{E}_{\vect{s}^{\prime} \given ( \vect{s}, \vect{a} ) } \{  \min_{\vect{a}^{\prime}\in \mathfrak{A}} Q(
      \zeta(\vect{s}^{\prime}, \vect{a}^{\prime}) ) \}\,, \label{Bellman.standard}
  \end{align}
\end{subequations}
$\forall (\vect{s}, \vect{a})$, where $\mathbb{E}_{\bm{s}^{\prime} \given (\vect{s}, \vect{a})} \{ \cdot
\}$ stands for the conditional expectation operator with respect to (w.r.t.) the potentially next state
$\vect{s}^{\prime}$ conditioned on $(\vect{s}, \vect{a})$, and $\alpha \in [0,1)$ is the discount
  factor. A more rigorous discussion based on probabilistic arguments and notation is deferred to
  \cref{subsec:performance}. Mapping \eqref{Bellman.standard.mu} refers to the case where the agent takes
  actions according to a stationary policy $\mu$, while \eqref{Bellman.standard} serves as a greedy
  variation of \eqref{Bellman.standard.mu}.

For a mapping $\Bellman\colon \Banach \to \Banach$, its fixed-point set is defined as $\Fix \Bellman
\coloneqq \{Q\in\Banach \given \Bellman Q=Q \}$. It is well-known that $\Fix \Bellman_{\mu}^\diamond$ and
$\Fix \Bellman^{\diamond}$ play central roles in identifying \textit{optimal}\/ policies which minimize
the total loss~\cite{Bertsekas:RLandOC:19}. Usually, the discount factor $\alpha \in [0,1)$ to render
  $\Bellman_{\mu}^\diamond, \Bellman^{\diamond}$ strict contractions and to ensure that $\Fix
  \Bellman_{\mu}^{\diamond}$ and $\Fix \Bellman^{\diamond}$ are nonempty and
  singletons~\cite{Bertsekas:RLandOC:19, hb.plc.book}. These arguments will be revisited in
  \cref{subsec:performance}.

Policy iteration (PI) is a popular RL framework~\cite{Bertsekas:RLandOC:19, Sutton:IntroRL:18}. It
comprises two stages per iteration $n \in \IntegerP$: \textit{policy evaluation}\/ and \textit{policy
  improvement.} For example, given a stationary policy $\mu_n$ at iteration $n$,
policy evaluation computes a Q-function $Q_n$ that ``closely'' approximates, in an appropriate
sense---when possible---$Q_{\mu_n}^{\diamond} \in \Fix \Bellman^{\diamond}_{\mu_n}$, while policy
improvement updates the policy according to the following greedy rule:
\begin{equation}\label{eq:policy-impr}
  \mu_{n+1} (\vect{s}) \coloneqq \arg \min\nolimits_{ \vect{a} \in \mathfrak{A}} Q_n(\, \zeta(\vect{s}, \vect{a})\, )
  \,, \quad \forall \vect{s}\in \mathfrak{S} \,.
\end{equation}
PI repeats this process to generate a sequence of (stationary) policies and Q-functions as in $\mu_0 \to
Q_{0} \to \mu_1 \to Q_{1} \to \ldots$, with the goal that the resulting sequence of the
Q-functions converges, in some sense, to $Q^{\diamond} \in \Fix \Bellman^{\diamond}$. The \textit{optimal
  policy}\/ then is defined as $\mu^{\diamond}(\vect{s}) \coloneqq \arg \min\nolimits_{ \vect{a} \in
  \mathfrak{A}} Q^{\diamond}( \zeta (\vect{s}, \vect{a}) )$, $\forall \vect{s}$.

Real-world problems involving high-dimensional or even continuous state spaces $\mathfrak{S}$ pose
significant computational challenges for tabular methods, where Q-functions must be evaluated at
\textit{every}\/ $\vect{s} \in \mathfrak{S}$ to identify policies---see, for example,
\eqref{eq:policy-impr}. To address this limitation, function-approximation models for Q-functions have
been developed as viable alternatives to tabular methods, enabling \textit{approximate PI}\/ schemes
through their incorporation into policy-evaluation steps.

To this end, a reproducing kernel Hilbert space (RKHS) $\mathcal{H}$ is a popular choice for the
underlying approximation space of Q-functions~\cite{sutton88td, ormoneit02kernel, ormoneit:autom:02,
  bae:mlsp:11, lagoudakis03lspi, regularizedpi:16, xu07klspi, onlineBRloss:16, vu23rl, akiyama24proximal,
  akiyama24nonparametric}, owing to the presence of an inner product and the reproducing property induced
by its associated kernel~\cite{aronszajn50kernels, scholkopf2002learning}---features that are absent in a
Banach space, which is the typical choice for $\mathcal{B}$ in the earlier discussion. For an RKHS
$\mathcal{H}$ and its inner product $\innerp{\cdot}{\cdot}_{ \mathcal{H} }$, induced by a reproducing
kernel $\kappa(\cdot, \cdot) \colon \Real^{D_z} \times \Real^{D_z} \to \Real \colon ( \vect{z}^{\prime},
\vect{z}) \mapsto \kappa ( \vect{z}^{\prime}, \vect{z})$~\cite{aronszajn50kernels,
  scholkopf2002learning}, with the feature mapping $\varphi(\cdot) \colon \Real^{D_z} \to \mathcal{H}
\colon \vect{z} \mapsto \varphi( \vect{z} ) \coloneqq \kappa(\cdot, \vect{z})$. A popular RKHS example is
the space $\Hilbert_{ \vect{C} }$, induced by the following Gaussian kernel
\begin{align}
  \kappa ( \vect{z}^{\prime}, \vect{z} ) & \coloneqq \mathscr{G}(\vect{z}^{\prime} \mid \vect{z}, \vect{C} ) \coloneqq
  e^{ - (\vect{z}^{\prime} - \vect{z})^{\intercal} \vect{C}^{-1} (\vect{z}^{\prime} - \vect{z}) } \,, \label{eq:gaussian}
\end{align}
for an arbitrarily fixed matrix $\vect{C}$ in the set of all $D_z \times D_z$ positive definite matrices
$\PS^{D_z}$. In this case, the feature mapping becomes $\varphi( \vect{z} ) = \mathscr{G}( \cdot \mid
\vect{z}, \vect{C} )$, and
\begin{subequations}\label{single.gaussian.rkhs}
  \begin{alignat}{2}
    \Hilbert_{ \vect{C} }^{\text{pre}}
    & \coloneqq \Bigl \{ \sum\nolimits_{ k = 1}^K && \xi_k \mathscr{G}
    (\cdot \mid \vect{z}_k, \vect{C}) \mathop{} \big | \mathop{} \Set{ \xi_k }_{k=1}^K \subset \Real, \notag \\
    &&& \Set{\vect{z}_k}_{k=1}^K \subset \Real^{D_z}, K \in \IntegerPP \Bigr \}\,, \label{pre.gaussian.rkhs} \\
    \Hilbert_{ \vect{C} } & \coloneqq \clo \Hilbert_{ \vect{C} }^{\text{pre}} && \,, \label{gaussian.rkhs}
  \end{alignat}
\end{subequations}
where \eqref{pre.gaussian.rkhs} is a pre-Hilbert space, while \eqref{gaussian.rkhs} completes that space
by the closure $\clo$, taken w.r.t.\ the topology arising from the inner product induced by
\eqref{eq:gaussian}.

\subsection{Bellman residuals}\label{subsec:approx-pe}

Since transition probabilities are typically unknown in
practice, computing conditional expectation in
\eqref{Bellman.standard.mu} is generally infeasible. To surmount this
limitation, RL studies usually consider a sampling dataset $\mathcal{D}_{\mu}[T] \coloneqq \{ (\,
\vect{s}_t, \vect{a}_t \coloneqq \mu( \vect{s}_t ), g_t \coloneqq g( \zeta(\vect{s}_t, \vect{a}_t)),
\vect{s}^{\prime}_t \, ) \}_{t=1}^{T} \subset \mathfrak{S} \times \mathfrak{A} \times \mathfrak{S} \times
\Real$, for a stationary $\mu(\cdot)$ and a usually large $T \in \IntegerPP$. Here, $\vect{s}^\prime_t$
denotes the successor state of $\vect{s}_t$, if action $\vect{a}_t$ is taken under policy
$\mu(\cdot)$. For convenience in notations, let $\vect{z}_t \coloneqq \zeta(\vect{s}_t, \vect{a}_t)$ and
$\vect{z}^\prime_t \coloneqq \zeta(\vect{s}^{\prime}_t, \mu(\vect{s}^{\prime}_t))$.

Lacking access to the conditional expectation, rather than relying on the ensemble-based formulation in
\eqref{Bellman.standard.mu}, the Bellman residual (BR) approach~\cite{williams94br, onlineBRloss:16,
  akiyama24nonparametric} computes the following \textit{empirical}\/ Bellman mapping:
\begin{align}
  \hat{\Bellman}_{\mu} Q \coloneqq \arg \min\nolimits_{ Q^{\prime} \in \mathcal{H} }
  & \overbrace{ \tfrac{1}{T}\sum\nolimits_{t=1}^{T} \big[ g_t + \alpha Q^{\prime} (\vect{z}_t') - Q^{\prime}
      (\vect{z}_t) \big]^2 }^{ \hat{\mathcal{L}}_{\mu}[T]( Q^{\prime} ) } \notag\\
  & + \rho \norm{ Q' - Q }^2_{\mathcal{H}} \,, \label{eq:BR.loss}
\end{align}
where $\rho$ is a regularization coefficient, introduced to stabilize solutions along the lines of the
celebrated Tikhonov regularization. The \textit{empirical}\/ loss $\hat{\mathcal{L}}_{\mu}[T](\cdot)$
serves as an estimate of the \textit{ensemble}\/ one:
\begin{align}
  \mathcal{L}_{\mu} (Q) \coloneqq \mathbb{E}\{\, [ g(\vect{z}) + \alpha Q(\vect{z}^{\prime}) - Q(\vect{z}) ]^2\, \}
  \,, \label{ensemble.loss}
\end{align}
where $\mathbb{E}\{ \cdot \}$ stands for expectation---a rigorous probabilistic discussion is deferred to
\cref{subsec:performance}.

RKHSs have been already widely employed not only in BR schemes but also in temporal-difference (TD)
learning, via gradient-descent-based solutions~\cite{sutton88td, onlineBRloss:16}, as well as in
least-squares (LS)TD~\cite{lagoudakis03lspi, xu07klspi, regularizedpi:16}. Owing to the representer
theorem~\cite{KimeldorfWahba1970, scholkopf2002learning}, all admissible Q-functions in an RKHS are
expressed as elements of the linear subspace of $\mathcal{H}$ spanned by $\Set{ \varphi(\vect{z}_t),
  \varphi(\vect{z}^{\prime}_t) }_{t=1}^T$. However, this property is precisely the source of the ``curse
of dimensionality'' in RKHS-based nonparametric methods~\cite{sutton88td, ormoneit02kernel,
  ormoneit:autom:02, bae:mlsp:11, lagoudakis03lspi, regularizedpi:16, onlineBRloss:16, xu07klspi, vu23rl,
  akiyama24proximal, akiyama24nonparametric}: increasing the representational capacity of $\Set{
  \varphi(\vect{z}_t), \varphi(\vect{z}^{\prime}_t) }_{t=1}^T$ requires a large number $T$ of data
samples, which enhances expressiveness but simultaneously impairs computational efficiency due to the
resulting increase in complexity. A comprehensive variational framework for nonparametric solutions via
RKHSs, encompassing (LS)TD and BR solutions, is presented in~\cite{akiyama24nonparametric}.

\section{The Class of GMM Q-Functions (GMM-QFs)}\label{sec:GMM-QFs}

\subsection{Definition of GMM-QFs}\label{sec:GMM-QFs.def}

Selecting an ``optimal'' kernel function in \eqref{single.gaussian.rkhs}---through the choice of the
matrix $\vect{C}$---to fit the data requires substantial manual effort and poses a significant challenge
in RKHS-based nonparametric methods~\cite{sutton88td, ormoneit02kernel, ormoneit:autom:02, bae:mlsp:11,
  lagoudakis03lspi, regularizedpi:16, onlineBRloss:16, xu07klspi, vu23rl, akiyama24proximal,
  akiyama24nonparametric}. This difficulty persists even when multiple kernels are
employed. Unfortunately, the need for careful kernel selection further exacerbates the notorious ``curse
of dimensionality'' in nonparametric designs, discussed at the end of \cref{subsec:approx-pe}.

To mitigate these shortcomings of nonparametric designs, this paper introduces a novel
\textit{parametric}\/ functional approximation model for Q-functions. Motivated by
GMMs~\cite{McLachlan:FiniteMixtures:00}, and for a user-defined $K\in \IntegerPP$, GMM-QFs are defined as
the following class of functionals:
\begin{align}
  \mathcal{Q}_K \coloneqq \Bigl\{ Q(\cdot) \coloneqq
  & \sum\nolimits_{k=1}^K \xi_k \mathscr{G}( \cdot \mid \vect{m}_k, \vect{C}_k) \mathop{} \big | \mathop{}
  \Set{\xi_k}_{k=1}^K \subset \Real\,, \notag \\
  & \Set{\vect{m}_k}_{k=1}^K \subset \Real^{D_z}\,, \Set{\vect{C}_k}_{k=1}^K \subset \PS^{D_z} \Bigr\}
  \,, \label{eq:gmm-q}
\end{align}
where the function $\mathscr{G}_k(\cdot) \coloneqq \mathscr{G}( \cdot \mid \vect{m}_k, \vect{C}_k)$ is
defined by \eqref{eq:gaussian}, with the symbol $\vect{m}_k$ used instead of $\vect{z}$ to underline the
fact that $\vect{m}_k$ serves as the ``mean'' of the Gaussian $\mathscr{G}_k$. Notice also that
$\mathcal{Q}_{K} \subset \mathcal{Q}_{K+1}$, $\forall K\in \IntegerPP$, so that $\cup_{ k = 1}^K
\mathcal{Q}_k = \mathcal{Q}_{K}$.

Without any loss of generality, no two $\mathscr{G}_k$s in the Q-function representation of
\eqref{eq:gmm-q} are considered to be identical, that is, $( \vect{m}_k, \vect{C}_k ) \neq (
\vect{m}_{k'}, \vect{C}_{k'} )$, $\forall (k, k') \in \overline{1,K}^2$. This is justified by the fact
that two or more $\mathscr{G}_k$s with common $( \vect{m}_k, \vect{C}_k )$ can be consolidated into a
single one with an aggregated mixing weight $\xi_k$. Actually, the following
\cref{thm:rkhs.linear.independence} has more to say about the Q-function representation in
\eqref{eq:gmm-q}.

\begin{theorem}\label{thm:rkhs.linear.independence}
  Consider $\{ \vect{C}_k \}_{k=1}^K \subset \PS^{D_z}$, with $\vect{C}_k \neq \vect{C}_{k'}$, $\forall
  k\neq k'$. Let $\sum\nolimits_{k=1}^{K} \beta_k Q_k = 0$, for some $\{ \beta_k \}_{k=1}^K \subset
  \Real$, where $Q_k \in \Hilbert_{\vect{C}_k}^{\text{pre}} \setminus \{ 0\}$, $\forall k \in
  \overline{1,K}$, and $\Hilbert_{\vect{C}_k}^{\text{pre}}$ is defined by
  \eqref{pre.gaussian.rkhs}. Then, $\beta_k = 0$, $\forall k \in \overline{1,K}$. Consequently,
  $\sum\nolimits_{k=1}^K \Hilbert_{\vect{C}_k}^{\text{pre}}$ turns out to be the direct sum
  $\bigoplus_{k=1}^K \Hilbert_{\vect{C}_k}^{\text{pre}}$.
\end{theorem}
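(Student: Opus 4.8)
The plan is to pass to the Fourier domain, where each $Q_k$ acquires a Gaussian ``envelope'' determined solely by $\vect{C}_k$, and then to isolate a single summand by exploiting the distinctness of the $\vect{C}_k$ along a carefully chosen direction. First I would write each $Q_k$ explicitly as a finite combination $Q_k = \sum_{j} \xi_{kj}\, \mathscr{G}(\cdot \mid \vect{z}_{kj}, \vect{C}_k)$ with pairwise-distinct means $\vect{z}_{kj}$ (after the consolidation already noted in the text) and with the $\xi_{kj}$ not all zero, since $Q_k \neq 0$. Each such $Q_k$ is a Schwartz function, so the multivariate Fourier transform is injective and the hypothesis $\sum_{k} \beta_k Q_k = 0$ transfers to $\sum_k \beta_k \widehat{Q_k} = 0$ pointwise. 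Using the standard Gaussian-integral identity, I would compute
\begin{equation*}
  \widehat{Q_k}(\vect{\omega}) = \pi^{D_z/2}\sqrt{\det \vect{C}_k}\;
  e^{-\frac14 \vect{\omega}^{\intercal} \vect{C}_k \vect{\omega}}\,
  \underbrace{\sum\nolimits_{j} \xi_{kj}\, e^{-i \vect{\omega}^{\intercal} \vect{z}_{kj}}}_{=:\,P_k(\vect{\omega})}\,,
\end{equation*}
so that $\widehat{Q_k}$ factors into a Gaussian envelope governed only by $\vect{C}_k$ and a bounded, nonzero trigonometric polynomial $P_k$.

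Next I would select a \emph{generic} unit direction $\vect{u}$, avoiding a set of measure zero. Because the $\vect{C}_k$ are pairwise distinct, each difference $\vect{C}_k - \vect{C}_{k'}$ is a nonzero symmetric matrix, so $\{\vect{u} : \vect{u}^{\intercal}(\vect{C}_k - \vect{C}_{k'})\vect{u} = 0\}$ is a proper algebraic variety; likewise $\{\vect{u} : \vect{u}^{\intercal}(\vect{z}_{kj} - \vect{z}_{kj'}) = 0\}$ is a hyperplane. Outside the (null) finite union of these sets the quadratic forms $\lambda_k := \vect{u}^{\intercal} \vect{C}_k \vect{u}$ are mutually distinct and, within each $k$, the scalars $\vect{u}^{\intercal} \vect{z}_{kj}$ are mutually distinct. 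Fixing such a $\vect{u}$ and restricting to $\vect{\omega} = t\vect{u}$, I would assume for contradiction that $S := \{k : \beta_k \neq 0\}$ is nonempty, let $k_0 \in S$ be the \emph{unique} minimizer of $\lambda_k$ over $S$, multiply the identity $\sum_{k \in S}\beta_k \widehat{Q_k}(t\vect{u}) = 0$ by $e^{\frac14 t^2 \lambda_{k_0}}$, and let $t \to \infty$. Every term with $k \in S \setminus \{k_0\}$ carries a factor $e^{-\frac14 t^2(\lambda_k - \lambda_{k_0})} \to 0$ against the bounded $P_k$, so it vanishes, leaving $\beta_{k_0} P_{k_0}(t\vect{u}) \to 0$.

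The crux, and the step I expect to be the main obstacle, is to conclude from $\beta_{k_0}P_{k_0}(t\vect{u}) \to 0$ that $\beta_{k_0} = 0$, because $P_{k_0}(t\vect{u})$ is oscillatory and need not possess a pointwise limit. Here I would invoke the fact that $P_{k_0}(t\vect{u}) = \sum_j \xi_{k_0 j}\, e^{-i t\, \vect{u}^{\intercal}\vect{z}_{k_0 j}}$ is a finite sum of complex exponentials with \emph{distinct} real frequencies (guaranteed by the generic choice of $\vect{u}$) and not-all-zero coefficients; by the Parseval/Bohr mean-value identity its time-averaged squared modulus equals $\sum_j |\xi_{k_0 j}|^2 > 0$, so $P_{k_0}(t\vect{u})$ cannot converge to $0$ as $t \to \infty$. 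Hence $\beta_{k_0} = 0$, contradicting $k_0 \in S$; therefore $S = \emptyset$ and $\beta_k = 0$ for every $k \in \overline{1,K}$.

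Finally, the direct-sum claim follows by the very same dominant-term argument: given arbitrary $R_k \in \Hilbert_{\vect{C}_k}^{\text{pre}}$ with $\sum_k R_k = 0$, each $R_k$ carries its own envelope $e^{-\frac14 \vect{\omega}^{\intercal}\vect{C}_k\vect{\omega}}$ and trigonometric factor, so isolating the slowest-decaying index $k_0$ forces the corresponding factor to vanish identically in the mean-square sense, i.e.\ $R_{k_0} = 0$, and iterating yields $R_k = 0$ for all $k$. This is precisely the statement that $\sum_{k=1}^K \Hilbert_{\vect{C}_k}^{\text{pre}} = \bigoplus_{k=1}^K \Hilbert_{\vect{C}_k}^{\text{pre}}$.
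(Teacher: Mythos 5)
Your proof is correct, but it takes a genuinely different route from the paper's. The paper stays entirely in the spatial domain: it expands each $Q_k$ into exponentials of quadratic forms in $\vect{C}_k^{-1}$, picks a generic direction $\vect{z}_{\sharp}$ avoiding the quadrics $\{\vect{z} \given \vect{z}^{\intercal}(\vect{C}_k^{-1}-\vect{C}_{k'}^{-1})\vect{z}=0\}$ and the hyperplanes separating the means, and then runs a \emph{two-level} dominance argument along the ray $t\vect{z}_{\sharp}$ as $t\to+\infty$: first the smallest quadratic form $\norm{\vect{z}_{\sharp}}^2_{\vect{C}_k^{-1}}$ singles out one component $k$, and then, within that component, the largest linear term $2t\innerp{\vect{z}_{\sharp}}{\vect{m}_{kl}}_{\vect{C}_k^{-1}}$ singles out one mean, forcing the surviving coefficient $\beta_1\alpha_{1l_{\sharp}}$ to vanish; the argument is then iterated to peel off the remaining $\beta_k$. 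You instead pass to the Fourier domain, where the covariance becomes a decaying envelope $e^{-\frac14\vectgr{\omega}^{\intercal}\vect{C}_k\vectgr{\omega}}$ and the means become frequencies of a trigonometric polynomial $P_k$; your dominance step across $k$ is the same in spirit (smallest $\vect{u}^{\intercal}\vect{C}_k\vect{u}$ along a generic ray, now involving $\vect{C}_k$ rather than $\vect{C}_k^{-1}$), but the within-component step cannot be a dominance argument, because the phases oscillate rather than decay, and you correctly replace it by the Bohr/Weyl mean-value identity: a trigonometric polynomial with distinct real frequencies and nonzero coefficients has positive mean-square time average, hence cannot tend to zero, so $\beta_{k_0}=0$ follows. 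What your approach buys is a clean decoupling of the roles of $\vect{C}_k$ (envelope) and of the means (frequencies), plus a one-shot contradiction via the support set $S$ instead of iterated peeling; what it costs is reliance on two nontrivial external facts (the Gaussian Fourier transform with full covariance and the mean-value property of almost-periodic functions), whereas the paper's argument is self-contained and uses only elementary asymptotics of real exponentials. Both proofs hinge on the same genericity device---a finite union of proper quadric hypersurfaces and hyperplanes cannot cover $\Real^{D_z}$---and both correctly reduce the direct-sum claim to the linear-independence statement, so your argument is a valid alternative.
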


\begin{proof}
  See \cref{app:direct.sum.proof}.
\end{proof}

\cref{thm:rkhs.linear.independence} suggests that any Q-function $Q$ in $\sum\nolimits_{k=1}^K
\Hilbert_{\vect{C}_k}^{\text{pre}}$, and certainly in its subset $\mathcal{Q}_K$, has a \textit{unique}\/
representation $Q = \sum\nolimits_{k=1}^{K} \xi_k Q_k$, where $\xi_k\in \Real$ and each $Q_k\in
\Hilbert_{\vect{C}_k}^{\text{pre}}$, $\forall k\in \overline{1, K}$.

The mixing weights $\Set{ \xi_k }_{k=1}^K$ in \eqref{eq:gmm-q} are not subject to any constraints, unlike
in distributional RL, where GMMs model PDFs and the mixing weights must be chosen so that $\int
Q(\vect{z})\, d\vect{z} = 1$~\cite{sato99em, mannor05rl,  agostini17gmmrl,
  choi19distRL}. Moreover, in contrast to nonparametric approaches~\cite{sutton88td, ormoneit02kernel,
  ormoneit:autom:02, bae:mlsp:11, lagoudakis03lspi, regularizedpi:16, onlineBRloss:16, xu07klspi, vu23rl,
  akiyama24proximal, akiyama24nonparametric}, where $\Set{ \vect{m}_k, \vect{C}_k }_{k=1}^K$ are
determined by the observed data and only $\Set{ \xi_k }_{k=1}^K$ are treated as learnable parameters,
GMM-QFs consider $\Set{ \vect{m}_k, \vect{C}_k }_{k=1}^K$ as learnable as well. The functional
representation in \eqref{eq:gmm-q} is also referred to in the literature as radial-basis-function
(RBF)~\cite{park93rbf} or elliptic-basis-function (EBF) networks~\cite{chen96ebf}.

Although GMM-QFs share a similar functional form with classical
  RBF-based KBRL methods~\cite{xu07klspi, vu23rl,
  ormoneit:autom:02, lagoudakis03lspi, bae:mlsp:11}, 
    the latter typically treat only the mixture
  weights as learnable, fixing the means and covariance matrices of the Gaussian kernels, as is
  characteristic of nonparametric methods where model size scales with the
  data~\cite{Gyorfi:NonParamReg:02}. By contrast, GMM-QFs learn \textit{all}\/ parameters for a fully
  adaptive parametrization, enabling greater representational capacity than typical RBF approximators
  while maintaining a fixed model size independent of the dataset. This learning is achieved via
  Riemannian optimization, as detailed in~\cref{sec:rpi}. 

Motivated by \eqref{eq:gmm-q}, define the following parameter space:
\begin{alignat}{3}
  \mathfrak{M}_K
  & {} \coloneqq {}
  && \Bigl\{ \vectgr{\Omega}
  && \coloneqq ( \xi_1, \ldots, \xi_K, \vect{m}_1, \ldots, \vect{m}_K, \vect{C}_1, \ldots, \vect{C}_K)  \mathop{} \big|
  \mathop{} \notag\\
  &&&&& \xi_k \in \Real, \vect{m}_k \in \Real^{D_z}, \vect{C}_k\in \PS^{D_z}, \forall k\in \overline{1,K} \Bigr\} \notag
  \\
  & = && \Real^K && \times (\Real^{D_z})^K \times ( \mathbb{S}_{++}^{D_z} )^K \,. \label{param.space}
\end{alignat}
The mapping
\begin{align}
  \mathscr{P}(\cdot) \colon \mathfrak{M}_K \to \mathcal{Q}_K \colon \vectgr{\Omega} \mapsto
  \sum\nolimits_{k=1}^K \xi_k \mathscr{G}( \cdot \mid \vect{m}_k, \vect{C}_k)
  \,, \label{param.space.to.Q_K}
\end{align}
is clearly surjective. For any Q-function $Q \in \mathcal{Q}_K$, the pre-image $\mathscr{P}^{-1}(Q)$ is
non-empty because $\mathscr{P} (\cdot)$ is surjective, though it may not be a singleton. For example,
when $K = 2$, parameter tuples $\vectgr{\Omega} \coloneqq (\xi_1, \xi_2, \vect{m}, \vect{m}, \vect{C},
\vect{C})$ and $\vectgr{\Omega}^{\prime} \coloneqq (\xi_1 + \xi_2, 0, \vect{m}, \vect{m}, \vect{C},
\vect{C})$ are both mapped to the same Q-function under $\mathscr{P}(\cdot)$. Therefore, if
$\mathscr{P}^{-1}(Q)$ is regarded as a single ``element''---more precisely, by introducing an equivalence
relation such that (s.t.) $\mathscr{P}^{-1}(Q)$ forms an equivalence class in the quotient space
$\mathfrak{M}_K$---then mapping $\mathscr{P}(\cdot)$, which maps equivalence classes to Q-functions, can
be also viewed as one-to-one, and hence bijective.

Interestingly, being the Cartesian product of well-known Riemannian manifolds in \eqref{param.space},
$\mathfrak{M}_K$ is itself Riemannian~\cite{RobbinSalamon:22}. Exploitation of the geometric structure of
Riemannian manifold of parameters constitutes a key objective of this work and will be discussed in depth
in~\cref{sec:rpi}.

{\color{black}
  \begin{remark}[Practical implementation]\label{remark:variant} In learning tasks with discrete and
    finite-action spaces, a natural and practical appealing variant of~\eqref{eq:gmm-q} assigns
    action-dependent mixture weights $\Set{\vectgr{\xi}^{(\vect{a})}}_{\vect{a} \in \mathfrak{A}}$ while
    sharing the means and covariance matrices $\Set{(\vect{m}_k, \vect{C}_k)}_{k=1}^{K}$ across all
    actions, yielding:
    \begin{align*}
      Q(\vect{s}, \vect{a}) \coloneqq \sum \nolimits_{k=1}^{K} \xi_k^{(\vect{a})} \mathscr{G}(\vect{s}
      \given \vect{m}_k, \vect{C}_k)\,, \quad \forall (\vect{s}, \vect{a})\,,
    \end{align*}
    as an effort to ``reduce the correlation'' between action-value estimates $Q(\zeta(\vect{s},
    \vect{a}))$, thereby allowing better discrimination among actions and improving the decision-making
    capability of agents. This variant is inspired by the output-layer of DQNs~\cite{mnih13dqn}, adopted
    also in~\cite{lagoudakis03lspi}. Owing to this new setting, the parameter space now becomes
    $\mathfrak{M}_K \coloneqq (\Real^{K})^{ \lvert \mathfrak{A} \rvert} \times (\Real^{D_s})^{K} \times
    (\PS^{D_s})^K$, with $D_s$ being the dimension of state $\vect{s}$, which retains the Riemannian
    product manifold structure, as the components over means and covariance matrices remain unchanged.
    Consequently, theoretical results, including~\cref{algo:PI,algo:armijo} carry over directly. For
    clarity, the theoretical development hereafter follows the original representation
    in~\eqref{eq:gmm-q}, while the action-dependent variant is reserved for the numerical tests
    in~\cref{sec:tests}.
  \end{remark}
}

\subsection{Representational capacity of GMM-QFs}\label{subsec:GMM-QFs-express}

The following assumption is commonly adopted in theoretical studies of RL, e.g., \cite{xu07klspi}, and is
motivated by practical hardware constraints, as sensors in real-world devices can only measure physical
quantities within finite operating ranges.

\begin{assumption}\label{ass:compact.Z}
    The state-action space $\mathfrak{Z} \subset
    \Real^{D_z}$ is compact.
\end{assumption}

The following theorem establishes that the proposed GMM-QFs in \eqref{eq:gmm-q} possess sufficient
representational capacity to approximate broad families of functions.

\begin{theorem}[Universal-approximation properties of GMM-QFs]\label{thm:universal.approx}\mbox{}
  \begin{thmlist}

  \item\label{thm:universal.approx.C} Under~\cref{ass:compact.Z}, the set $\cup_{K \in \IntegerPP}
    \mathcal{Q}_K$ is dense in the Banach space $C( \mathfrak{Z} )$ of all real-valued continuous
    functions defined on $\mathfrak{Z}$, equipped with the sup-norm $\norm{}_{\infty}$. In other words,
    for any $Q\in C( \mathfrak{Z} )$ and for any $\varepsilon \in \RealPP$, there exist a $K \in
    \IntegerPP$ and a $\tilde{Q} \in \mathcal{Q}_K$ s.t.\
    \begin{align}
      \norm{ Q - \tilde{Q} }_{\infty} \coloneqq \sup\nolimits_{\vect{z} \in \mathfrak{Z}} \lvert Q(\vect{z}) -
      \tilde{Q}(\vect{z}) \rvert \leq \varepsilon \,. \label{eq:approx.Q.dense.CZ}
    \end{align}

  \item\label{thm:universal.approx.L2.Borel} Under~\cref{ass:compact.Z}, $\cup_{ K\in \IntegerPP }
    \mathcal{Q}_K$ is dense in the Hilbert space $L_2( \mathfrak{P} ) \coloneqq L_2( \mathfrak{Z},
    \mathfrak{B}( \mathfrak{Z} ), \mathfrak{P} )$ of all real-valued square-integrable functionals on
    $\mathfrak{Z}$ w.r.t.\ the probability space $( \mathfrak{Z}, \mathfrak{B}( \mathfrak{Z} ),
    \mathfrak{P} )$, where $\mathfrak{B}( \mathfrak{Z} )$ is a Borel $\sigma$-algebra on $\mathfrak{Z}$
    and $\mathfrak{P}$ is a probability measure on $\mathfrak{B}( \mathfrak{Z}
    )$~\cite{Williams:ProMargin:91, Ash:ProMeasure:00}. In other words, for any $Q \in L_2( \mathfrak{P}
    )$ and for any $\varepsilon \in \RealPP$, there exist a $K \in \IntegerPP$ and a $\tilde{Q} \in
    \mathcal{Q}_K$ s.t.\
    \begin{align}
      \norm{ Q - \tilde{Q} }_{L_2( \mathfrak{P} ) } & \coloneqq \left(\, \int_{ \mathfrak{Z} } \lvert Q(\vect{z}) -
      \tilde{Q}(\vect{z}) \rvert^2 \, \mathfrak{P} (d\vect{z})\, \right)^{\frac{1}{2}} \notag \\
      & \coloneqq \bigl(\, \mathbb{E} \{ \lvert Q ( \cdot ) - \tilde{Q} ( \cdot ) \rvert^2 \}\, \bigr)^{\frac{1}{2}}
      \leq \varepsilon \,, \label{eq:approx.Q.dense.L2.Borel}
    \end{align}
    where $\mathbb{E}\{ \cdot\}$ denotes expectation, defined as the integral w.r.t.\ $\mathfrak{P}$.

  \item\label{thm:universal.approx.L2.Lebesgue} In the case where $\mathfrak{Z}$ becomes the non-compact
    $\Real^{D_z}$, $\cup_{ K\in \IntegerPP } \mathcal{Q}_K$ is dense in the Hilbert space $L_2 (
    \Real^{D_z} )$ of all real-valued square-integrable functionals on $\Real^{D_z}$ w.r.t.\ the
    classical Lebesgue measure~\cite{Williams:ProMargin:91, Ash:ProMeasure:00}. In
    other words, for any $Q \in L_2 ( \Real^{D_z} )$ and for any $\varepsilon \in \RealPP$, there exist a
    $K \in \IntegerPP$ and a $\tilde{Q} \in \mathcal{Q}_K$ s.t.\
    \begin{align}
      \norm{ Q - \tilde{Q} }_{L_2(\Real^{D_z}) } & \coloneqq \left( \int_{\Real^{D_z}} \lvert Q(\vect{z}) -
      \tilde{Q}(\vect{z}) \rvert^2 \, d\vect{z} \right)^{ \frac{1}{2} } \leq \varepsilon
      \,. \label{eq:approx.Q.dense.L2.Lebesgue}
    \end{align}

  \end{thmlist}
\end{theorem}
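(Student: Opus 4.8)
The plan is to prove the three claims in order, bootstrapping the measure-theoretic statements from the continuous one. For \cref{thm:universal.approx.C} I would invoke the Stone--Weierstrass theorem. The first task is to check that $\mathcal{A} \coloneqq \cup_{K \in \IntegerPP} \mathcal{Q}_K$ is a subalgebra of $C( \mathfrak{Z} )$: it is a linear subspace directly from \eqref{eq:gmm-q} together with the nesting $\mathcal{Q}_K \subset \mathcal{Q}_{K+1}$, so only closure under pointwise multiplication needs verification. The single computation worth carrying out is the product of two Gaussians $\mathscr{G}( \cdot \mid \vect{m}_1, \vect{C}_1 ) \, \mathscr{G}( \cdot \mid \vect{m}_2, \vect{C}_2 )$: its exponent is quadratic in the argument, and completing the square rewrites it as a strictly positive constant times a single Gaussian $\mathscr{G}( \cdot \mid \vect{m}_3, \vect{C}_3 )$ whose precision obeys $\vect{C}_3^{-1} = \vect{C}_1^{-1} + \vect{C}_2^{-1} \in \PS^{D_z}$, so that $\vect{C}_3 \in \PS^{D_z}$. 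Hence products of basis Gaussians remain in $\mathcal{Q}_1$, and $\mathcal{A}$ is closed under multiplication. Since each Gaussian is strictly positive, $\mathcal{A}$ vanishes at no point; and for distinct $\vect{z}_1, \vect{z}_2$ the Gaussian centered at $\vect{m} = \vect{z}_1$ with $\vect{C} = \vect{I}$ equals $1$ at $\vect{z}_1$ and is strictly smaller at $\vect{z}_2$, so $\mathcal{A}$ separates points. The Stone--Weierstrass theorem (in the form for subalgebras that separate points and vanish nowhere) then delivers density of $\mathcal{A}$ in $C( \mathfrak{Z} )$ under $\norm{}_{\infty}$, i.e.\ \eqref{eq:approx.Q.dense.CZ}.

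For \cref{thm:universal.approx.L2.Borel}, I would argue by transitivity of density. Because $\mathfrak{P}$ is a finite Borel measure on the compact $\mathfrak{Z}$, the continuous functions $C( \mathfrak{Z} )$ are dense in $L_2( \mathfrak{P} )$. Furthermore, $\mathfrak{P}( \mathfrak{Z} ) = 1$ gives $\norm{ f }_{L_2( \mathfrak{P} )} \leq \norm{ f }_{\infty}$ for every continuous $f$, so any sup-norm approximation produced by \cref{thm:universal.approx.C} is simultaneously an $L_2( \mathfrak{P} )$ approximation. Chaining the two density statements yields \eqref{eq:approx.Q.dense.L2.Borel}.

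The hard part is \cref{thm:universal.approx.L2.Lebesgue}, where the domain is the non-compact $\Real^{D_z}$: Stone--Weierstrass is unavailable, and one cannot simply truncate the compact-case approximation, since the Gaussian tails leave an uncontrolled $L_2$-mass outside any ball. Instead I would use a completeness-of-translates (Wiener-type) argument. Fix any $\vect{C}_0 \in \PS^{D_z}$, write $g \coloneqq \mathscr{G}( \cdot \mid \vect{0}, \vect{C}_0 )$, and note that every translate $\mathscr{G}( \cdot \mid \vect{m}, \vect{C}_0 ) = g( \cdot - \vect{m} )$ lies in $\mathcal{Q}_1 \subset \cup_{K \in \IntegerPP} \mathcal{Q}_K$. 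The generating Gaussian $g$ belongs to $L_1 \cap L_2$ and its Fourier transform is again a Gaussian, hence nowhere vanishing. If some $f \in L_2( \Real^{D_z} )$ were orthogonal to every translate, then, $g$ being even, the bounded continuous convolution $f \star g$ would vanish identically; transforming gives $\hat{f} \, \hat{g} = 0$ almost everywhere, and the nonvanishing of $\hat{g}$ forces $\hat{f} = 0$, i.e.\ $f = 0$. Thus the closed linear span of the translates of $g$ is all of $L_2( \Real^{D_z} )$, and since these translates already belong to $\cup_{K \in \IntegerPP} \mathcal{Q}_K$, \eqref{eq:approx.Q.dense.L2.Lebesgue} follows. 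The main obstacle is exactly this non-compact case: the argument must pass through the Fourier transform rather than Stone--Weierstrass, and it hinges on the Gaussian's Fourier transform being nowhere zero.
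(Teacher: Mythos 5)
Your proofs of \cref{thm:universal.approx.C} and \cref{thm:universal.approx.L2.Borel} follow essentially the paper's own route: the same product-of-Gaussians computation (with precision $\vect{C}_3^{-1} = \vect{C}_1^{-1} + \vect{C}_2^{-1}$) establishing the subalgebra property, the same point-separation check, a unit-free version of Stone--Weierstrass, and then, for the $L_2(\mathfrak{P})$ claim, the same chaining of density of $C(\mathfrak{Z})$ in $L_2(\mathfrak{P})$ with the bound $\norm{\cdot}_{L_2(\mathfrak{P})} \leq \norm{\cdot}_{\infty}$ valid for a probability measure. (The paper additionally verifies that the constant $1$ lies in the sup-norm closure of the algebra, but, as it notes itself, the cited Stone--Weierstrass variant does not require a unit, so your ``vanishes nowhere'' formulation is equivalent in substance.)

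For \cref{thm:universal.approx.L2.Lebesgue} you take a genuinely different, and in fact leaner, route. The paper argues by contradiction through Hahn--Banach and the Riesz representation theorem to produce a nonzero $R \in L_2(\Real^{D_z})$ annihilating the subspace, then integrates the resulting orthogonality relation against arbitrary Schwartz functions $W$, and uses Fubini's theorem, a change of variables, isotropy of the Gaussian's Fourier transform, unitarity of $\mathcal{F}$ on $L_2$, and density of the Schwartz class to conclude $R = 0$ a.e. Your argument short-circuits most of this machinery: in a Hilbert space, density of a subspace is equivalent to triviality of its orthogonal complement (no Hahn--Banach needed), and you only need the translates $\mathscr{G}(\cdot \mid \vect{m}, \vect{C}_0)$ of a \emph{single} fixed Gaussian, which already lie in $\mathcal{Q}_1$. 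Orthogonality of $f$ to all translates of the even function $g$ says exactly that $f \star g \equiv 0$; the $L_1$--$L_2$ convolution theorem gives $\hat{f}\,\hat{g} = 0$ a.e., and the nowhere-vanishing of the Gaussian's Fourier transform forces $f = 0$. Both proofs ultimately hinge on this same nonvanishing property, but your Wiener-type completeness-of-translates argument is shorter and uses only standard $L_2$ Fourier facts, whereas the paper's version (an adaptation of the proof in \cite{chen96ebf}) carries the general covariance $\vect{C}$ through the computation explicitly---emphasizing the extension to non-diagonal covariances---at the cost of extra technical overhead. Your proposal is correct as it stands.
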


\begin{proof}
  See \cref{app:universal.approx.proof}.
\end{proof}

A similar result to \Cref{thm:universal.approx.L2.Lebesgue} is presented in~\cite{park93rbf}, where it is
shown that RBF networks with \textit{diagonal}\/ covariance matrices are dense in the spaces $L_p(
\Real^{D_z} )$, $p \in \IntegerPP$. \Cref{thm:universal.approx.L2.Lebesgue} extends the result
of~\cite{park93rbf} to general non-diagonal covariance matrices for the case of $L_2( \Real^{D_z} )$.

Motivated by \cref{thm:universal.approx.C}, the following proposition shows that it is sufficient to
search within the class of GMM-QFs, $\cup_{K \in \IntegerPP} \mathcal{Q}_K$, for a minimizer of the
empirical loss $\hat{\mathcal{L}}_{\mu}[T](\cdot)$ defined in~\eqref{eq:BR.loss}, rather than over the
broader class $C(\mathfrak{Z})$.

\begin{proposition}\label{prop:inf.Q_K} Under \cref{ass:compact.Z},
  \begin{align*}
    \inf_{ Q \in C( \mathfrak{Z} ) } \hat{\mathcal{L}}_{\mu}[T] ( Q ) = \inf_{ Q \in \cup_{K\in \IntegerPP} \mathcal{Q}_K }
    \hat{\mathcal{L}}_{\mu}[T] ( Q ) \,.
  \end{align*}
\end{proposition}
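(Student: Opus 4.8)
The plan is to establish the equality by a two-sided inequality, with the nontrivial direction supplied by the density result in \cref{thm:universal.approx.C}. First I would note that every $Q \in \cup_{K \in \IntegerPP} \mathcal{Q}_K$ is a finite sum of Gaussians and is therefore continuous on $\mathfrak{Z}$; under \cref{ass:compact.Z} this yields the inclusion $\cup_{K \in \IntegerPP} \mathcal{Q}_K \subseteq C(\mathfrak{Z})$. Since the infimum of a functional over a subset is no smaller than its infimum over the enclosing set, this immediately gives $\inf_{Q \in C(\mathfrak{Z})} \hat{\mathcal{L}}_\mu[T](Q) \leq \inf_{Q \in \cup_{K \in \IntegerPP} \mathcal{Q}_K} \hat{\mathcal{L}}_\mu[T](Q)$.

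The reverse inequality is where the work lies, and it hinges on the observation that $\hat{\mathcal{L}}_\mu[T]$ depends on its argument only through the finitely many point evaluations $\Set{ Q(\vect{z}_t), Q(\vect{z}^{\prime}_t) }_{t=1}^T$. Consequently the functional is continuous with respect to the sup-norm: if $\tilde{Q} \to Q$ uniformly on $\mathfrak{Z}$, then each evaluation converges, and so does the finite average of squared residuals. To make this quantitative I would fix an arbitrary $Q \in C(\mathfrak{Z})$ and, given $\delta \in \RealPP$, invoke \cref{thm:universal.approx.C} to pick $\tilde{Q} \in \cup_{K \in \IntegerPP} \mathcal{Q}_K$ with $\norm{ Q - \tilde{Q} }_{\infty} \leq \delta$. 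Writing $r_t(Q) \coloneqq g_t + \alpha Q(\vect{z}^{\prime}_t) - Q(\vect{z}_t)$, the uniform bound gives $\lvert r_t(\tilde{Q}) - r_t(Q) \rvert \leq (1 + \alpha)\delta$ for every $t$, and then the elementary identity $r_t(\tilde{Q})^2 - r_t(Q)^2 = (r_t(\tilde{Q}) - r_t(Q))(r_t(\tilde{Q}) + r_t(Q))$ shows that $\hat{\mathcal{L}}_\mu[T](\tilde{Q})$ can be driven arbitrarily close to $\hat{\mathcal{L}}_\mu[T](Q)$ as $\delta \to 0$.

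Combining the two pieces, for every $Q \in C(\mathfrak{Z})$ and every $\varepsilon \in \RealPP$ there is a $\tilde{Q} \in \cup_{K \in \IntegerPP} \mathcal{Q}_K$ with $\hat{\mathcal{L}}_\mu[T](\tilde{Q}) \leq \hat{\mathcal{L}}_\mu[T](Q) + \varepsilon$, whence $\inf_{Q \in \cup_{K \in \IntegerPP} \mathcal{Q}_K} \hat{\mathcal{L}}_\mu[T](Q) \leq \hat{\mathcal{L}}_\mu[T](Q) + \varepsilon$; taking the infimum over $Q \in C(\mathfrak{Z})$ and then letting $\varepsilon \to 0$ closes the argument. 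The only point that requires a little care---and the closest thing to an obstacle---is controlling the cross term $r_t(\tilde{Q}) + r_t(Q)$ uniformly in the approximation: this is handled by noting that $Q$, being continuous on the compact set $\mathfrak{Z}$, is bounded, so each $r_t(Q)$ is finite and $r_t(\tilde{Q})$ stays within $(1+\alpha)\delta$ of it, keeping the product bounded. Everything else is a direct consequence of the already-established universal-approximation property.
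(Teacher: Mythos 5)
Your proposal is correct and takes essentially the same approach as the paper: one inequality follows from the inclusion $\cup_{K\in \IntegerPP}\mathcal{Q}_K \subset C(\mathfrak{Z})$, and the reverse one from approximating a continuous $Q$ in the sup-norm via \cref{thm:universal.approx.C} and exploiting that $\hat{\mathcal{L}}_{\mu}[T]$ depends continuously on the finitely many evaluations $\Set{Q(\vect{z}_t), Q(\vect{z}_t^{\prime})}_{t=1}^T$. The only (immaterial) differences are that the paper expresses this continuity abstractly through the scalar maps $(q_t^{\prime}, q_t) \mapsto (g_t + \alpha q_t^{\prime} - q_t)^2$ and a minimum of the resulting $\delta_t$'s, while you make it quantitative via a difference-of-squares bound, and that the paper approximates a near-minimizer over $C(\mathfrak{Z})$ whereas you approximate an arbitrary $Q$ and then take the infimum.
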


\begin{proof}
  See \cref{app:prop:inf.Q_K}.
\end{proof}

Guided by \cref{thm:universal.approx,prop:inf.Q_K}, and the discussion on \eqref{param.space.to.Q_K}, the
subsequent search for suitable GMM-QFs will be conducted over the Riemannian parameter space
$\mathfrak{M}_K$ defined in \eqref{param.space}, for a sufficiently large value of $K$.

\section{Policy Iteration by Riemannian Optimization}\label{sec:rpi}

By \eqref{param.space} and \eqref{param.space.to.Q_K}, the empirical loss
$\hat{\mathcal{L}}_{\mu}[T](\cdot)$ of \eqref{eq:BR.loss} is minimized over the parameter space
$\mathfrak{M}_K$ for a fixed user-defined $K$---a Riemannian manifold according to the discussion at the
end of \cref{sec:GMM-QFs.def}---as follows:
\begin{align}
  \min_{\vectgr{\Omega} \in \mathfrak{M}_K} \bigg(
  \hat{\mathcal{L}}_{\mu}[T]( \vectgr{\Omega} ) \coloneqq \tfrac{1}{T} \sum_{t=1}^{T} \Bigl[
    & g_t + \alpha \sum_{k=1}^{K}\xi_k \mathscr{G}_k(\vect{z}^\prime_t) \notag\\
    & - \sum_{k=1}^{K}\xi_k \mathscr{G}_k(\vect{z}_t) \Bigr]^2 \bigg) \,, \label{parameters.task}
\end{align}
where $\mathscr{G}_k(\cdot) \coloneqq \mathscr{G}(\cdot \given \vect{m}_k, \vect{C}_k)$ is defined in
\eqref{eq:gaussian}, while $\vect{z}_t \coloneqq \zeta(\vect{s}_t, \vect{a}_t)$ and $\vect{z}_t^{\prime}
\coloneqq \zeta( \vect{s}_t^{\prime}, \mu(\vect{s}_t^{\prime}) )$. Task~\eqref{parameters.task},
addressed through \cref{algo:armijo}, is integrated into the policy-evaluation stage
(line~\ref{algo:update.omega}) of the PI \cref{algo:PI}---apparently for the first time in the RL
literature. Because of the discussion surrounding \eqref{param.space.to.Q_K}, variables $\vectgr{\Omega}$
and $Q$ will be used interchangeably as arguments of $\hat{\mathcal{L}}_{\mu}[T](\cdot)$ in the
sequel.

The remainder of this section details the Riemannian optimization task~\eqref{parameters.task}, beginning
with a brief review of fundamental concepts from Riemannian geometry.

\subsection{Preliminaries on Riemannian manifolds}\label{subsec:manifold}

Loosely speaking, a \textit{(smooth) manifold}\/ $\mathfrak{M}$ is a topological space such that any
point $\vectgr{\Omega} \in \mathfrak{M}$ has a neighborhood that is homeomorphic to an open subset of a
Euclidean space~\cite[\S2.8.1]{RobbinSalamon:22}. A tangent vector of $\mathfrak{M}$ at $\vectgr{\Omega}$
is the ``velocity'' $\dot{\gamma} (0)$ of a curve $\gamma\colon \Real \to \mathfrak{M} \colon t \mapsto
\gamma(t)$, where $\gamma(0) = \vectgr{\Omega}$~\cite[Def.~2.8.17]{RobbinSalamon:22}. The \textit{tangent
  space}\/ $T_{\vectgr{\Omega}} \mathfrak{M}$ of $\mathfrak{M}$ at $\vectgr{\Omega}$ is the linear vector
space of all tangent vectors of $\mathfrak{M}$ at
$\vectgr{\Omega}$~\cite[Def.~2.8.14]{RobbinSalamon:22}. The \textit{retraction}\/ mapping $R$ on
$\mathfrak{M}$ is a smooth mapping from the tangent bundle $T\mathfrak{M}$---the collection of all
tangent spaces of $\mathfrak{M}$---to $\mathfrak{M}$ itself s.t.\ specific properties hold
true~\cite[Def.~4.1.1]{Absil:OptimManifolds:08}. Its restriction to $T_{ \vectgr{\Omega} }
\mathfrak{M}$ is denoted by $R_{ \vectgr{\Omega} }$.

A Riemannian manifold $\mathfrak{M}$ is a smooth manifold equipped with a Riemannian metric, that is, a
collection of inner products $\innerp{ \cdot }{ \cdot }_{ \vectgr{\Omega} }$, one for every
$\vectgr{\Omega} \in \mathfrak{M}$ and defined on pairs of tangent vectors in $T_{\vectgr{\Omega}}
\mathfrak{M}$, s.t.\ it induces a specific smooth map for every pair of vector fields on
$\mathfrak{M}$~\cite[Def.~3.7.1]{RobbinSalamon:22}. The metric naturally defines the norm $\norm{\cdot}_{
  \vectgr{\Omega} } \coloneqq \innerp{ \cdot }{ \cdot }_{ \vectgr{\Omega} }^{1/2}$. For a Riemannian
manifold, its most celebrated retraction is its \textit{exponential}\/ map~\cite{Absil:OptimManifolds:08,
  RobbinSalamon:22}. Further details on Riemannian geometry fall outside the scope of this study; the
interested reader is referred to~\cite{Absil:OptimManifolds:08, RobbinSalamon:22}.

\begin{example}[\cite{Absil:OptimManifolds:08}]\label{ex:euclid.manifold}
  The Euclidean space $\Real^K$ is a Riemannian manifold with $T_{ \vectgr{\xi} } \Real^K = \Real^K$,
  $\forall \vectgr{\xi} \in \Real^K$, and with Riemannian metric $\innerp{ \vectgr{\theta}_1 }{
    \vectgr{\theta}_2 }_{ \vectgr{\xi} } \coloneqq \vectgr{\theta}_1^{\intercal} \vectgr{\theta}_2$,
  $\forall (\vectgr{\theta}_1, \vectgr{\theta}_2, \vectgr{\xi} ) \in (\Real^K)^3$. The mapping
  $R_{\vectgr{\xi}} (\vectgr{\theta}) \coloneqq \vectgr{\xi} + \vectgr{\theta}$, $\forall
  (\vectgr{\theta}, \vectgr{\xi} ) \in (\Real^K)^2$, serves as a retraction on $\Real^K$.
\end{example}

\begin{example}[{\cite[\S6.5.3]{RobbinSalamon:22}}]\label{ex:PS.manifold}
  The set of all positive-definite matrices $\PS^{D_z}$ is a Riemannian manifold, with $T_{ \vect{C} }
  \PS^{D_z} = \{ \vectgr{\Gamma} \in \Real^{D_z \times D_z} \given \vectgr{\Gamma}^{\intercal} =
  \vectgr{\Gamma} \} \eqqcolon \mathbb{S}^{ D_z }$, $\forall \vect{C} \in \PS^{D_z}$. The most popular
  metric is the affine invariant (AffI) one~\cite[(6.5.11)]{RobbinSalamon:22}: $\forall \vect{C} \in
  \PS^{D_z}$, $\forall (\vectgr{\Gamma}_1, \vectgr{\Gamma}_2) \in (T_{\vect{C}}\PS^{D_z})^2$,
  \begin{equation}
      \innerp{\vectgr{\Gamma}_{1}}{\vectgr{\Gamma}_{2}}_{\vect{C}}
      = \innerp{\vectgr{\Gamma}_{1}}{\vectgr{\Gamma}_{2}}^{\textnormal{AffI}}_{\vect{C}} \coloneqq \trace(\vect{C}^{-1}
    \vectgr{\Gamma}_{1} \vect{C}^{-1} \vectgr{\Gamma}_{2}) \,, \label{eq:ai.metric}
  \end{equation}
  where $\trace (\cdot)$ denotes the trace of a square matrix.
  Note that, other options for Riemannian metric, such as
    the Bures-Wasserstein
    one~\cite{MalagoMontrucchioPistone:18}, can also be used
    in this framework. However, this paper only considers
    the most popular AffI metric due to its availability and
    simple closed-form.

  The exponential map $\exp_{\vect{C}}(\cdot) \colon T_{\vect{C}}\PS^{D_z} \to \PS^{D_z} \colon
  \vectgr{\Gamma} \mapsto \exp_{\vect{C}}(\vectgr{\Gamma})$
    under the AffI is defined as
  follows~\cite[(6.5.18)]{RobbinSalamon:22}: $\forall \vect{C} \in
  \PS^{D_z}$, $\forall \vectgr{\Gamma} \in T_{\vect{C}}\PS^{D_z}$,
    \begin{align}
      \exp_{ \vect{C} } ( \vectgr{\Gamma} )
      & \coloneqq \vect{C}^{1/2}\, \Exp (\vect{C}^{-1/2} \vectgr{\Gamma} \vect{C}^{-1/2})\,
      \vect{C}^{1/2}\,, \label{exp.PD}
    \end{align}
  where $\Exp(\cdot)$ stands for the matrix exponential~\cite[\S2.5.2]{RobbinSalamon:22}.
\end{example}

\begin{figure}[t]
    \centering
    \includegraphics[width=.7\columnwidth]{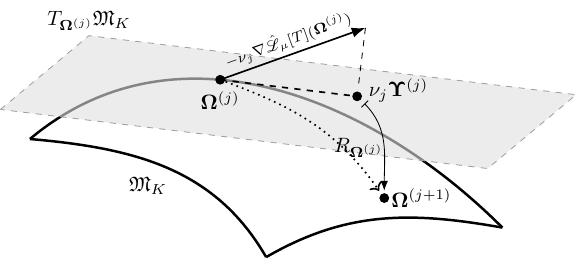}
    \caption{Gradient of the loss $\hat{\mathcal{L}}_{\mu}[T]( \cdot )$ at $\vectgr{\Omega}^{(j)}$, with
      $\nu_j$ being the step-size. In general, the gradient is first projected
      onto the tangent space $T_{ \vectgr{\Omega}^{(j)} } \mathfrak{M}_K$ and then retracted back to the
      manifold $\mathfrak{M}_K$. In the present case, however, this projection is unnecessary because, as
      shown in \cref{prop:gradients}, the computed gradient already lies in the tangent
      space. }\label{fig:retraction}
\end{figure}

While the positive definiteness of a covariance matrix $\vect{C}$ can be enforced via log-Euclidean
parameterization~\cite{log-euclidean} or Cholesky factorization~\cite{cholesky-param}, both approaches
exhibit shortcomings in the current setting. Log-Euclidean parameterization requires eigendecomposition
to compute gradients through the matrix exponential, which is numerically unstable near zero
eigenvalues~\cite{vanish-grad, eccv, backprop-friendly}. This instability is intensified in the current
setting because GMM-QFs already involve exponential terms, and applying log-Euclidean parameterization to
the Gaussian exponents creates nested exponential-logarithm compositions which amplify gradient scaling,
analogous to exploding/vanishing gradient issues~\cite{vanish-grad}. Cholesky-based factorizations suffer
similar issues due to the diagonal entries of triangular factors~\cite{lc-metric}. By contrast, the
Riemannian approach avoids these ``double-exponential'' complications, and operates directly on a
Riemannian product manifold $\mathfrak{M}_K$ which arises effortlessly from the nature of the problem;
$\PS^{D_z}$ is intrinsically a Riemannian manifold.

\subsection{Algorithms}

Notice that for the Riemannian manifold $\mathfrak{M}_K$, its tangent space at a point $\vectgr{\Omega}
\coloneqq ( \vectgr{\xi}, \vect{m}_{1}, \dots, \vect{m}_{K}, \vect{C}_{1}, \dots, \vect{C}_{K} ) \in
\mathfrak{M}_K$ becomes $T_{\vectgr{\Omega}} \mathfrak{M}_K = T_{\vectgr{\xi}} \Real^K \times
T_{\vect{m}_{1}} \Real^{D_z} \times \cdots \times T_{\vect{m}_{K}} \Real^{D_z} \times T_{ \vect{C}_{1} }
\PS^{D_z} \times \cdots \times T_{ \vect{C}_{K} } \PS^{D_z} = \Real^K \times (\Real^{D_z})^K \times
(\mathbb{S}^{D_z})^K$, where it is reminded here that $\mathbb{S}^{D_z}$ stands for the set of all $D_z
\times D_z$ symmetric matrices. To run computations on $\mathfrak{M}_K$, the following straightforward
metric is adopted: $\forall \vectgr{\Upsilon}_i \coloneqq ( \vectgr{\theta}_i, \vectgr{\mu}_{i1}, \dots,
\vectgr{\mu}_{iK}, \vectgr{\Gamma}_{i1}, \dots, \vectgr{\Gamma}_{iK} ) \in T_{\vectgr{\Omega}}
\mathfrak{M}_K$, $i \in \overline{1, 2}$,
\begin{alignat}{2}
  \innerp{ \vectgr{\Upsilon}_1} { \vectgr{\Upsilon}_2 }_{ \vectgr{\Omega} } & {} \coloneqq {} &&
  \vectgr{\theta}_1^{\intercal} \vectgr{\theta}_2 + \sum\nolimits_{k=1}^K \vectgr{\mu}_{1k}^{\intercal}
  \vectgr{\mu}_{2k} \notag\\
  &&& + \sum\nolimits_{k=1}^K \innerp{ \vectgr{\Gamma}_{1k} }{ \vectgr{\Gamma}_{2k} }_{ \vect{C}_k }
  \,, \label{Rmetric.on.M}
\end{alignat}
where $\innerp{ \cdot }{ \cdot }_{ \vect{C}_k }$ can be any user-defined Riemannian metric on
$\PS^{D_z}$. Here, the AffI given in \eqref{eq:ai.metric}
will serve as the basis for computations.

The solution to \eqref{parameters.task} relies on a standard
steepest-gradient-descent scheme on Riemannian manifold
detailed in \cref{algo:armijo} and
invoked in line~\ref{algo:update.omega} of the policy-evaluation step in the PI \cref{algo:PI}---see also
\cref{fig:retraction}. To apply \cref{algo:armijo}, the gradients in line~\ref{algo:compute.grads} of the
same algorithm must be evaluated, and their closed-form expressions are provided in the following
\cref{prop:gradients}. To avoid also unnecessary clutter in notations, the superscript $(n)$ on the
generated data is often omitted.

\begin{algorithm}[t!]
  \caption{Policy iteration by Riemannian optimization}\label{algo:PI}
  \begin{algorithmic}[1]
    \renewcommand{\algorithmicindent}{1em}

    \State{Arbitrarily initialize $\vectgr{\Omega}_0 \in \mathfrak{M}_K$, $\mu_1 \in \mathcal{M}$.}

    \While{$n \in \IntegerPP$} \label{line:iter}

      \State{\textbf{Policy evaluation} Use the current policy $\mu_n$ to generate the dataset
        $\mathcal{D}_{\mu_n} [T] \coloneqq \Set{( \vect{s}_t^{(n)}, \vect{a}_t^{(n)}, g_t^{(n)},
          \vect{s}_t^\prime{}^{(n)} )}_{t=1}^{T}$.}

      \State{Update $\vectgr{\Omega}_{n}$ via~\cref{algo:armijo}.}\label{algo:update.omega}

      \State{Given $\vectgr{\Omega}_{n}$, define $Q_{n} \coloneqq \mathscr{P}( \vectgr{\Omega}_{n} )$
        via~\eqref{param.space.to.Q_K}.}\label{algo:ROforPE}

      \State{\textbf{Policy improvement} Update policy
      $\mu_{n+1}$ via~\eqref{eq:policy-impr}. }
      \label{line:policy-impr}

      \State{Increase $n$ by one, go to~\cref{line:iter}.}

    \EndWhile
  \end{algorithmic}
\end{algorithm}

\begin{algorithm}[t!]
  \caption{Solving~\eqref{parameters.task}}\label{algo:armijo}
  \begin{algorithmic}[1]
    \renewcommand{\algorithmicindent}{1em}

    \State{\textbf{Require:} 
      Number of steps $J\in \IntegerPP$.}

    \State{$\vectgr{\Omega}^{(0)} \coloneqq \vectgr{\Omega}_{n-1}$.}

    \For{$j=0$ to $J-1$}

      \State{$\vectgr{\Omega}^{(j)} \eqqcolon ( \vectgr{\xi}^{(j)}, \vect{m}_1^{(j)}, \dots, \vect{m}_K^{(j)},
        \vect{C}_1^{(j)}, \dots, \vect{C}_K^{(j)})$.}\label{algo:jth.estimate}

      \State{Use \cref{prop:gradients} to compute:
        \begin{align*}
          \nabla \hat{\mathcal{L}}_{\mu_n}[T] ( \vectgr{\Omega}^{(j)} ) = \Bigl( & \tfrac{
            \partial\hat{\mathcal{L}}_{\mu_n}[T] } { \partial \vectgr{\xi} } ( \vectgr{\Omega}^{(j)} ), \ldots, \tfrac{
            \partial\hat{\mathcal{L}}_{\mu_n}[T] } { \partial \vect{m}_k } ( \vectgr{\Omega}^{(j)} ), \\
          & \ldots, \tfrac{ \partial\hat{\mathcal{L}}_{\mu_n}[T] } { \partial \vect{C}_k } ( \vectgr{\Omega}^{(j)} ),
          \ldots \Bigr) \,.
        \end{align*}
      }\label{algo:compute.grads}

      \State{Let
        \begin{align*}
          \vectgr{\Upsilon}^{(j)} & \coloneqq ( \vectgr{\theta}^{(j)}, \vectgr{\mu}_{1}^{(j)}, \ldots,
          \vectgr{\mu}_{K}^{(j)}, \vectgr{\Gamma}_{1}^{(j)}, \ldots, \vectgr{\Gamma}_{K}^{(j)} ) \\
          & \coloneqq -\nabla \hat{\mathcal{L}}_{\mu_n}[T](\vectgr{\Omega}^{(j)}) \,.
        \end{align*}
      }

      \State{Define the step-size $\nu_j \in \RealPP$. 
      }

      \State{Update $\vectgr{\Omega}^{(j+1)} \coloneqq R_{\vectgr{\Omega}^{(j)}} (\nu_j
        \vectgr{\Upsilon}^{(j)} )$ via~\eqref{eq:retraction}.}

    \EndFor

    \State{$\vectgr{\Omega}_{n} \coloneqq \vectgr{\Omega}^{(J)}$.}

  \end{algorithmic}
\end{algorithm}

\begin{proposition}[Computing gradients]\label{prop:gradients}
  Consider the estimate $\vectgr{\Omega}^{(j)} \in \mathfrak{M}_K$ in line~\ref{algo:jth.estimate} of
  \cref{algo:armijo}, and its associated GMM-QF
    $
    Q^{(j)}(\cdot) \coloneqq \sum\nolimits_{k=1}^K \xi_k^{(j)} \mathscr{G}( \cdot \mid \vect{m}_k^{(j)},
    \vect{C}_k^{(j)}) \,.$
  Let also
  \begin{alignat*}{2}
    \delta_t^{(j)} & {} \coloneqq {} && g_t + \alpha Q^{(j)} (\vect{z}_t^\prime) - Q^{(j)} (\vect{z}_t) \,, \\
    \vect{d}_{tk}^{(j)} & \coloneqq && \alpha \mathscr{G}(\vect{z}^\prime_t \given
    \vect{m}^{(j)}_k, \vect{C}^{(j)}_k) \cdot (\vect{z}^\prime_t - \vect{m}^{(j)}_k) \\
    &&& - \mathscr{G}(\vect{z}_t \given \vect{m}^{(j)}_k, \vect{C}^{(j)}_k) \cdot (\vect{z}_t - \vect{m}^{(j)}_k) \,, \\
    \bar{\vect{d}}_{k}^{(j)} & \coloneqq && \tfrac{1}{T} \sum\nolimits_{t=1}^{T} \delta_t^{(j)} \vect{d}_{tk}^{(j)} \,, \\
    \vect{B}_{tk}^{(j)} & \coloneqq && \alpha \mathscr{G} (\vect{z}^\prime_t \given \vect{m}_k^{(j)}, \vect{C}_k^{(j)} )
    \cdot (\vect{z}^\prime_t - \vect{m}_k^{(j)} ) (\vect{z}^\prime_t - \vect{m}_k^{(j)} )^\intercal  \\
    &&& - \mathscr{G} (\vect{z}_t \given \vect{m}_k^{(j)}, \vect{C}_k^{(j)} ) \cdot (\vect{z}_t - \vect{m}_k^{(j)} )
    (\vect{z}_t - \vect{m}_k^{(j)} )^\intercal \,,\\
    \bar{\vect{B}}_k^{(j)} & \coloneqq && \tfrac{1}{T} \sum\nolimits_{t=1}^{T} \delta_t^{(j)} \vect{B}_{tk}^{(j)} \,.
  \end{alignat*}

  \begin{subequations}\label{all.gradients}
    \begin{proplist}

    \item Notice first that the loss in~\eqref{parameters.task} can be recast as
      $\hat{\mathcal{L}}_{\mu}[T]( \vectgr{\Omega}^{(j)} ) = (1/T)\, \norm{\vect{g} + \vectgr{\Delta}
      \vectgr{\xi}^{(j)} }^2$, where $\vect{g} \coloneqq [g_1, \ldots, g_T]^\intercal$ and
      $\vectgr{\Delta}^{(j)}$ is the $T \times K$ matrix having its $(t,k)$th entry as
      \begin{align*}
        [\vectgr{\Delta}^{(j)}]_{tk} \coloneqq \alpha \mathscr{G} (\vect{z}_t^\prime \given \vect{m}_k^{(j)},
        \vect{C}_k^{(j)} ) - \mathscr{G} (\vect{z}_t \given \vect{m}_k^{(j)}, \vect{C}_k^{(j)}) \,.
      \end{align*}
      Then,
      \begin{align}
        \frac{\partial\hat{\mathcal{L}}_{\mu}[T]}{\partial \vectgr{\xi}} ( \vectgr{\Omega}^{(j)} ) = \tfrac{2}{T}
        \vectgr{\Delta}^{(j)} {}^{\intercal} (\vect{g} + \vectgr{\Delta}^{(j)} \vectgr{\xi}^{(j)} )
        \,. \label{eq:dL.dxi}
      \end{align}

    \item $\forall k \in \overline{1,K}$,
      \begin{align}
        \frac{\partial \hat{\mathcal{L}}_{\mu}[T]}{\partial \vect{m}_k} (\vectgr{\Omega}^{(j)} ) =
        4 \xi_k^{(j)} (\vect{C}_k^{(j)})^{-1} \bar{\vect{d}}_{k}^{(j)} \,. \label{eq:dL.dm}
      \end{align}

    \item Under the AffI metric and $\forall k \in \overline{1,K}$,
      \begin{align}
        \frac{\partial \hat{\mathcal{L}}_{\mu}[T]}{\partial \vect{C}_k} (\vectgr{\Omega}^{(j)}) = 2\xi^{(j)}_k
        \bar{ \vect{B} }_k^{(j)}\ \in \mathbb{S}^{D_z} \,. \label{eq:dL.dC.ai}
      \end{align}
    \end{proplist}
  \end{subequations}

\end{proposition}

\begin{proof}
    See~\cref{dL.deriv}.
\end{proof}

Owing to the Cartesian structure of $\mathfrak{M}_K$, it is straightforward to verify that the retraction
mapping on $\mathfrak{M}_K$ becomes $R_{ \vectgr{\Omega} }(\cdot) = (\, R_{ \vectgr{\xi}}(\cdot), \ldots,
R_{ \vect{m}_k } (\cdot), \ldots, R_{ \vect{C}_k }(\cdot), \ldots \,)$. More specifically, within the
context of \cref{algo:armijo}, and motivated by \cref{ex:euclid.manifold,ex:PS.manifold}, 
\begin{subequations}\label{eq:retraction}
  \begin{alignat}{2}
    R_{ \vectgr{\Omega}^{(j)} }( \nu_j \vectgr{\Upsilon}^{(j)} ) & {} = {} && \Bigl( R_{ \vectgr{\xi}^{(j)} }(
    \nu_j \vectgr{\theta}^{(j)} ), \ldots, R_{ \vect{m}_k^{(j)} } ( \nu_j \vectgr{\mu}_k^{(j)} ),
    \notag\\
    &&& \hphantom{ \Bigl( } \ldots, R_{ \vect{C}_k^{(j)} }( \nu_j \vectgr{\Gamma}_k^{(j)} ), \ldots \Bigr)\,,
    \\
    R_{ \vectgr{\xi}^{(j)} }( \nu_j \vectgr{\theta}^{(j)} ) & {} \coloneqq {} && \vectgr{\xi}^{(j)} +
    \nu_j \vectgr{\theta}^{(j)} \,, \\
    R_{ \vect{m}_k^{(j)} }( \nu_j \vectgr{\mu}_k^{(j)} ) & \coloneqq && \vect{m}_k^{(j)} + \nu_j
    \vectgr{\mu}_k^{(j)} \,, \\
    R_{ \vect{C}_k^{(j)} }( \nu_j \vectgr{\Gamma}_k^{(j)} ) & \coloneqq && \exp_{ \vect{C}_k^{(j)} } (
      \nu_j \vectgr{\Gamma}_k^{(j)} ) \label{eq:exp.C} \,,
  \end{alignat}
\end{subequations}
where $\exp_{ \vect{C}_k^{(j)} }( \cdot )$ stands for any choice of the exponential map on $\PS^{D_z}$;
see, for example, \eqref{exp.PD}.

Although the exponential map~\eqref{eq:exp.C} following~\eqref{exp.PD} involves eigendecomposition under
AffI metric, this does not reintroduce the numerical instability that motivates Riemannian approach.
In contrast to log-Euclidean parametrization which introduces eigendecomposition directly to gradient
computations, the retraction via~\eqref{exp.PD} is applied \textit{after}\/ the descent step, operating on
the scaled update $\nu_j \vectgr{\Upsilon}^{(j)}$ whose magnitude is controlled by choosing appropriate
$\nu_j$, e.g., see~\cref{subsec:stepsize}. The eigendecomposition in the retraction on $\PS^{D_z}$ is
therefore evaluated in a numerically bounded regime and does not propagate instability into gradients.

  of~\cref{algo:PI} operates in a discrete, finite action space $\mathfrak{A}$, consistent with standard
  PI literature~\cite{Bertsekas:RLandOC:19}. While this simplifies the update to direct minimization, the
  proposed framework is naturally amenable to continuous-action settings. For instance, an actor-critic
  approach~\cite{konda99ac} could employ a separate policy approximator alongside GMM-QFs. Developing
  such extensions is a promising direction currently being
  pursued. 

\subsection{Step-size strategy}\label{subsec:stepsize}

A popular strategy to choose the step-size $\nu_j$ is to follow the Armijo
line-search algorithm~\cite[\S4.6.3]{Absil:OptimManifolds:08}. In particular, for user-defined
hyperparameters $\bar{\alpha}\in \RealPP, (\beta, \sigma_{\textnormal{A}}) \in (0,1)^2$,
$\nu_j$ is defined via the
quantity $M_{\textnormal{a}} \in \IntegerPP$ as $\nu_j
\coloneqq \bar{\alpha} \beta^{M_{\textnormal{a}}}$ such that,
\begin{align}
    & \hat{\mathcal{L}}_{\mu_n}[T](\vectgr{\Omega}^{(j)}) - \hat{\mathcal{L}}_{\mu_n}[T] \left(
    R_{\vectgr{\Omega}^{(j)}} ( \bar{\alpha} \beta^{M_\textnormal{a}} \vectgr{\Upsilon}^{(j)} )
    \right) \notag \\
    & \geq - \sigma_{\textnormal{A}} \innerp{\nabla \hat{\mathcal{L}}_{\mu_n} [T] ( \vectgr{\Omega}^{(j)}
    )}{ \bar{\alpha} \beta^{M_\textnormal{a}} \vectgr{\Upsilon}^{(j)} }_{\vectgr{\Omega}^{(j)}}
    \notag \\
    & = \sigma_{\textnormal{A}} \bar{\alpha} \beta^{M_\textnormal{a}}\, \norm{ \nabla \hat{\mathcal{L}}_{\mu_n} [T]
    ( \vectgr{\Omega}^{(j)} )}_{ \vectgr{\Omega}^{(j)} }^2 \label{algo:armijo.condition} \,.
\end{align}

\begin{theorem}\label{thm:armijo}
  Let $\mathfrak{C}_n$ denote the set of all accumulation/cluster points of the sequence generated by
  \cref{algo:armijo} per PI iteration $n$, for $J\to \infty$ and with the step-sizes chosen according
  to \eqref{algo:armijo.condition}. Then, every such accumulation point is a \textit{critical}\/ point of
  the loss, i.e., $\nabla \hat{\mathcal{L}}_{\mu_n}[T] (\vectgr{\Omega}_n^{*}) = \vect{0}$, $\forall
  \vectgr{\Omega}_n^{*} \in \mathfrak{C}_n$.
\end{theorem}

\begin{proof}
    See~\cite[\S4.3.3]{Absil:OptimManifolds:08}.
\end{proof}

Note that, due to the non-convexity of the objective
$\hat{\mathcal{L}}_{\mu_n} [T]$, locating its critical points does not guarantee the identification of
its global minimizers. Nevertheless, the Armijo condition~\cite{Absil:OptimManifolds:08,
  Boumal:IntroMan:23} in~\eqref{algo:armijo.condition}, used as a backtracking
strategy, guarantees a sufficient decrease of the loss~\cite[Cor.~4.13,
  p.~63]{Boumal:IntroMan:23}. Moreover, rather than letting $J \to \infty$ to locate a critical point as
suggested by \cref{thm:armijo}, \cref{algo:armijo} adopts a more pragmatic approach by taking the
iteration budget $J$ to be sufficiently large, effectively implementing an ``early-stopping'' strategy
similar to that commonly used in neural-network training to mitigate overfitting~\cite{Prechelt:12}.

\subsection{Performance analysis of \cref{algo:PI}}\label{subsec:performance}

For the following discussion, a probability space $(\Lambda, \mathfrak{F}, \mathbb{P})$ is considered,
where $\Lambda$ is the sample space, with $\lambda\in \Lambda$, $\mathfrak{F}$ is the $\sigma$-algebra of
the events on $\Lambda$, and $\mathbb{P}$ denotes a probability measure on
$\mathfrak{F}$~\cite{Williams:ProMargin:91, Ash:ProMeasure:00}. State vectors
$\vect{s}$ and action vectors $\vect{a}$ will henceforth be regarded as observations (realizations) of
the random variables (RVs) $\bm{s}$ and $\bm{a}$ on $(\Lambda, \mathfrak{F}, \mathbb{P})$,
respectively. Likewise, the slanted $\bm{z}$ will be used to denote the ``state-action'' RV whose
observation is $\vect{z} = \zeta (\vect{s}, \vect{a})$.

To sidestep unnecessary technical complications, it is assumed that all RVs admit a probability density
function (PDF)~\cite[\S6.12]{Williams:ProMargin:91}---for example, the PDF of the RV $\bm{z}$ is denoted
by $p_{ \bm{z} }(\cdot) \colon \mathfrak{Z} \to [0, +\infty] \colon \vect{z} \mapsto p_{ \bm{z} }(
\vect{z} )$. As such, $\mathbb{E}\{ \lvert Q(\bm{z}) \rvert^2 \} \coloneqq \int_{ \Lambda } \lvert Q
(\bm{z}(\lambda) ) \rvert^2\, \mathbb{P}( d\lambda) = \int_{ \mathfrak{Z} } \lvert Q ( \vect{z} )
\rvert^2 p_{ \bm{z} } ( \vect{z} )\, d \vect{z}$~\cite[\S6.12]{Williams:ProMargin:91}, where
$\mathbb{E}\{ \cdot \}$ stands for expectation, defined as the integral w.r.t.\ $\mathbb{P}$. Recall also
the well-known Hilbert space $L_2( \mathbb{P} ) \coloneqq L_2( \Lambda, \mathfrak{F}, \mathbb{P} )
\coloneqq \{ Q(\bm{z}) \given \mathbb{E}\{ \lvert Q(\bm{z}) \rvert^2 \} < + \infty\}$, endowed with the
inner product $\innerp{ Q(\bm{z}) }{ Q^{\prime} (\bm{z}^{\prime}) }_{ L_2( \mathbb{P} ) } \coloneqq
\mathbb{E}\{ Q(\bm{z}) Q^{\prime} (\bm{z}^{\prime})\}$~\cite{Williams:ProMargin:91}.

\begin{lemma}\label{lemma:L2.spaces.equiv}
  The RV $Q(\bm{z}) \in L_2( \Lambda, \mathfrak{F}, \mathbb{P} )$ if and only if the Q-function $Q \in
  L_2 (\mathfrak{P}_{ \bm{z} }) \coloneqq L_2( \mathfrak{Z}, \mathfrak{B}(\mathfrak{Z}), \mathfrak{P}_{
    \bm{z} } )$, where $\mathfrak{B} (\mathfrak{Z})$ stands for the Borel $\sigma$-algebra on
  $\mathfrak{Z}$, and the Borel probability measure $\mathfrak{P}_{ \bm{z} }$ is induced by the PDF $p_{
    \bm{z} }(\cdot)$ as $\mathfrak{P}_{ \bm{z} } ( \mathscr{B} ) \coloneqq \int_{ \mathscr{B} } p_{
    \bm{z} }( \vect{z} )\, d \vect{z}$, $\forall \mathscr{B}\in \mathfrak{B}(\mathfrak{Z})$.
\end{lemma}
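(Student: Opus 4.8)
The plan is to reduce the claimed equivalence to an identity between the two integrals that both compute the second moment of $Q$, an identity essentially recorded already in the discussion preceding the lemma. By definition, $Q(\bm{z}) \in L_2(\Lambda, \mathfrak{F}, \mathbb{P})$ means $\mathbb{E}\{ |Q(\bm{z})|^2 \} < +\infty$, whereas $Q \in L_2(\mathfrak{P}_{\bm{z}})$ means $\int_{\mathfrak{Z}} |Q(\vect{z})|^2\, \mathfrak{P}_{\bm{z}}(d\vect{z}) < +\infty$. Hence it suffices to prove that these two nonnegative (possibly infinite) quantities coincide, after which the equivalence of their finiteness is immediate.

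First I would invoke the image-measure (change-of-variables) theorem for the RV $\bm{z}\colon (\Lambda, \mathfrak{F}, \mathbb{P}) \to (\mathfrak{Z}, \mathfrak{B}(\mathfrak{Z}))$. Since $\bm{z}$ is $\mathfrak{F}/\mathfrak{B}(\mathfrak{Z})$-measurable and $Q$ is Borel-measurable on $\mathfrak{Z}$, the composite $Q(\bm{z})$ is $\mathfrak{F}$-measurable, and for the nonnegative integrand $|Q|^2$,
\begin{align*}
  \int_{\Lambda} |Q(\bm{z}(\lambda))|^2\, \mathbb{P}(d\lambda) = \int_{\mathfrak{Z}} |Q(\vect{z})|^2\, \mathfrak{P}_{\bm{z}}(d\vect{z}) \,,
\end{align*}
where $\mathfrak{P}_{\bm{z}}$ is the law of $\bm{z}$, i.e.\ the push-forward $\mathscr{B} \mapsto \mathbb{P}(\bm{z}^{-1}(\mathscr{B}))$. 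That this push-forward coincides with the measure defined in the statement is exactly the defining property of the PDF $p_{\bm{z}}$: by definition $\mathbb{P}(\bm{z}^{-1}(\mathscr{B})) = \int_{\mathscr{B}} p_{\bm{z}}(\vect{z})\, d\vect{z}$ for every $\mathscr{B} \in \mathfrak{B}(\mathfrak{Z})$, so the two Borel measures agree, and $\mathfrak{P}_{\bm{z}}$ admits $p_{\bm{z}}$ as its Radon--Nikodym derivative w.r.t.\ Lebesgue measure. The standard integration-against-a-density identity then gives $\int_{\mathfrak{Z}} |Q|^2\, \mathfrak{P}_{\bm{z}}(d\vect{z}) = \int_{\mathfrak{Z}} |Q(\vect{z})|^2 p_{\bm{z}}(\vect{z})\, d\vect{z}$, recovering the right-hand side already displayed in the text preceding the lemma.

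Chaining these equalities yields $\mathbb{E}\{ |Q(\bm{z})|^2 \} = \int_{\mathfrak{Z}} |Q(\vect{z})|^2\, \mathfrak{P}_{\bm{z}}(d\vect{z})$ as an equality in $[0, +\infty]$, so one side is finite precisely when the other is. I do not anticipate a genuine analytic obstacle: the argument is measure-theoretic bookkeeping built on the image-measure theorem and the standard machine (indicators, then simple functions, then nonnegative measurables via monotone convergence). The only point demanding care is to carry the identity through as an equality in the extended reals, rather than merely for already-finite integrals, so that equivalence of finiteness---and not just equality of finite values---is what actually gets established.
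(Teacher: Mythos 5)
Your proposal is correct and is essentially the paper's own argument: the paper simply defers to the lemma in \cite[\S6.12]{Williams:ProMargin:91}, whose proof is exactly the image-measure/change-of-variables identity established via the standard machine (indicators, simple functions, monotone convergence), carried out as an equality in $[0,+\infty]$ so that finiteness of either side implies finiteness of the other. Your additional observation that the push-forward law coincides with the density-induced measure $\mathfrak{P}_{\bm{z}}$ is the same bookkeeping implicit in the paper's appeal to that reference.
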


\begin{proof}
  The proof follows verbatim the proof of the lemma in \cite[\S6.12]{Williams:ProMargin:91}.
\end{proof}

\begin{assumptions}\mbox{}
  \begin{assslist}

  \item\label{ass:id} Any two state-action RVs $\bm{z}, \bm{z}^{\prime}$ are identically distributed,
    that is, $p_{ \bm{z} }( \cdot ) = p_{ \bm{z}^{\prime} }( \cdot )$, almost everywhere (a.e.) on
    $\mathfrak{Z}$. Consequently, the induced measure $\mathfrak{P}_{\bm{z}}$ in
    \cref{lemma:L2.spaces.equiv} remains invariant w.r.t.\ the choice of the RV $\bm{z}$ and will be
    denoted hereafter by $\mathfrak{P}$.

  \item\label{ass:one.step.loss.L2} The one-step loss $g(\cdot)$ belongs to $L_2 (\mathfrak{P})$.

  \end{assslist}
\end{assumptions}

Owing to \cref{ass:id}, \cref{lemma:L2.spaces.equiv} now provides the direct connection between the
present discussion and \cref{thm:universal.approx.L2.Borel}. In particular, under
\cref{ass:compact.Z,ass:id}, \cref{thm:universal.approx.L2.Borel} shows that $\cup_{K\in \IntegerPP}
\mathcal{Q}_K$ is dense in $L_2(\mathfrak{Z}, \mathfrak{B}(\mathfrak{Z}),
\mathfrak{P})$. \cref{ass:one.step.loss.L2} is used in \cref{prop:Bellman.contraction.L2} to demonstrate
that the classical Bellman mappings $\Bellman^{\diamond}_{\mu}$ and $\Bellman^{\diamond}$ are well
defined on $L_2(\mathfrak{P})$.

\begin{proposition}\label{prop:Bellman.contraction.L2}
  Under \cref{ass:id,ass:one.step.loss.L2}, the Bellman mappings $\Bellman^{\diamond}_{\mu},
  \Bellman^{\diamond}$ in~\eqref{Bellman.maps.standard} map the Hilbert space $L_2 (\mathfrak{P})$ into
  itself. More specifically, they are contractions~\cite{hb.plc.book}---that is, $\forall Q_1, Q_2 \in
  L_2( \mathfrak{P} )$, $\forall \mu \in \mathcal{M}$, and regardless of the choice of the state-action
  RV $\bm{z}$,
  \begin{subequations}
    \begin{align}
      & \norm{ \Bellman^{\diamond}_{\mu} Q_1 - \Bellman^{\diamond}_{\mu} Q_2 }_{ L_2( \mathfrak{P} ) }
      \notag\\
      & = \left(\, \mathbb{E} \{\, \lvert (\Bellman^{\diamond}_{\mu} Q_1) (\bm{z}) - (\Bellman^{\diamond}_{\mu} Q_2)
      (\bm{z}) \rvert^2\, \}\, \right)^{ \frac{1}{2} } \notag \\
      & \leq \alpha \left(\, \mathbb{E} \{\, \lvert Q_1(\bm{z}) - Q_2(\bm{z}) \rvert^2\, \}\,
      \right)^{ \frac{1}{2} } = \alpha \norm{ Q_1 - Q_2 }_{ L_2( \mathfrak{P} ) } \label{eq:Bellman.mu.contract} \,,
      \\
      & \norm{ \Bellman^{\diamond} Q_1 - \Bellman^{\diamond} Q_2 }_{ L_2( \mathfrak{P} ) }
      \notag\\
      & = \left(\, \mathbb{E} \{\, \lvert (\Bellman^{\diamond} Q_1) (\bm{z}) - (\Bellman^{\diamond} Q_2)
      (\bm{z}) \rvert^2\, \}\, \right)^{ \frac{1}{2} } \notag \\
      & \leq \alpha \left(\, \mathbb{E} \{\, \lvert Q_1(\bm{z}) - Q_2(\bm{z}) \rvert^2\, \}\,
      \right)^{ \frac{1}{2} } = \alpha \norm{ Q_1 - Q_2 }_{ L_2( \mathfrak{P} ) } \label{eq:Bellman.contract} \,,
    \end{align}
  \end{subequations}
  where the discount factor $\alpha \in [0, 1)$.
\end{proposition}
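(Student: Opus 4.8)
The plan is to prove both assertions—invariance of $L_2(\mathfrak{P})$ and the contraction estimate—at once, handling the policy-driven mapping $\Bellman^{\diamond}_{\mu}$ and the greedy mapping $\Bellman^{\diamond}$ in parallel, since the sole genuine difference is the nonlinear $\min_{\vect{a}'}$ operation. Throughout I would work in the random-variable picture licensed by \cref{lemma:L2.spaces.equiv}, expressing the $L_2(\mathfrak{P})$ norm as $\mathbb{E}\{ \lvert\cdot\rvert^2 \}$ over the state-action RV $\bm{z}$ and writing $\mathbb{E}_{\bm{s}^{\prime}\given\bm{z}}\{\cdot\}$ for the conditional expectation over the successor state. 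The three tools I would invoke repeatedly are the conditional Jensen inequality for the convex map $x\mapsto x^2$, the tower property of conditional expectation, and \cref{ass:id}, which guarantees that \emph{every} state-action RV shares the common law $\mathfrak{P}$.

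For invariance, I would first observe that the one-step-loss term does not enter differences, so stability reduces to controlling the second summand. For $\Bellman^{\diamond}_{\mu}$, conditional Jensen gives $\lvert \mathbb{E}_{\bm{s}^{\prime}\given\bm{z}}\{Q(\bm{z}^{\prime})\}\rvert^2 \leq \mathbb{E}_{\bm{s}^{\prime}\given\bm{z}}\{\lvert Q(\bm{z}^{\prime})\rvert^2\}$; the tower property then collapses the outer expectation to $\mathbb{E}\{\lvert Q(\bm{z}^{\prime})\rvert^2\}$, which equals $\norm{Q}_{L_2(\mathfrak{P})}^2$ by \cref{ass:id} because $\bm{z}^{\prime}=\zeta(\bm{s}^{\prime},\mu(\bm{s}^{\prime}))$ is itself a state-action RV. Together with the triangle inequality and $\norm{g}_{L_2(\mathfrak{P})}<\infty$ from \cref{ass:one.step.loss.L2}, this yields $\Bellman^{\diamond}_{\mu}Q\in L_2(\mathfrak{P})$; the greedy case is identical once $\min_{\vect{a}'}Q(\zeta(\bm{s}^{\prime},\vect{a}'))$ is written as $Q(\zeta(\bm{s}^{\prime},\vect{a}^*(\bm{s}^{\prime})))$ for a minimizing selection $\vect{a}^*(\cdot)$.

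For the contraction, I would subtract the two images, cancel the $g$ terms, factor out $\alpha$, and re-run the Jensen--tower--\cref{ass:id} chain on $Q_1-Q_2$, producing the constant $\alpha$ exactly for $\Bellman^{\diamond}_{\mu}$. The main obstacle is the greedy mapping, whose integrand is the nonlinear quantity $\min_{\vect{a}'}Q_1-\min_{\vect{a}'}Q_2$. Here I would use the elementary bound $\lvert \min_{\vect{a}'}Q_1(\zeta(\bm{s}^{\prime},\vect{a}')) - \min_{\vect{a}'}Q_2(\zeta(\bm{s}^{\prime},\vect{a}'))\rvert \leq \lvert Q_1(\zeta(\bm{s}^{\prime},\vect{a}^*(\bm{s}^{\prime}))) - Q_2(\zeta(\bm{s}^{\prime},\vect{a}^*(\bm{s}^{\prime})))\rvert$, where $\vect{a}^*(\cdot)$ is a measurable selection of the relevant minimizer, available because $\mathfrak{A}$ is finite. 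The decisive point—and the reason the constant remains $\alpha$ rather than degrading by a factor of $\sqrt{\lvert\mathfrak{A}\rvert}$, which a crude bound of the maximum by a sum over actions would incur—is that $\bm{s}^{\prime}\mapsto\vect{a}^*(\bm{s}^{\prime})$ is a deterministic policy, so $\bm{z}^{\prime\diamond}\coloneqq\zeta(\bm{s}^{\prime},\vect{a}^*(\bm{s}^{\prime}))$ is a genuine state-action RV and hence, by \cref{ass:id}, is distributed as $\mathfrak{P}$. Applying conditional Jensen and the tower property to $\lvert Q_1(\bm{z}^{\prime\diamond})-Q_2(\bm{z}^{\prime\diamond})\rvert^2$ and then this identical-distribution fact closes the estimate with constant exactly $\alpha$. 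Taking square roots in both cases gives \eqref{eq:Bellman.mu.contract} and \eqref{eq:Bellman.contract}, and $\alpha\in[0,1)$ makes both mappings strict contractions.
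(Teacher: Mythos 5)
Your proof is correct, and for the policy-driven mapping $\Bellman^{\diamond}_{\mu}$ it coincides with the paper's argument: conditional Jensen's inequality, the tower property, and then \cref{ass:id} applied to the policy-composed state-action RV (the paper merely phrases this last step by inserting an indicator $I_{\{\bm{a}^{\prime}=\mu(\bm{s}^{\prime})\}}$ and dropping it). The genuine difference is in the greedy mapping $\Bellman^{\diamond}$. The paper does not use an exact minimizer: it invokes \cref{fact:inf.Qznext} (adapted from \cite{akiyama24nonparametric}), which supplies, for each state, an action $\vect{a}_{\sharp}(\vect{s})$ satisfying the min-difference bound only up to a slack $\epsilon(\vect{s})$, and it then closes the estimate by choosing $\epsilon(\vect{s}) < \varepsilon/\alpha^{2}$ and letting the arbitrary $\varepsilon \in \RealPP$ tend to zero. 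You instead exploit the finiteness of $\mathfrak{A}$ to select an \emph{exact} minimizer measurably, obtaining the pointwise bound with no slack and no limiting argument; your observation that this selection defines a deterministic policy, so that $\zeta(\bm{s}^{\prime},\vect{a}^{*}(\bm{s}^{\prime}))$ is itself a state-action RV and \cref{ass:id} yields the constant exactly $\alpha$ (avoiding the $\sqrt{\lvert\mathfrak{A}\rvert}$ degradation of a max-by-sum bound), is precisely the same decisive step as the paper's, reached without the $\epsilon$-machinery. Your route is cleaner in this finite-action setting; the paper's approximate-selection route would also survive settings where the infimum over actions is not attained. One point to tighten: ``the relevant minimizer'' must be the minimizer of whichever function attains the \emph{smaller} minimum at $\bm{s}^{\prime}$---e.g.\ $\vect{a}^{*}(\vect{s}^{\prime}) \in \Argmin_{\vect{a}\in\mathfrak{A}} Q_{2}(\zeta(\vect{s}^{\prime},\vect{a}))$ when $\min_{\vect{a}} Q_{1}(\zeta(\vect{s}^{\prime},\vect{a})) \geq \min_{\vect{a}} Q_{2}(\zeta(\vect{s}^{\prime},\vect{a}))$, and symmetrically otherwise---since a selection tied to only one of $Q_{1}, Q_{2}$ fails the inequality (take $Q_{1}$-values $(0,5)$ and $Q_{2}$-values $(6,5)$ over two actions). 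This case-dependent selection remains measurable for finite $\mathfrak{A}$ and Borel representatives of $Q_{1}, Q_{2}$, but it should be stated explicitly.
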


\begin{proof}
  See~\cref{prop:Bellman.contraction.L2.proof}.
\end{proof}

Because $\Bellman^{\diamond}_{\mu}$ and $\Bellman^{\diamond}$ are contractions in the Hilbert space
$L_2(\mathfrak{P})$, they each have a unique fixed point, denoted $Q^{\diamond}_{\mu}$ and
$Q^{\diamond}$, respectively~\cite{hb.plc.book}. The classical method to compute these fixed points is
the Banach-Picard iteration: for any arbitrarily fixed $Q \in L_2(\mathfrak{P})$, sequences $(\,
(\Bellman^{\diamond}_{\mu})^i Q \,)_{i \in \IntegerPP}$ and $(\, (\Bellman^{\diamond})^i Q \,)_{ i\in
  \IntegerPP}$ converge to $Q^{\diamond}_{\mu}$ and $Q^{\diamond}$, respectively, as $i \to
\infty$~\cite{hb.plc.book}.

If data $\Set{(\vect{z}_t, \vect{z}^{\prime}_t)}_{t=1}^{T}$ are assumed to be observations (realizations)
of the independent and identically distributed (IID) RVs $\Set{ (\, \bm{z}_t = \zeta( \bm{s}_t,
  \bm{a}_t), \bm{z}_t^{\prime} = \zeta ( \bm{s}_t^{\prime}, \mu(\bm{s}_t^{\prime})) \, ) }_{t=1}^T$, for
some stationary policy $\mu$, a strong connection between the empirical loss
$\hat{\mathcal{L}}_{\mu}[T](\cdot)$ in \eqref{eq:BR.loss} and the ensemble $\mathcal{L}_{\mu} (\cdot)$
one in \eqref{ensemble.loss} emerges. This is because, according to the strong law of large
numbers~\cite[\S6.2.5]{Ash:ProMeasure:00}, $\forall Q\in L_2 (\mathfrak{P})$,
\begin{align}
    & \hat{\mathcal{L}}_{\mu}[T] ( Q ) = \tfrac{1}{T} \sum_{t=1}^{T} \big[ g(\bm{z}_t) + \alpha Q
      (\bm{z}_t^{\prime}) - Q (\bm{z}_t) \big]^2 \notag \\
    & \xrightarrow[T\to \infty]{\text{a.s.}} \mathop{} \mathbb{E}\{\, [ g(\bm{z}) + \alpha
      Q(\bm{z}^{\prime}) - Q(\bm{z}) ]^2\, \} \eqqcolon \mathcal{L}_{\mu} (Q) \,, \label{slln}
\end{align}
where convergence holds almost surely (a.s.).

The following lemma provides an equivalent formulation of the ensemble loss, thereby aiding the subsequent
analysis.

\begin{lemma}\label{lemma:ensemble.loss}
  For any $Q\in L_2 ( \mathfrak{P} )$ and any stationary policy $\mu$,
  \begin{alignat*}{2}
    \mathcal{L}_{\mu} (Q)
    & = && \alpha^2\, \mathbb{E} \{\, \mathbb{V}_{ \bm{s}^{\prime} \given \bm{z} } \{ Q(\, \zeta( \bm{s}^{\prime},
    \mu(\bm{s}^{\prime})) \,) \} \, \} \\
    &&& + \norm{Q - \Bellman^{\diamond}_{\mu} Q}^2_{ L_2(\mathfrak{P}) } \,,
    \intertext{where the conditional variance}
    \mathbb{V}_{ \bm{s}^{\prime} \given \bm{z} } \{ Q( \bm{z}^{\prime} ) \}
    & {} \coloneqq {} && \mathbb{E}_{ \bm{s}^{\prime} \given \bm{z} } \{\, [\, Q(\bm{z}^{\prime}) - \mathbb{E}_{
        \bm{s}^{\prime} \given \bm{z} } \{ Q(\bm{z}^{\prime}) \}\, ]^2\, \} \,,
  \end{alignat*}
  with $\bm{z}^{\prime} = \zeta( \bm{s}^{\prime}, \mu(\bm{s}^{\prime}) )$.
\end{lemma}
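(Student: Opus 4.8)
The plan is to prove the identity by a conditional bias--variance (law-of-total-variance) decomposition of the random integrand, conditioning first on the state-action RV $\bm{z}$ and then averaging over it. Throughout, I would rely on the fact---guaranteed by \cref{ass:one.step.loss.L2} together with $Q \in L_2(\mathfrak{P})$, via \cref{prop:Bellman.contraction.L2}---that all second moments appearing below are finite, so that every conditional expectation and conditional variance is well defined.

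First I would fix $Q$ and abbreviate the integrand as $X \coloneqq g(\bm{z}) + \alpha Q(\bm{z}^{\prime}) - Q(\bm{z})$, so that $\mathcal{L}_{\mu}(Q) = \mathbb{E}\{X^2\}$. The crucial structural observation is that, conditioned on $\bm{z}$, both $g(\bm{z})$ and $Q(\bm{z})$ are deterministic, while the only residual randomness in $X$ enters through $\bm{z}^{\prime} = \zeta(\bm{s}^{\prime}, \mu(\bm{s}^{\prime}))$, i.e.\ through $\bm{s}^{\prime}$. Recalling the definition \eqref{Bellman.standard.mu}, this identifies the conditional mean as
\begin{align*}
  \mathbb{E}_{ \bm{s}^{\prime} \given \bm{z} }\{X\} = g(\bm{z}) + \alpha \mathbb{E}_{ \bm{s}^{\prime} \given \bm{z} }\{Q(\bm{z}^{\prime})\} - Q(\bm{z}) = (\Bellman^{\diamond}_{\mu} Q)(\bm{z}) - Q(\bm{z}) \,.
\end{align*}

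Next I would apply the elementary identity $\mathbb{E}_{ \bm{s}^{\prime} \given \bm{z} }\{X^2\} = \mathbb{V}_{ \bm{s}^{\prime} \given \bm{z} }\{X\} + (\mathbb{E}_{ \bm{s}^{\prime} \given \bm{z} }\{X\})^2$. Since the additive term $g(\bm{z}) - Q(\bm{z})$ is constant given $\bm{z}$, it drops out of the conditional variance, leaving $\mathbb{V}_{ \bm{s}^{\prime} \given \bm{z} }\{X\} = \alpha^2 \mathbb{V}_{ \bm{s}^{\prime} \given \bm{z} }\{Q(\bm{z}^{\prime})\}$. Combining with the conditional-mean expression above yields
\begin{align*}
  \mathbb{E}_{ \bm{s}^{\prime} \given \bm{z} }\{X^2\} = \alpha^2 \mathbb{V}_{ \bm{s}^{\prime} \given \bm{z} }\{Q(\bm{z}^{\prime})\} + \bigl[ (\Bellman^{\diamond}_{\mu} Q)(\bm{z}) - Q(\bm{z}) \bigr]^2 \,.
\end{align*}

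Finally I would take the outer expectation over $\bm{z}$ and invoke the tower property, $\mathbb{E}\{X^2\} = \mathbb{E}_{ \bm{z} }\{\mathbb{E}_{ \bm{s}^{\prime} \given \bm{z} }\{X^2\}\}$. The first term becomes $\alpha^2 \mathbb{E}\{\mathbb{V}_{ \bm{s}^{\prime} \given \bm{z} }\{Q(\bm{z}^{\prime})\}\}$, while the second term is $\mathbb{E}\{[(\Bellman^{\diamond}_{\mu} Q)(\bm{z}) - Q(\bm{z})]^2\} = \norm{Q - \Bellman^{\diamond}_{\mu} Q}^2_{L_2(\mathfrak{P})}$, where I use \cref{ass:id} so that the distribution of $\bm{z}$ is exactly the common measure $\mathfrak{P}$ defining the $L_2(\mathfrak{P})$ norm. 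This establishes the claimed decomposition. The only real care needed---and the main (mild) obstacle---is the bookkeeping of the conditioning: correctly identifying which factors are $\bm{z}$-measurable, recognizing $(\Bellman^{\diamond}_{\mu} Q)(\bm{z})$ as precisely the conditional mean of $g(\bm{z}) + \alpha Q(\bm{z}^{\prime})$, and confirming that \cref{ass:id} permits both $\bm{z}$ and $\bm{z}^{\prime}$ to be integrated against the single measure $\mathfrak{P}$.
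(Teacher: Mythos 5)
Your proposal is correct and follows essentially the same route as the paper's proof: both condition on $\bm{z}$, identify $g(\bm{z}) + \alpha \mathbb{E}_{\bm{s}^{\prime} \given \bm{z}}\{Q(\bm{z}^{\prime})\} - Q(\bm{z})$ as $(\Bellman^{\diamond}_{\mu} Q)(\bm{z}) - Q(\bm{z})$, and split the loss into the expected conditional variance plus the squared conditional bias via the tower property. The only cosmetic difference is that you invoke the identity $\mathbb{E}_{\bm{s}^{\prime} \given \bm{z}}\{X^2\} = \mathbb{V}_{\bm{s}^{\prime} \given \bm{z}}\{X\} + (\mathbb{E}_{\bm{s}^{\prime} \given \bm{z}}\{X\})^2$ directly, whereas the paper re-derives it by adding and subtracting the conditional mean and checking that the cross term vanishes.
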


\begin{proof}
  See~\cref{proof:lemma:ensemble.loss}.
\end{proof}

\cref{lemma:ensemble.loss} indicates that a minimizer of the ensemble BR loss $\mathcal{L}_{\mu}(\cdot)$
does not coincide in general with the unique fixed point $Q_{\mu}^{\diamond}$ of
$\Bellman^{\diamond}_{\mu}$---recall that $Q_{\mu}^{\diamond} = \Bellman^{\diamond}_{\mu}
Q_{\mu}^{\diamond}$. In other words, minimizing the empirical loss $\hat{\mathcal{L}}_{\mu}[T]$, which
lies close to $\mathcal{L}_{\mu}$ for large values of the number $T$ of data samples according to
\eqref{slln}, often yields a biased estimate of $Q_{\mu}^{\diamond}$. This issue is well known in BR
schemes and is typically addressed either through the introduction of auxiliary
functions~\cite{regularizedpi:16, auxiBRM} or by employing double sampling
techniques~\cite{akiyama24nonparametric, vu23rl, Sutton:IntroRL:18}.

Motivated by the previous discussion, the following assumptions are introduced.

{\color{black}
  \begin{assumptions}\label{ass:mildly.stochastic} \mbox{}
    \begin{assslist}

    \item\label{ass:data.from.event} For any iteration number $n \in \IntegerPP$, any number $T\in
      \IntegerPP$ of data samples $\Set{(\vect{z}_t, \vect{z}^{\prime}_t)}_{t=1}^{T}$, which are
      realizations of the RVs $\Set{ (\, \bm{z}_t = \zeta( \bm{s}_t, \bm{a}_t), \bm{z}_t^{\prime} = \zeta (
        \bm{s}_t^{\prime}, \mu(\bm{s}_t^{\prime})) \, ) }_{t=1}^T$, and any $\delta_{\sharp} \in \RealPP$,
      define the event
      \begin{align*}
        \mathcal{E}_n & \coloneqq \mathcal{E}_n ( T, \delta_{\sharp} ) \\
        & \coloneqq \Bigl \{ \lambda \in \Lambda \given \bigl
        \lvert \mathcal{L}_{\mu_n} (Q_n) - \hat{\mathcal{L}}_{\mu_n} [T] (Q_n) \bigr \rvert \leq \delta_{
          {\sharp} } \Bigr \} \,,
      \end{align*}
      as well as the following event $\mathcal{E}_{ \text{ev} }$, called
      ``$(\mathcal{E}_n$~eventually)''~\cite{Williams:ProMargin:91},
      \begin{align*}
        \mathcal{E}_{ \text{ev} } \coloneqq \lim\inf_{ n\to\infty } \mathcal{E}_n \coloneqq
        \bigcup\nolimits_{n\in \IntegerPP} \bigcap\nolimits_{ n^{\prime} \geq n } \mathcal{E}_{ n^{\prime} } \,.
      \end{align*}

      Assume that there exists a tuple $(T, \delta_{ {\sharp} }) \in \IntegerPP \times \RealPP$
      s.t.\ $\mathcal{E}_{ \text{ev} } \neq \emptyset$. All necessary data for \cref{algo:PI} are
      realizations of random variables (RVs) evaluated at sample points of the nonempty event
      $\mathcal{E}_{\text{ev}}$.

    \item\label{ass:critical.bound} Let $\mathfrak{C}_n$ be the set of all accumulation/cluster points as
      defined in the statement of \cref{thm:armijo}. Given $\delta_{\text{c}} \in \RealPP$,
      \cref{algo:armijo} is let to run per PI iteration $n$ for a sufficiently large number $J_n \in
      \IntegerPP$ of iterations, so that if $\vectgr{\Omega}_{n}$ denotes the update in
      line~\ref{algo:update.omega} of \cref{algo:PI}, there exists an accumulation point
      $\vectgr{\Omega}_n^{*} \in \mathfrak{C}_n$ s.t.\ $\forall n$, and a.s.,
      \[
      \vert \hat{\mathcal{L}}_{\mu_n}[T](Q_n^*) - \hat{\mathcal{L}}_{\mu_n}[T](Q_n)\vert \leq
      \delta_{\text{c}} \,,
      \]
      where $Q^*_n \coloneqq \mathscr{P} ( \vectgr{\Omega}_n^{*})$ according to
      \eqref{param.space.to.Q_K}.

    \item\label{ass:value.bounded} There exists $\Delta \in \RealPP$ s.t.\ $\forall
      \vectgr{\Omega}_n^{*}\in \mathfrak{C}_n$, $\forall n$, and a.s.,
      \[
      \hat{\mathcal{L}}_{\mu_n}[T](Q_n^*) \leq \Delta \,,
      \]
      where $Q^*_n \coloneqq \mathscr{P} ( \vectgr{\Omega}_n^{*})$ according to
      \eqref{param.space.to.Q_K}.

    \item\label{ass:classic.Bellman.n.n} There exists $\Delta_2 \in \RealPP$ s.t.\ for all sufficiently
      large $n$,
      \begin{align*}
        \norm{ Q^{\diamond}_{\mu_{n+1} } - Q^{\diamond}_{\mu_n} }_{L_2(\mathfrak{P})} \leq \Delta_2 \,.
      \end{align*}

    \end{assslist}

  \end{assumptions}
}


  \cref{ass:data.from.event} is motivated by~\eqref{slln}, which is commonly used in performance analyses
  within and beyond RL. In general, the number $T$ of data samples depends on $n$ per event
  $\mathcal{E}_n$, and must be taken sufficiently large to ensure that the bound $\delta_{\sharp}$ is
  achieved with high probability according to~\eqref{slln}. Beyond this, \cref{ass:data.from.event}
  imposes a uniform constraint by assuming a common $T$ for all sufficiently large $n$ via $\mathcal{E}_{
    \text{ev} }$. Uniform constraints are not new in the RL literature; see, for example, \cite[Lemmata 5
    and 10]{auxiBRM}, where several bounds are defined, uniformly, by taking the supremum over both
  Q-functions and policies.

  \Cref{ass:critical.bound} is motivated by \cref{thm:armijo}, which guarantees that the sequence
  generated by \cref{algo:armijo} possesses accumulation points $\mathfrak{C}_n \neq \emptyset$ a.s.,
  i.e., it admits convergent subsequences. \Cref{ass:critical.bound} presumes that $\vectgr{\Omega}_{n}$
  is drawn from such a subsequence, and can therefore come arbitrarily close to a cluster point, provided
  \cref{algo:armijo} runs for a sufficiently large number $J_n$ of iterations. This is certainly the
  case, for example, whenever the sequence converges, i.e., it admits a single cluster point. Under this
  setting, the presumed inequality in \Cref{ass:critical.bound} follows from the continuity of
  $\hat{\mathcal{L}}_{\mu_n}[T](\cdot)$ on $\mathfrak{M}_K$.

  \Cref{ass:value.bounded} imposes the mild condition that the loss functions are bounded a.s.\ at the
  cluster points of the sequence generated by \cref{algo:armijo}. This condition has been verified in
  practice throughout all numerical tests conducted in this work. Finally, \cref{ass:classic.Bellman.n.n}
  is motivated by a related discussion in \cite{akiyama24nonparametric} and imposes a smoothness
  condition on the fixed points $Q_{\mu_n}^{\diamond}$ for all sufficiently large
  $n$. 

\begin{proposition}\label{prop:limsups}\mbox{}
  \begin{proplist}

  \item\label{prop:Qn.Qmun} Under
      \cref{ass:compact.Z,ass:data.from.event,ass:critical.bound,ass:value.bounded},
    \begin{align*}
      \limsup_{n \to \infty}\, \norm{Q_n - Q^{\diamond}_{\mu_n}}_{ L_2(\mathfrak{P})} \leq \tfrac{
        \sqrt{ \Delta + \delta_{\text{c}} + \delta_{{\sharp}} }
        }{1 - \alpha} \eqqcolon
        \Delta_1  \,.
    \end{align*}

    \item\label{thm:limsup.Qn} Under~\cref{ass:compact.Z,ass:mildly.stochastic},
      \begin{align*}
        \limsup_{n \to \infty}\, \norm{ Q_n - Q^{\diamond} }_{ L_2(\mathfrak{P}) } \leq \Delta_1 +
        \tfrac{ \alpha }{1 - \alpha} ( 2 \Delta_1 + \Delta_2 ) \,,
      \end{align*}
      where $Q^{\diamond}$ is the unique fixed point of $\Bellman^{\diamond}$---see
      \cref{prop:Bellman.contraction.L2}.

  \end{proplist}
\end{proposition}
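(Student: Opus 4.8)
The plan is to prove the two parts in turn, in both cases leveraging the $\alpha$-contraction property of the Bellman mappings on $L_2(\mathfrak{P})$ from \cref{prop:Bellman.contraction.L2} together with the variance decomposition of the ensemble loss in \cref{lemma:ensemble.loss}. For \cref{prop:Qn.Qmun}, I would first reduce the target to a bound on the Bellman residual of $Q_n$. Since $Q^{\diamond}_{\mu_n} = \Bellman^{\diamond}_{\mu_n} Q^{\diamond}_{\mu_n}$ and $\Bellman^{\diamond}_{\mu_n}$ is an $\alpha$-contraction, the triangle inequality gives $\norm{Q_n - Q^{\diamond}_{\mu_n}}_{L_2(\mathfrak{P})} \leq \norm{Q_n - \Bellman^{\diamond}_{\mu_n} Q_n}_{L_2(\mathfrak{P})} + \alpha \norm{Q_n - Q^{\diamond}_{\mu_n}}_{L_2(\mathfrak{P})}$, and combining this with \cref{lemma:ensemble.loss} (whose conditional-variance term is nonnegative, so $\norm{Q_n - \Bellman^{\diamond}_{\mu_n}Q_n}^2_{L_2(\mathfrak{P})} \leq \mathcal{L}_{\mu_n}(Q_n)$) yields
\[
  \norm{Q_n - Q^{\diamond}_{\mu_n}}_{L_2(\mathfrak{P})} \leq \tfrac{1}{1-\alpha}\, \norm{Q_n - \Bellman^{\diamond}_{\mu_n}Q_n}_{L_2(\mathfrak{P})} \leq \tfrac{1}{1-\alpha}\sqrt{\mathcal{L}_{\mu_n}(Q_n)}\,.
\]
It then remains to show $\mathcal{L}_{\mu_n}(Q_n) \leq \alpha^2\sigma_{\mathbb{V}}^2 + \delta_{\inf} + \Delta + 2\delta_{\mathcal{E}_{\sharp}}$.

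The bound on $\mathcal{L}_{\mu_n}(Q_n)$ would be assembled as a chain. Because $Q_n \in \mathcal{Q}_K$ is continuous and square-integrable on the compact $\mathfrak{Z}$, \cref{ass:data.from.event} (evaluated on the event $\mathcal{E}_{\sharp}$) gives $\mathcal{L}_{\mu_n}(Q_n) \leq \hat{\mathcal{L}}_{\mu_n}[T](Q_n) + \delta_{\mathcal{E}_{\sharp}}$; \cref{ass:value.bounded} and \cref{ass:infima} give $\hat{\mathcal{L}}_{\mu_n}[T](Q_n) \leq \inf_{Q \in \cup_{K'}\mathcal{Q}_{K'}} \hat{\mathcal{L}}_{\mu_n}[T](Q) + \Delta + \delta_{\inf}$; \cref{prop:inf.Q_K} rewrites that infimum as $\inf_{Q \in C(\mathfrak{Z})} \hat{\mathcal{L}}_{\mu_n}[T](Q)$; a second application of $\mathcal{E}_{\sharp}$ bounds this by $\inf_{Q \in C(\mathfrak{Z})} \mathcal{L}_{\mu_n}(Q) + \delta_{\mathcal{E}_{\sharp}}$. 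Finally, $\mathcal{L}_{\mu_n}$ is continuous on $L_2(\mathfrak{P})$, being the squared $L_2(\mathbb{P})$-norm of the affine map $Q \mapsto g(\bm{z}) + \alpha Q(\bm{z}^{\prime}) - Q(\bm{z})$ (an isometry on each summand via \cref{lemma:L2.spaces.equiv} under \cref{ass:id}, with $g(\bm{z}) \in L_2(\mathbb{P})$ by \cref{ass:one.step.loss.L2}); since $C(\mathfrak{Z})$ is dense in $L_2(\mathfrak{P})$, the infimum is $\leq \mathcal{L}_{\mu_n}(Q^{\diamond}_{\mu_n})$. At the fixed point the residual term of \cref{lemma:ensemble.loss} vanishes, so $\mathcal{L}_{\mu_n}(Q^{\diamond}_{\mu_n}) = \alpha^2\,\mathbb{E}\{\mathbb{V}_{\bm{s}^{\prime}\given \bm{z}}\{Q^{\diamond}_{\mu_n}\}\} \leq \alpha^2\sigma_{\mathbb{V}}^2$ by \cref{ass:cond.var} (taking $\tilde{Q} = Q^{\diamond}_{\mu_n}$). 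Collecting the four additive slacks gives the claimed $\Delta_1$, uniformly in $n$, and $\limsup_n$ finishes part (i).

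For \cref{thm:limsup.Qn}, the key structural fact is the greedy identity $\Bellman^{\diamond}_{\mu_{n+1}} Q_n = \Bellman^{\diamond} Q_n$, which holds because $\mu_{n+1}$ is greedy w.r.t.\ $Q_n$ by \eqref{eq:policy-impr}, so $Q_n(\zeta(\bm{s}^{\prime}, \mu_{n+1}(\bm{s}^{\prime}))) = \min_{\vect{a}^{\prime}} Q_n(\zeta(\bm{s}^{\prime}, \vect{a}^{\prime}))$ inside both mappings (the finite $\mathfrak{A}$ makes the $\arg\min$ attainable). Writing $a_n \coloneqq \norm{Q_n - Q^{\diamond}_{\mu_n}}_{L_2(\mathfrak{P})}$ and $b_n \coloneqq \norm{Q_n - Q^{\diamond}}_{L_2(\mathfrak{P})}$, I would estimate $\norm{Q^{\diamond}_{\mu_{n+1}} - Q^{\diamond}}_{L_2(\mathfrak{P})}$ by inserting $\Bellman^{\diamond}_{\mu_{n+1}}Q_n = \Bellman^{\diamond}Q_n$ between the two fixed points and applying the two contractions, which gives $\norm{Q^{\diamond}_{\mu_{n+1}} - Q^{\diamond}}_{L_2(\mathfrak{P})} \leq \alpha(\norm{Q^{\diamond}_{\mu_{n+1}} - Q^{\diamond}_{\mu_n}}_{L_2(\mathfrak{P})} + a_n) + \alpha b_n \leq \alpha(\Delta_2 + a_n) + \alpha b_n$, where $\norm{Q^{\diamond}_{\mu_{n+1}} - Q^{\diamond}_{\mu_n}}_{L_2(\mathfrak{P})} \leq \Delta_2$ is \cref{ass:classic.Bellman.n.n}. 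A final triangle step $b_{n+1} \leq a_{n+1} + \norm{Q^{\diamond}_{\mu_{n+1}} - Q^{\diamond}}_{L_2(\mathfrak{P})}$ produces the recursion
\[
  b_{n+1} \leq a_{n+1} + \alpha a_n + \alpha b_n + \alpha \Delta_2 \,.
\]
Since $\alpha < 1$ and $(a_n)$ is eventually bounded by part (i), this recursion is stable, so $(b_n)$ is bounded; taking $\limsup_n$, using $\limsup_n a_n \leq \Delta_1$ and subadditivity of $\limsup$, gives $(1-\alpha)\limsup_n b_n \leq (1+\alpha)\Delta_1 + \alpha\Delta_2$, i.e.\ $\limsup_n b_n \leq \Delta_1 + \tfrac{\alpha}{1-\alpha}(2\Delta_1 + \Delta_2)$.

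The main obstacle I expect is the measure-theoretic bridge in part (i): the fixed point $Q^{\diamond}_{\mu_n}$ lives in $L_2(\mathfrak{P})$ and need not be continuous, so one cannot simply plug it into $\inf_{Q \in C(\mathfrak{Z})}\mathcal{L}_{\mu_n}(Q)$; this must be justified through continuity of $\mathcal{L}_{\mu_n}$ on $L_2(\mathfrak{P})$ plus density of $C(\mathfrak{Z})$, and it must be interleaved correctly with the \emph{two-sided} use of the event $\mathcal{E}_{\sharp}$ to pass back and forth between empirical and ensemble losses without losing track of which slack ($\delta_{\mathcal{E}_{\sharp}}$ twice, $\Delta$, $\delta_{\inf}$) enters where. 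The remaining steps—the contraction inequalities, the greedy identity, and the $\limsup$ recursion—are routine once the correct intermediate terms are chosen.
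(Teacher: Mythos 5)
Your proof is correct, and its skeleton matches the paper's: part (i) runs the identical chain of \cref{ass:data.from.event} (used twice, contributing $2\delta_{\mathcal{E}_{\sharp}}$), \cref{ass:value.bounded}, \cref{ass:infima}, and \cref{prop:inf.Q_K}, then closes with the contraction bound from \cref{prop:Bellman.contraction.L2}; part (ii) rests on the same greedy identity $\Bellman^{\diamond}_{\mu_{n+1}} Q_n = \Bellman^{\diamond} Q_n$ and contraction estimates. There are two genuine local differences. First, the paper's endgame for part (i) avoids any continuity argument for $\mathcal{L}_{\mu_n}$ on $L_2(\mathfrak{P})$: it fixes $\epsilon \in (0, \delta_{\diamond}]$, picks $\tilde{Q} \in C(\mathfrak{Z})$ with $\norm{Q^{\diamond}_{\mu_n} - \tilde{Q}}_{L_2(\mathfrak{P})} \leq \epsilon$, bounds $\inf_{C(\mathfrak{Z})}\hat{\mathcal{L}}_{\mu_n}[T] \leq \hat{\mathcal{L}}_{\mu_n}[T](\tilde{Q}) \leq \mathcal{L}_{\mu_n}(\tilde{Q}) + \delta_{\mathcal{E}_{\sharp}}$, and splits $\mathcal{L}_{\mu_n}(\tilde{Q})$ via \cref{lemma:ensemble.loss} into $(1+\alpha)^2\epsilon^2$ plus a conditional-variance term evaluated at $\tilde{Q}$---which is precisely why \cref{ass:cond.var} is stated over the ball $B[Q^{\diamond}_{\mu_n},\delta_{\diamond}]$ rather than at a single point; letting $\epsilon \downarrow 0$ recovers $\Delta_1$. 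Your density-plus-continuity route instead lands the infimum directly on $\mathcal{L}_{\mu_n}(Q^{\diamond}_{\mu_n}) = \alpha^2\,\mathbb{E}\{\mathbb{V}_{\bm{s}^{\prime} \given \bm{z}}\{Q^{\diamond}_{\mu_n}(\bm{z}^{\prime})\}\}$, so you invoke \cref{ass:cond.var} only at the ball's center---a mild economy---at the price of having to justify continuity of $\mathcal{L}_{\mu_n}$ on $L_2(\mathfrak{P})$, which you do correctly via the isometries of \cref{lemma:L2.spaces.equiv} under \cref{ass:id} together with \cref{ass:one.step.loss.L2}. Second, for part (ii) the paper's \cref{lemma:diff.nplus.n} sets up the recursion in the fixed-point gaps $c_n \coloneqq \norm{Q^{\diamond}_{\mu_n} - Q^{\diamond}}_{L_2(\mathfrak{P})}$, namely $c_{n+1} \leq \alpha c_n + \alpha(2\Delta_1 + \Delta_2)$, unrolls it, and only then adds $\norm{Q_n - Q^{\diamond}_{\mu_n}}_{L_2(\mathfrak{P})} \leq \Delta_1$; you recurse directly on $b_n \coloneqq \norm{Q_n - Q^{\diamond}}_{L_2(\mathfrak{P})}$. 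Both recursions consume exactly the same ingredients (greedy identity, \cref{prop:Bellman.contraction.L2}, \cref{ass:classic.Bellman.n.n}, part (i)), and your limiting value $\tfrac{(1+\alpha)\Delta_1 + \alpha\Delta_2}{1-\alpha}$ coincides algebraically with the stated $\Delta_1 + \tfrac{\alpha}{1-\alpha}(2\Delta_1 + \Delta_2)$, so this difference is organizational rather than substantive; your care in noting that $b_n < \infty$ and that the recursion need only hold for sufficiently large $n$ is exactly what the limsup argument requires.
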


\begin{proof}
  See~\cref{proof:prop:limsups}.
\end{proof}

A result similar to the one in \cref{prop:Qn.Qmun} appears as an assumption
in~\cite[(5.11)]{Bertsekas:RLandOC:19} on the accuracy of policy evaluation, used in the proof
of~\cite[Prop. 5.1.4]{Bertsekas:RLandOC:19} to establish convergence of approximate PI. However, the
analysis in~\cite[Prop. 5.1.4]{Bertsekas:RLandOC:19} is conducted in the context of $\norm{}_{\infty}$,
whereas the present discussion is carried out in $L_2(\mathfrak{P})$, with assumptions pertaining to
$\norm{}_{L_2(\mathfrak{P})}$ norm.

The upper bound $\Delta_1$ in \cref{prop:Qn.Qmun} admits
  the following intuitive interpretation. To push $\delta_{{\sharp}}$ to small values, the amount $T$ of
  data needs to be increased via \cref{ass:data.from.event}. The user-defined $\delta_{\text{c}}$ can
  also be set to small values, at the expense of increasing the iteration budget $J_n$ according to
  \cref{ass:critical.bound}, and $\Delta$ may be potentially decreased as $K$ increases, since
  $\mathcal{Q}_K$ covers a larger portion of the ambient functional space, such as $C(\mathfrak{Z})$
  (see~\cref{subsec:GMM-QFs-express}). These parameters can be tuned separately according to the
  available computational budget. Nevertheless, $\Delta_1$ persists rather than vanishing in the limit,
  as \cref{prop:Qn.Qmun} demonstrates. This aligns with the well-known feature of BRM approaches that
  minimizing $\hat{\mathcal{L}}_{\mu_n}[T]$, or its ensemble counterpart, does not in general recover the
  fixed point $Q^{\diamond}_{\mu_n}$ of $\Bellman^{\diamond}_{\mu_n}$~\cite{regularizedpi:16, auxiBRM}.

  \Cref{thm:limsup.Qn} bounds the distance between the sequence of estimates
  $(Q_n)_{n \in \IntegerPP}$ and the fixed point $Q^{\diamond}$, thereby establishing the stability and
  consistency of \cref{algo:PI} with respect to the classical Bellman mapping
  $\Bellman^{\diamond}$. 

\subsection{Computational complexity of \cref{algo:PI}}\label{subsec:compute-complex}

The computational cost of~\cref{algo:PI} at each policy iteration $n$ is governed by the complexity of
\cref{algo:armijo}. Per iteration $j$, the dominant cost arises from computing Riemannian gradients and
performing retractions on $\mathfrak{M}_K$.  In particular, $\partial \hat{\mathcal{L}}_{\mu_n}[T](\cdot)
/ \partial \vectgr{\xi}$ in~\eqref{eq:dL.dxi} requires $\mathcal{O}(TK)$ operations due to averaging over
$T$ samples, where $\mathcal{O}(\cdot)$ denotes the big-Oh notation. For each Gaussian component
$\mathscr{G}_k (\cdot)$, $\partial \hat{\mathcal{L}}_{\mu_n}[T] (\cdot) / \partial \vect{m}_k$
in~\eqref{eq:dL.dm} and $\partial \hat{\mathcal{L}}_{\mu_n}[T] (\cdot) / \partial \vect{C}_k$
in~\eqref{eq:dL.dC.ai} incurs a cost of $\mathcal{O} (\, TK(D_z + D_z(D_z + 1)/2) \,) = \mathcal{O} (TK
D_z^2)$. The inversion of $\{ \vect{C}_k \}_{k=1}^K \subset \PS^{D_z}$ scales on the order of
$\mathcal{O} (KD_z^3)$. Consequently, the total complexity of gradient computation is $\mathcal{O} (
KD_z^2(D_z + T) )$.

Retractions for $\vectgr{\xi} \in \Real^K$ and $\vect{m}_k \in \Real^{D_z}$ in Euclidean spaces are
simple with cost $\mathcal{O} ( K(1 + D_z) )$. On the other hand, retractions the manifold $\PS^{D_z}$
in~\eqref{exp.PD} require either computing a matrix exponential (AffI metric) with complexity of $\mathcal{O} (D_z^3)$ per Gaussian component. Since
the Armijo line-search~\cite{Absil:OptimManifolds:08} in may require $M_\textnormal{a}$ iterations to
identify the Armijo step-size $\nu_j$, the overall computational complexity associated with retractions is
$\mathcal{O} (\, KM_{\text{a}}(1 + D_z + D_z(D_z+1)/2) + K D_z^3 \,) = \mathcal{O} (KD_z^2(D_z + M_{\text{a}}))$. Combining
the above terms, running \cref{algo:armijo} with $J$ iterations incurs an overall computational
complexity of $\mathcal{O}(\, JKD_z^2 (M_{\text{a}}+ D_z + T)\, )$. Although $M_\textnormal{a}$ may be unbounded,
in practice the total cost can always be reduced by imposing an upper limit on the line-search depth in
\eqref{algo:armijo.condition}. 

It is worth stressing that the computational
burden of GMM-QFs under Riemannian optimization grows
significantly with the state-action dimensionality
$D_z$ and the number of Gaussian components $K$.
Consequently, high-dimensional inputs -- such as
pixel data or LiDAR measurements with hundreds of features
-- can become prohibitively expensive. Possible remedies
include dimensionality reduction using pre-trained
auto-encoders and updating randomly selected subsets of
gradient components, though these directions are left for
future work.

Computational complexity of deep RLs is primarily determined by the underlying DNN architecture. For a
network with layer sizes $[h_0, h_1, \ldots, h_L]$, with $h_0 \coloneqq D_{s}$, $h_L\coloneqq \lvert
\mathfrak{A} \rvert$, the cost, which scales also with the number of samples $T$, is $\mathcal{O}
(T\sum_{l=1}^{L} h_{l-1} h_l)$. In practice, capturing task-specific nonlinearities often requires deep
and wide networks, leading to significantly conmputational overhead.

In contrast, nonparametric methods, such as KBRL and distributional RL, typically incur lower
computational complexity at early stages of learning. However, their complexity is dominated by the size
of their approximation basis, which grows upon the arrival of new data. For example, distributional RL
methods that utilize GMMs via EM require a per-iteration computational cost of $\mathcal{O} (
K_{\text{dist}} D_z^3 )$, with $K_{\text{dist}}$ being the size of the current basis.  A detailed
analysis of computational complexity of KBRL can be found in~\cite{akiyama24nonparametric}.

\section{Numerical Tests}\label{sec:tests}

Two popular RL benchmark tasks, the
\textit{acrobot}~\cite{spong95acrobot} and the \textit{flappy
bird}~\cite{flappy} are selected to validate the proposed~\cref{algo:PI} against:
\begin{enumerate*}[label=\textbf{(\roman*)}]

\item Kernel-based least-squares policy iteration (KLSPI)~\cite{xu07klspi}, which utilizes LSTD in an
  RKHS;

\item online Bellman residual (OBR)~\cite{onlineBRloss:16};

\item the deep Q-network (DQN)~\cite{mnih13dqn};

  \item the dueling double (D)DQN~\cite{duelingddqn}, an
      enhanced DQN variant that uses experience data to train the Q-functions;

\item the proximal policy optimization (PPO)~\cite{PPO}, a representative of policy-based methods; and

\item the GMM-based RL~\cite{agostini17gmmrl} via an online EM algorithm (EM-GMMRL).
\end{enumerate*}

To validate performance, policies generated at iteration $n$
are tests in a separate environment whose data are excluded
from training.
In~\Cref{fig:acrobot-perform,fig:grad-calculate,fig:flappy-perform,fig:flappy-grad-calculate},
the vertical axis represents the total cost incurred by the
agent when executing policy $\mu_n$ from a predefined
initial state. Following~\cite{Bertsekas:RLandOC:19}, the
objective is to minimize total loss rather than maximize
cumulative reward, i.e., loss $\coloneqq-$reward. Thus,
curves closer to the bottom-left corner indicate hight
learning efficiency. Results are averaged over multiple
independent runs, and the software was implemented in
Julia~\cite{julia17} and Python.

\begin{figure}[t]
    \centering
    \subfloat[Acrobot]{
        \includegraphics[height=.11\paperheight]{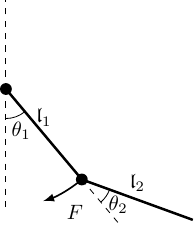}
        \label{fig:acrobot}
    }
    \quad
    \subfloat[Flappy bird]{
        \includegraphics[height=.11\paperheight]{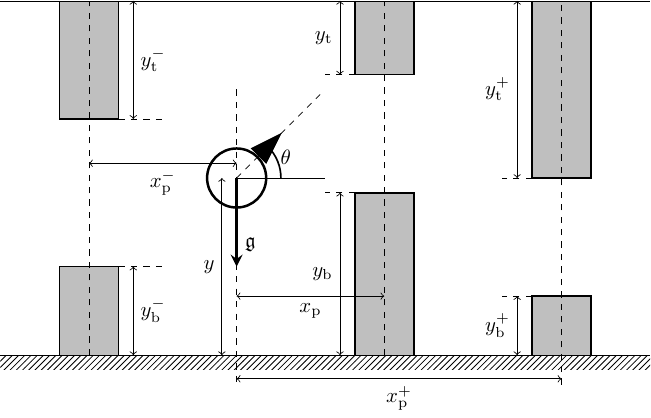}
        \label{fig:flappy}
    }
    \caption{RL benchmarks used in~\cref{sec:tests}.
    Dynamics of the acrobot and flappy bird environments
    can be found in~\cite{spong95acrobot} and~\cite{flappy},
    respectively.}
\end{figure}

\subsection{Acrobot}\label{subsec:acrobot}

The acrobot, or \textit{double-linked pendulum}, is a widely
used underactuated robot benchmark in optimal control. It
consists of two linked segments with a fixed base and a
joint actuator that swings the free end from a downward
position to an upright state. The goal is to reach the
target position in as few steps as possible without prior
knowledge of the dynamics. Its strong nonlinearity and
chaotic behavior make it a popular benchmark for evaluating
RL agents.

The state $\vect{s} \coloneqq [\theta_1, \theta_2, \dot{\theta}_1, \dot{\theta}_2]^\intercal$, where
$\theta_i \in [-\pi, \pi]$ is the angular position, and $\dot{\theta}_i$ is the angular velocity of link
$i$ with $i=1,2$. The angular velocity is bounded differently between 2 links, as $\dot{\theta}_1 \in
[-4\pi, 4\pi]\text{s}^{-1}$ and $\dot{\theta}_2 \in [-9\pi, 9\pi]\text{s}^{-1}$.  In this experiement,
the goal is achieved when $\vect{s} \in \mathfrak{S}_g \coloneqq \{[\theta_1, \theta_2, \dot{\theta}_1,
  \dot{\theta}_2]^\intercal \given -\cos(\theta_1) - \cos(\theta_1 + \theta_2) > 1 \}$. The one-step loss
is defined accordingly, as $g(\zeta(\vect{s}, \vect{a})) \coloneqq 0$ if $\vect{s} \in \mathfrak{S}_g$,
and $g(\zeta(\vect{s}, \vect{a})) \coloneqq 1$ otherwise.

To collect data samples $\mathcal{D}_{\mu_n}[T]$ in~\cref{algo:PI}, the acrobot starts from an angular
position and explores a number of actions under the current policy $\mu_n$. This exploration is called an
episode, and per iteration $n$ in ~\cref{algo:PI}, data $\mathcal{D}_{\mu_n}[T]$ with $T \coloneqq (
\text{number of episodes}) \times (\text{number of actions}) = 20 \times 70 = 1400$ are collected.
KLSPI~\cite{xu07klspi} and OBR~\cite{onlineBRloss:16} use the Gaussian kernel with bandwidth
$\sigma_\kappa = 2$, while their ALD threshold is $\delta_{\textnormal{ALD}}=0.01$. KLSPI and OBR need $T
= 5000$ to reach their ``optimal'' performance for the task at hand. DQN~\cite{mnih13dqn} and dueling
DDQN~\cite{duelingddqn} use a fully-connected neural network for Q-functions with \num{2} hidden layers
of size \num{128}, with batch size of \num{64}, and a replay buffer (experience data) of size
\num{1e4},
while PPO~\cite{PPO} uses an additional neural network with the same configuration for the policy. For
EM-GMMRL~\cite{agostini17gmmrl}, $T = 1500$, while its threshold to add new Gaussian functions in its
dictionary is $10^{-6}$. 

As shown in~\cref{fig:acrobot-perform}, \cref{algo:PI} with $K = 50$ and $K = 500$, together with
DQN~\cite{mnih13dqn} and PPO~\cite{PPO}, outperforms the other methods by reaching a stable regime
starting from approximately the \num{70}th trial. The policies learned by OBR~\cite{onlineBRloss:16} and
dueling DDQN~\cite{duelingddqn} improve slowly, even when both new and replayed experience data are
provided during training. In contrast, KLSPI~\cite{xu07klspi} exhibits suboptimal performance, while
EM-GMMRL~\cite{agostini17gmmrl} fails to achieve the desired behavior.

Notably, GMM-QFs with $K = 500$ slightly underperform those with $K = 50$. This apparent suboptimality
stems from the biased nature of the BR loss toward the current policy. {\color{black}
This observation highlights a practical challenge: the optimal number of
  Gaussian components depends on task complexity and the
  bias characteristics of the BR
  objective, raising model-size selection issues. A practical heuristic is to start with a small $K$,
  scaled proportional to the state-action dimensionality, and increase it only if the learning curve does
  not plateau after a reasonable number of policy iterations. A systematic treatment of model selection
  via sparsification for GMM-QFs is discussed in the recent
  work~\cite{Vu:eusipco:26}. 

\begin{figure}
    \centering
    \includegraphics[width=.95\columnwidth]{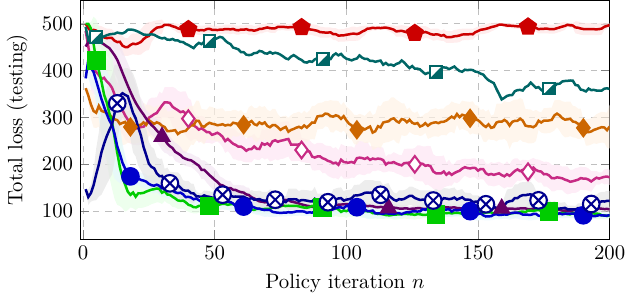}
    \caption[]{Acrobot benchmark. Curve markers:~\cref{algo:PI} with $K=50$:\proposed,
      $K=500$:\quicksymbol{proposed!70!black}{mark=otimes*,
      mark options={line width=1pt, fill=white}},
      KLSPI~\cite{xu07klspi}:\klspi, OBR~\cite{onlineBRloss:16}:\obr, DQN~\cite{mnih13dqn}:\dqn, Dueling
      DDQN~\cite{duelingddqn}:\quicksymbol{teal!80!black}{mark=halfsquare*, mark options={rotate=45, line
        width=1pt}}, PPO~\cite{PPO}:\ppo, EM-GMMRL~\cite{agostini17gmmrl}:\emgmm.
    }\label{fig:acrobot-perform}
\end{figure}

Among the successful methods, DQN~\cite{mnih13dqn} and PPO~\cite{PPO} both require at least a two-layer
fully connected network with \num{128} hidden neurons per layer to achieve the performance reported
in~\cref{fig:acrobot-perform}, resulting in a large number of learnable parameters. In
contrast,~\cref{algo:PI} attains comparable performance with significantly fewer parameters, leading to
substantially lower computational resource consumption
(see~\cref{tab:acrobot-exp-noparam}).

Beyond model size, performance vs.\ cumulative
  computations is recorded in~\cref{fig:grad-calculate}. The horizontal axis tracks the total
  floating-point operations (FLOPs) accrued from the start of training up to a given iteration, where one
  FLOP corresponds to a single floating-point arithmetic operation. In short, the horizontal records the
  total computational cost accumulated throughout learning. The resulting curves characterize the
  relationship between computational investment and agents' performance, providing a compute-centric
  perspective on learning progress. Detailed computation of FLOPs is given in~\cref{app:flops}.}
Here,~\cref{algo:PI} with GMM-QFs ($K=50$) is computationally more efficient
than deep RL methods, as its complexity is dominated by policy evaluation, which scales
with a fixed iteration budget $J$ and the line-search budget $M_{\text{a}}$
(\cref{subsec:compute-complex}). By comparison, the computational burden of DQN and PPO depends
primarily on their network architectures. Although this overhead can be reduced via ``tiny''
configurations, such reductions typically come at the expense of degraded performance
(see~\cref{fig:grad-calculate}). On the other hand, GMM-QFs with more Gaussian components $K$ generally
require a larger $J$ to reach optimal performance, ultimately increasing the
computational complexity.

\begin{table}[h]
    \centering
    \caption{Number of learned parameters (acrobot)}\label{tab:acrobot-exp-noparam}
    \begin{tabular}{l  r}
        \toprule
        \textbf{Methods} & \# parameters \\
        \midrule
        \Cref{algo:PI} ($K=50$) & \textbf{850} (\num{50}
        Gaussian functions) \\
        \Cref{algo:PI} ($K=500$) & \textbf{8500} (\num{500}
        Gaussian functions) \\
        DQN~\cite{mnih13dqn} & {17795} (2 128-neuron
        layers) \\
        PPO~\cite{PPO} & {35332} (2 networks, 2
        128-neuron layers) \\
        \bottomrule
    \end{tabular}
\end{table}

\begin{figure}[t]
    \centering
    \includegraphics[width=.95\columnwidth]{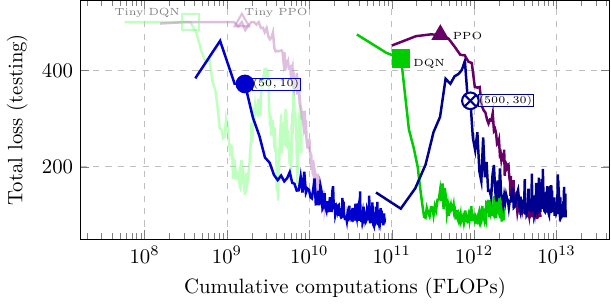}
    \caption[]{Performance of~\cref{algo:PI},
    DQN~\cite{mnih13dqn}, PPO~\cite{PPO} for acrobot against cumulative
      FLOPs.  Markers: ``tiny'' DQN (2 hidden layers of 16
      neurons): \quicksymbol{green!25!white}{mark=square, mark options={line width=1pt}}, ``tiny'' PPO
      (Q-network of 2 16-neuron layers and policy network of 2 8-neuron layers):
      \quicksymbol{violet!25!white}{mark=triangle, mark options={line width=1pt}}.  Others follow ones
      of~\cref{fig:acrobot-perform}, with annotation on~\cref{algo:PI} indicating both the number of
      Gaussians $K$ and iteration budget $J$ in form of $(K,J)$. GMM-QFs with larger $K$ require a larger
      $J$, at the cost of increased computational complexity.  } \label{fig:grad-calculate}
\end{figure}

\subsection{Flappy bird}\label{subsec:flappy}

The flappy-bird task is a challenging RL benchmark, inspired by the popular arcade game, where an agent
controls a bird navigating an endless sequence of pipes under aggressive gravity. The goal is to pass
through as many pipes as possible without collisions or leaving the screen boundaries
(see~\cref{fig:flappy}). Its continuous dynamics, delayed action effects, and strict timing constraints
make it significantly harder than classical control tasks. Due to this challenges, the online version
of~\cref{algo:PI} is considered to access data on-the-fly. For consistency and fair comparison, online
variants of all competing methods are also evaluated.

The observation state consists of \num{12} variables, as $\vect{s} \coloneqq (x^{-}_\text{p},
y^{-}_\text{t}, y^{-}_\text{b}, x_\text{p}, y_\text{t}, y_\text{b}, x^{+}_\text{p}, y^{+}_\text{t},
y^{+}_\text{b}, y, \dot{y}, \theta)$, in which $y$ is the vertical position, $\dot{y}$ is the vertical
velocity and $\theta$ is the orientation of the bird.  Other attributes include $(x^{-}_{\text{p}},
y^{-}_{\text{t}}, y^{-}_{\text{b}})$, $(x_{\text{p}}, y_{\text{t}}, y_{\text{b}})$ and
$(x^{+}_{\text{p}}, y^{+}_{\text{t}}, y^{+}_{\text{b}})$ which describe the previous, upcoming, and
subsequent pipes in term of horizontal distance to the bird and top/bottom pipe positions.
The action is discrete, with $\vect{a} \in \mathfrak{A} \coloneqq \{0, 1\}$, where $\vect{a} = 1$
triggers an upward flapping behavior.
The one-step loss is defined to encourage the long survival and successful navigation: at each time step
(every time taking action), the agent is charged a negative one-step loss of $g(\,
\zeta(\vect{s},\vect{a})\, ) = -0.1$ for ``staying alive,'' loss $-1$ for passing a pipe
successfully, $1$ upon the collision and termination (game-over), and an additional loss $0.5$ for
touching the top boundary of the screen. The event of game-over is only triggered when the bird collides
with a pipe or exits the playable area.

Under the online setting, the dataset $\mathcal{D}_{\mu_n}[T]$ now plays the role of a replay
buffer~\cite{lin93experience}, which stores also the transition tuple $(\vect{s}, \vect{a}, g,
\vect{s}')$ from the past policies up to a designated capacity $T = 1 \times 10^4$. If the number of
state transitions within $\mathcal{D}_{\mu_n}[T]$ exceeds $T$, the oldest one will be discarded. Here,
a similar buffer capacity is chosen for~\cref{algo:PI}, DQN~\cite{mnih13dqn}, dueling
DDQN~\cite{duelingddqn} and EM-GMMRL~\cite{agostini17gmmrl}. For KLSPI~\cite{xu07klspi},
OBR~\cite{onlineBRloss:16}, PPO~\cite{PPO}, a rollout buffer of $T = 4~\text{(episodes)}\times
512~\text{(state transitions)}$ is chosen. Unlike replay buffer, rollout buffer only stores data
generated by the current policy $\mu_n$. Increasing the rollout length would delay policy updates and
demand more data, making it unsuitable for online operation.
Owing to difficulty of the flappy bird task,
deep RL methods employ more sophisticated network
architectures: DQN, dueling DDQN and PPO
use Q-networks with \num{2} hidden layers of 512 neuron each, while PPO additionally uses an actor
network of similar structure. Bandwidth of Gaussian kernel for KLSPI and OBR is $\sigma_{\kappa}=1$, with
ALD threshold $\delta_{\text{ALD}}=10^{-4}$.  For the online variant of~\cref{algo:PI}, $J=1$ and $\nu_j = 0.001$ are chosen, while other configurations for deep RL methods are identical to
those of the acrobot case.

\begin{figure}[t]
    \centering
    \includegraphics[width=.95\columnwidth]{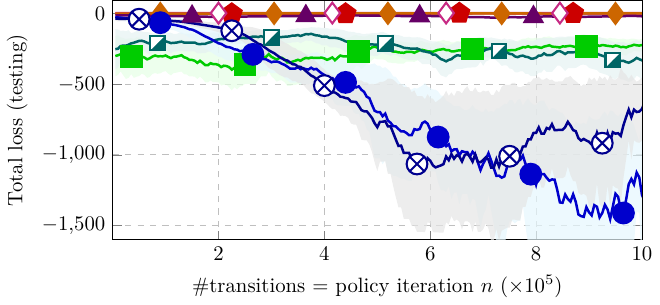}
    \caption[]{ Flappy bird benchmark. Curve markers follow ones of~\cref{fig:acrobot-perform}. Policy
      improvement is operated upon every arrival of new transition, i.e., number of transitions $=$
      policy iteration index $n$.  }
    \label{fig:flappy-perform}
\end{figure}

\Cref{fig:flappy-perform} demonstrates that~\cref{algo:PI} again outperforms all other competitors,
including deep RL schemes in the long-term.  DQN~\cite{mnih13dqn} and dueling DDQN~\cite{duelingddqn},
perform well in the early stage, but quickly settle into a sub-optimal performance as the arrivals of
streaming data; while KLSPI~\cite{xu07klspi}, OBR~\cite{onlineBRloss:16}, EM-GMMRL~\cite{agostini17gmmrl}
and PPO~\cite{PPO} behave poorly. The upward movements occasionally observed in the curves are common in
RL, particularly in complex environments, and reflect the non-stationarity of the policy iteration
process rather than divergence. Performance assessment is computed over a large volume of streaming data
(\num{1e6}) to ensure reliability. On-policy RL methods such as KLSPI, OBR, and PPO heavily rely on the
quality of data generated by the current policy. In environments with delayed action effects, such as
flappy bird, the policy may not be assessed efficiently under rollouts of limited length. Moreover,
PPO~\cite{PPO} is designed to learn from multiple rollout trajectories, making it better suited to
simulation-based training scenarios with a world model available, rather than pure online settings
where only a single trajectory can be generated through interaction with the environment.

A similar behavior of~\cref{algo:PI} with acrobot experiment is observed here, that GMM-QFs with $K=50$
slightly outperform the $K=500$ counterparts, implying the biased issue in BR minimization.
\Cref{tab:flappy-exp-noparam} highlights the parameter efficiency of GMM-QFs relative to deep RL methods,
demonstrating a highly compact configuration that maintains superior performance even for the flappy-bird
control task. However, a lean parameter structure does not automatically yield a lower computational
workload; the online learning process of GMM-QFs still incurs a significant computational footprint
(\cref{fig:flappy-grad-calculate}) due to its dependence on the state-action dimensions. To mitigate
this, future work will explore component-selective updates, akin to a model selection problem, to
effectively reduce the computational overhead.

\begin{figure}[t]
    \centering
    \includegraphics[width=.95\columnwidth]{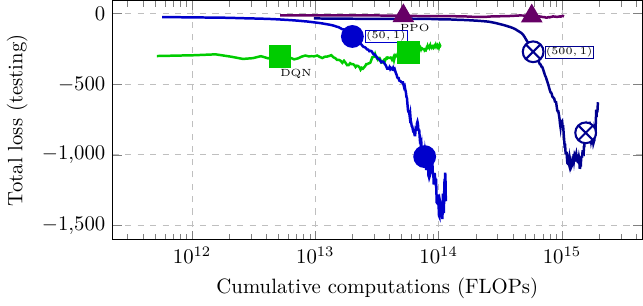}
    \caption[]{Performance of~\cref{algo:PI}, DQN~\cite{mnih13dqn}, PPO~\cite{PPO} for flappy bird
      against cumulative FLOPs. Curve markers follow those of~\cref{fig:grad-calculate}.}
      \label{fig:flappy-grad-calculate}
\end{figure}

\begin{table}[h]
    \centering
    \caption{Number of learned parameters (flappy bird)}\label{tab:flappy-exp-noparam}
    \begin{tabular}{l  r}
        \toprule
        \textbf{Methods} & \# parameters \\
        \midrule
        \Cref{algo:PI} ($K=50$) & \textbf{4600} (\num{50}
        Gaussian functions) \\
        \Cref{algo:PI} ($K=500$) & \textbf{46000} (\num{500}
        Gaussian functions) \\
        DQN~\cite{mnih13dqn} & {270338} (2 512-neuron
        layers) \\
        PPO~\cite{PPO} & {540163} (2 networks, 2
        512-neuron layers) \\
        \bottomrule
    \end{tabular}
\end{table}

\section{Conclusions}\label{sec:conclusion}

This paper introduced a novel class of Gaussian-mixture-model Q-functions (GMM-QFs) alongside a
Riemannian-optimization-based learning framework for parameter estimation, serving as an efficient
policy-evaluation step within a reinforcement learning (RL) policy-iteration scheme. The parametric
GMM-QFs possess universal approximation properties and deliver competitive representational capacity
compared to conventional deep RL methods, while requiring significantly fewer learnable parameters. This
lean configuration avoids the scalability issues common to nonparametric approaches and drastically
reduces memory footprint. Furthermore, the framework offers substantial architectural flexibility and
establishes a mathematically grounded connection between Q-function identification and Riemannian
optimization. Numerical evaluations demonstrated that GMM-QFs achieve comparable, and often superior
performance to state-of-the-art RL baselines.

While the framework may suffer from bias or overfitting due to Bellman
residual minimization, and computational overhead scales with state-action dimensionality, these
limitations outline clear potential research directions. Ongoing work focuses on improving computational
efficiency through component selection via sparsification and experience-replay strategies for robust
learning (see, e.g.,~\cite{Vu:eusipco:26}). Future extensions will incorporate state-representation
learning and dimensionality reduction techniques to enhance scalability in high-dimensional environments
and will generalize the framework to continuous action
spaces. 





\appendices
\crefalias{section}{appendix}
\crefalias{subsection}{appendix}

\section{ Proof of \cref{thm:rkhs.linear.independence} }\label{app:direct.sum.proof}

By the definition of $\Hilbert_{\vect{C}_k}^{\text{pre}}$ in \eqref{pre.gaussian.rkhs}, $Q_k = \sum\nolimits_{l=1}^{L_k}
\alpha_{kl} \mathscr{G} (\cdot \mid \vect{m}_{kl}, \vect{C}_k)$, for distinct $\{ \vect{m}_{kl} \}_{l=1}^{L_k}$, that
is, $\vect{m}_{kl} \neq \vect{m}_{kl^{\prime}}$, $\forall (l, l^{\prime}) \in \overline{1, L_k}^2$ with $l <
l^{\prime}$, for nonzero $\{ \alpha_{kl} \}_{l=1}^{L_k} \subset \Real$ and $L_k\in \IntegerPP$. As such, for any
$\vect{z}$,
\begin{alignat}{2}
  0 & {} = {} && \sum\nolimits_{k=1}^K \beta_k \sum\nolimits_{l=1}^{L_k}
  \alpha_{kl}\, \mathscr{G} ( \vect{z} \mid \vect{m}_{kl}, \vect{C}_k) \notag \\
  & = && \sum\nolimits_{k=1}^K \sum\nolimits_{l=1}^{L_k} \overbrace{\beta_k \alpha_{kl}}^{\vartheta_{kl}} \,
  \mathscr{G} ( \vect{z} \mid \vect{m}_{kl}, \vect{C}_k) \notag \\
  & = && \sum\limits_{k=1}^K \sum\limits_{l=1}^{L_k} \vartheta_{kl} \exp\big(
  -\norm{\vect{z}}^2_{\vect{C}_k^{-1}} + 2\innerp{\vect{z}}{\vect{m}_{kl}}_{\vect{C}_k^{-1}} \notag\\
  &&& \hphantom{ \sum\limits_{k=1}^K \sum\limits_{l=1}^{L_k} \vartheta_{kl} \exp\big( }\;
  -\norm{\vect{m}_{kl}}^2_{\vect{C}_k^{-1}} \big) \label{eq:origin.linear.eq} \,,
\end{alignat}
where
\begin{alignat*}{3}
  \norm{ \vect{z} }^2_{ \vect{C}_k^{-1} }
  & \coloneqq \vect{z}^{\intercal} \vect{C}_k^{-1}
  \vect{z}\,, && \norm{ \vect{m}_{kl} }^2_{ \vect{C}_k^{-1} } && \coloneqq
  \vect{m}_{kl}^{\intercal} \vect{C}_k^{-1} \vect{m}_{kl} \,,\\
  \innerp{\vect{z}}{\vect{m}_{kl}}_{\vect{C}_k^{-1}}
  & \coloneqq \vect{z}^{\intercal} \vect{C}_k^{-1}
  \vect{m}_{kl} \,. &&&&
\end{alignat*}

Define now the index set
\begin{align*}
  \mathcal{I}_0 \coloneqq \{ ( k, k^{\prime}, l, l^{\prime} ) \given \vect{C}_{k}^{-1}
  \vect{m}_{kl} \neq \vect{C}_{k^{\prime}}^{-1} \vect{m}_{k^{\prime}l^{\prime}}, k \leq k^{\prime}, l \leq l^{\prime} \}
  \,,
\end{align*}
and choose $\vect{z}_{\sharp} \in \Real^{D_z}$ such that
\begin{align}
  \vect{z}_{\sharp} \notin
  {} & {} \bigcup \nolimits_{ k < k^{\prime}} \{ \vect{z} \in \Real^{D_z} \given \vect{z}^{\intercal} (\vect{C}_{k}^{-1} -
  \vect{C}_{ k^{\prime} }^{-1} ) \vect{z} = 0 \} \notag\\
  {} & {} \cup \smashoperator[r]{ \bigcup_{ ( k, k^{\prime}, l, l^{\prime} ) \in \mathcal{I}_0 } }\, \{ \vect{z} \in
  \Real^{D_z} \given \vect{z}^{\intercal} ( \vect{C}_{k}^{-1} \vect{m}_{kl} - \vect{C}_{k^{\prime}}^{-1}
  \vect{m}_{k^{\prime} l^{\prime} } ) = 0 \} \,. \label{z.sharp.union}
\end{align}
The union in \eqref{z.sharp.union} involves a finite number of quadric hypersurfaces $\{ \vect{z} \in \Real^{D_z} \given
\vect{z}^{\intercal} (\vect{C}_{k}^{-1} - \vect{C}_{ k^{\prime} }^{-1} ) \vect{z} = 0 \}$ and hyperplanes $\{ \vect{z}
\in \Real^{D_z} \given \vect{z}^{\intercal} ( \vect{C}_{k}^{-1} \vect{m}_{kl} - \vect{C}_{k^{\prime}}^{-1}
\vect{m}_{k^{\prime} l^{\prime} } ) = 0 \}$, which are smooth $(D_z-1)$-dimensional manifolds that cross the
origin~\cite[Example~2.2.5]{RobbinSalamon:22}. Such a finite number of smooth manifolds cannot cover the whole ambient
space $\Real^{D_z}$, and a $\vect{z}_{\sharp}$ which satisfies \eqref{z.sharp.union} can always be chosen.

Let now $\vect{z} = t\vect{z}_{\sharp}$, with $t \in \RealPP$, so that \eqref{eq:origin.linear.eq} becomes
\begin{align}
  0 = \sum_{k=1}^{K} \sum_{l=1}^{L_k} \vartheta_{kl} \exp \big(
  & - t^2 \norm{\vect{z}_{\sharp}}^2_{\vect{C}_k^{-1}} + 2t\innerp{\vect{z}_{\sharp}}{\vect{m}_{kl}}_{\vect{C}_k^{-1}}
  \notag\\
  & - \norm{\vect{m}_{kl}}^2_{\vect{C}_k^{-1}} \big) \,. \label{eq:linear.eq.unroll.z.sharp}
\end{align}
Because of \eqref{z.sharp.union}, all $\{ \norm{\vect{z}_{\sharp}}_{\vect{C}_k^{-1}} \}_{k=1}^K$ are distinct, so that
it can be assumed, w.l.o.g., that $\norm{\vect{z}_{\sharp}}^2_{\vect{C}_1^{-1}} <
\norm{\vect{z}_{\sharp}}^2_{\vect{C}_k^{-1}}$, $\forall k\in \overline{2, K}$. Dividing both sides of
\eqref{eq:linear.eq.unroll.z.sharp} by $\exp(-t^2 \norm{\vect{z}_{\sharp}}_{\vect{C}_1^{-1}})$ yields
\begin{align}
  0 {} = {} & \sum \nolimits_{l=1}^{L_1} \vartheta_{1l} \exp\big( 2t
  \innerp{\vect{z}_{\sharp}}{\vect{m}_{1l}}_{\vect{C}_{1}^{-1}} - \norm{\vect{m}_{1l}}^2_{\vect{C}^{-1}_{1}} \big)
  \notag \\
  & + \sum_{k=2}^{K} \sum_{l=1}^{L_k} \vartheta_{kl} \exp \big( - t^2 (\norm{\vect{z}_{\sharp}}^2_{\vect{C}_k^{-1}} -
  \norm{\vect{z}_{\sharp}}^2_{\vect{C}_1^{-1}}) \notag \\
  & \hphantom{ + \sum_{k=2}^{K} \sum_{l=1}^{L_k} \vartheta_{kl} \exp \big( }\; + 2t
  \innerp{\vect{z}_{\sharp}}{\vect{m}_{kl}}_{\vect{C}_k^{-1}} - \norm{\vect{m}_{kl}}^2_{\vect{C}_k^{-1}} \big) \,. \label{eq:linear.eq.unroll.1}
\end{align}
Again because of \eqref{z.sharp.union}, all $\{ \innerp{\vect{z}_{\sharp}}{\vect{m}_{1l}}_{\vect{C}_{1}^{-1}}
\}_{l=1}^{L_1}$ are distinct, so that $\exists l_{\sharp} \in \overline{1, L_1}$ with
$\innerp{\vect{z}_{\sharp}}{\vect{m}_{1 l_{\sharp}}}_{\vect{C}_{1}^{-1}} >
\innerp{\vect{z}_{\sharp}}{\vect{m}_{1l}}_{\vect{C}_{1}^{-1}}$, $\forall l\in \overline{1, L_1} \setminus \{
l_{\sharp}\}$. Dividing both sides of~\eqref{eq:linear.eq.unroll.1} by
$\exp(2t\innerp{\vect{z}_{\sharp}}{\vect{m}_{1l_{\sharp}}}_{\vect{C}_{1}^{-1}})$ yields
\begin{alignat}{2}
  0 & {} = {} && \vartheta_{1l_{\sharp}} \exp\big( -\norm{\vect{m}_{1l_{\sharp}}}^2_{\vect{C}^{-1}_{1}} \big) \notag \\
  &&& + \smashoperator[r]{ \sum_{l\in \overline{1, L_1}\setminus\{ l_{\sharp} \} } } \vartheta_{1l} \exp\big(
  + 2t \innerp{\vect{z}_{\sharp}}{\vect{m}_{1l} - \vect{m}_{1l_{\sharp} }}_{\vect{C}_{1}^{-1}} \notag\\[-10pt]
  &&& \hphantom{ + \smashoperator[r]{ \sum_{l\in \overline{1, L_1}\setminus\{ l_{\sharp} \} } } \vartheta_{1l} \exp\big(
  }\; -\norm{\vect{m}_{1l}}^2_{\vect{C}^{-1}_{1}} \big) \notag\\[-10pt]
  &&& + \sum_{k=2}^{K} \sum_{l=1}^{L_k} \vartheta_{kl} \exp \big( -t^2 (\norm{\vect{z}_{\sharp}}^2_{\vect{C}_k^{-1}} -
  \norm{\vect{z}_{\sharp}}^2_{\vect{C}_1^{-1}}) \notag \\[-5pt]
  &&& \hphantom{ + \smashoperator[r]{ \sum_{l\in \overline{1, L_1}\setminus\{ l_{\sharp} \} } } \vartheta_{1l} \exp\big(
  }\; + 2t ( \innerp{\vect{z}_{\sharp}}{\vect{m}_{kl}}_{\vect{C}_k^{-1}} -
  \innerp{\vect{z}_{\sharp}}{\vect{m}_{1l_{\sharp} }}_{\vect{C}_{1}^{-1}} ) \notag \\
  &&& \hphantom{ + \smashoperator[r]{ \sum_{l\in \overline{1, L_1}\setminus\{ l_{\sharp} \} } } \vartheta_{1l} \exp\big(
  }\; - \norm{\vect{m}_{kl}}^2_{\vect{C}_k^{-1}} \big) \,. \label{eq:linear.eq.unroll.2}
\end{alignat}
It can be now readily verified that $\lim_{ t\to +\infty } \exp(\, 2t \innerp{\vect{z}_{\sharp}}{\vect{m}_{1l} -
  \vect{m}_{1l_{\sharp}}}_{\vect{C}_{1}^{-1}} \,) = \exp(-\infty) = 0$, and
\begin{alignat*}{2}
  & \lim\nolimits_{ t\to +\infty } \exp \big(\,
  && -t^2 (\norm{\vect{z}_{\sharp}}^2_{\vect{C}_k^{-1}} - \norm{\vect{z}_{\sharp}}^2_{\vect{C}_1^{-1}}) \\
  &&& + 2t ( \innerp{\vect{z}_{\sharp}}{\vect{m}_{kl}}_{\vect{C}_k^{-1}} -
  \innerp{\vect{z}_{\sharp}}{\vect{m}_{1l_{\sharp} }}_{\vect{C}_{1}^{-1}} )\, \big) \\
  &&& \negphantom{ \lim\nolimits_{ t\to +\infty } \exp \big(\, } = \exp( - \infty ) = 0 \,.
\end{alignat*}
Therefore, applying $\lim\nolimits_{ t\to +\infty }$ to both sides of \eqref{eq:linear.eq.unroll.2} yields
\begin{alignat*}{2}
  && 0 & = \vartheta_{1l_{\sharp}} \exp\big( -\norm{\vect{m}_{1l_{\sharp}}}^2_{\vect{C}^{-1}_{1}} \big) \Rightarrow 0 =
  \vartheta_{1l_{\sharp}} = \beta_1 \alpha_{1l_{\sharp}} \\
  \Rightarrow {} && 0 & = \beta_1\,,
\end{alignat*}
because $\alpha_{1l_{\sharp}}$ is nonzero. Consequently, $0 = \sum_{k=1}^K \beta_k Q_k \Rightarrow 0 = \sum_{k=2}^K
\beta_k Q_k$. Repeating the previous arguments leads to $0 = \beta_k$, $\forall k\in \overline{2, K-1}$. Finally, $0 =
\beta_K Q_K \Rightarrow 0 = \beta_K$, because $Q_K$ is nonzero. This completes the proof.

\section{Proof of \cref{thm:universal.approx}} \label{app:universal.approx.proof}

First, notice that for an arbitrarily fixed $K\in \IntegerPP$, any Q-function in $\mathcal{Q}_K$ belongs to $C(
\mathfrak{Z} )$; hence, $\cup_{K\in \IntegerPP} \mathcal{Q}_K \subset C( \mathfrak{Z} )$. Moreover, it can be readily
verified by the definition \eqref{eq:gmm-q} that $\cup_{K\in \IntegerPP} \mathcal{Q}_K$ is a linear vector
space. Notice also that $\forall (\vect{m}_1, \vect{C}_1), (\vect{m}_2, \vect{C}_2)\in \Real^{D_z} \times
\PS^{D_z}$ the point-wise multiplication
\begin{alignat}{2}
  &&& \negphantom{ {} = {} } \mathscr{G}( \vect{z} \mid \vect{m}_1, \vect{C}_1)\cdot \mathscr{G}( \vect{z} \mid
  \vect{m}_2, \vect{C}_2) \notag \\
  & {} = {} && \exp(\, -\vect{m}_1^{\intercal} \vect{C}_1^{-1} \vect{m}_1 - \vect{m}_2^{\intercal} \vect{C}_2^{-1}
  \vect{m}_2 + \vect{m}^{\intercal}(\vect{C}_1^{-1} + \vect{C}_2^{-1}) \vect{m}\, ) \notag \\
  &&& \cdot \mathscr{G}( \vect{z} \mid \vect{m}, \vect{C}) \in \cup_{K\in \IntegerPP} \mathcal{Q}_K
  \,, \label{subalgebra}
\end{alignat}
where
\begin{align*}
  \vect{m} & \coloneqq (\vect{C}_1^{-1} + \vect{C}_2^{-1})^{-1} ( \vect{C}_1^{-1}\vect{m}_1 + \vect{C}_2^{-1}\vect{m}_2) \,,
  \\
  \vect{C} & \coloneqq (\vect{C}_1^{-1} + \vect{C}_2^{-1})^{-1} \,.
\end{align*}
Result \eqref{subalgebra} together with the previous discussion render $\cup_{K\in \IntegerPP} \mathcal{Q}_K$ a
subalgebra~\cite{Conway:FuncAna:90} of $C( \mathfrak{Z} )$, which, however, does not include the unit/identity element
$1\in C( \mathfrak{Z} )$ w.r.t.\ point-wise mutliplication, defined as the constant function with value equal to $1$ on
$\mathfrak{Z}$. Nevertheless, $1$ belongs to the closure of $\cup_{K\in \IntegerPP} \mathcal{Q}_K$ w.r.t.\ the topology
induced by $\norm{\cdot}_{\infty}$, as the following discussion suggests: with $\Delta \coloneqq \sup_{ ( \vect{z},
  \vect{m} ) \in \mathfrak{Z}^2 } \norm{ \vect{z} - \vect{m} } < +\infty$, let the sequence of Gaussians $(\,
\mathscr{G}( \cdot \mid \vect{m}, k\vect{I}_{D_z} )\, )_{k\in \IntegerPP} \subset \cup_{K\in \IntegerPP} \mathcal{Q}_K$,
for some $\vect{m} \in \mathfrak{Z}$, and notice that
\begin{alignat*}{2}
  \forall \vect{z} \in \mathfrak{Z}\,, \quad
  &&& \negphantom{ {} \leq {} } \lvert 1 - \mathscr{G}( \vect{z} \mid \vect{m}, k\vect{I}_{D_z} ) \rvert = 1 -
  \mathscr{G}( \vect{z} \mid \vect{m}, k\vect{I}_{D_z} ) \\
  & {} = {} && 1 - \exp( - \norm{ \vect{z} - \vect{m} }^2 / k ) \\
  & {} \leq {} && 1 - \exp( - \Delta^2 / k ) \\
  \Rightarrow \quad &&& \negphantom{ {} \leq {} } \norm{ 1 - \mathscr{G}( \cdot \mid \vect{m}, k\vect{I}_{D_z} )
  }_{\infty} \\
  & {} = {} && \sup \nolimits_{ \vect{z}\in \mathfrak{Z} } \lvert 1 - \mathscr{G}( \vect{z} \mid \vect{m},
  k\vect{I}_{D_z} ) \rvert \\
  & {} \leq {} && 1 - \exp( - \Delta^2 / k ) \\
  \Rightarrow \quad &&& \negphantom{ {} \leq {} } \lim \nolimits_{ k\to \infty } \norm{ 1 - \mathscr{G}( \cdot \mid
    \vect{m}, k\vect{I}_{D_z} ) }_{\infty} = 0 \,.
\end{alignat*}
Furthermore, for any $\vect{z}, \vect{m}\in \mathfrak{Z}$, there always exists a Q-function $Q \in \cup_{K\in
  \IntegerPP} \mathcal{Q}_K$ which ``separates'' the points $\vect{z}$ and $\vect{m}$, i.e., $Q( \vect{z} ) \neq Q(
\vect{m} )$, and at the same time $Q( \vect{m} ) \neq 0$---choose, for example, $Q \coloneqq \mathscr{G}( \cdot \mid
\vect{m}, \vect{C} )$, for some $\vect{C}\in \PS^{D_z}$.

The preceding discussion indicates that the sufficient
conditions of the Stone-Weierstrass
theorem~\cite[Thm.~4.1]{Pinkus:05} are satisfied---the proof of which does not require the unit element $1$ to be
included in the subalgebra---thereby ensuring that the statement of \cref{thm:universal.approx.C} holds true.

With regards to the proof of \cref{thm:universal.approx.L2.Borel}, consider an arbitrarily fixed $Q \in
L_2(\mathfrak{P})$ and an arbitrarily small $\varepsilon \in \RealPP$. Under \cref{ass:compact.Z} and according to
\cite[Thm.~3.14, pp.~69]{Rudin:RealComplex:87}, there exists $Q^{\prime} \in C(\mathfrak{Z})$ s.t.\ $\norm{ Q -
  Q^{\prime} }_{ L_2 (\mathfrak{P}) } \leq \varepsilon / 2$. Furthermore, by \cref{thm:universal.approx.C} there exists
$Q^{\prime \prime}\in \cup_{K \in \IntegerPP} \mathcal{Q}_K$ s.t.\ $\norm{ Q^{\prime} - Q^{\prime \prime} }_{ \infty }
\leq \varepsilon / 2$. Hence,
\begin{align*}
  \norm{ Q - Q^{\prime \prime} }_{ L_2 (\mathfrak{P}) }
  & \leq \norm{ Q - Q^{\prime} }_{ L_2 (\mathfrak{P}) } + \norm{ Q^{\prime} - Q^{\prime \prime} }_{ L_2 (\mathfrak{P}) }
  \\
  & \leq \tfrac{\varepsilon}{2} + \left( \int_{ \mathfrak{Z} } \lvert Q^{\prime} (\vect{z}) - Q^{\prime \prime}
  (\vect{z}) \rvert^2\, \mathfrak{P}(d\vect{z}) \right)^{1/2} \\
  & \leq \tfrac{\varepsilon}{2} + \left( \norm{ Q^{\prime} - Q^{\prime \prime} }_{ \infty }^2 \int_{ \mathfrak{Z} }
  \mathfrak{P}(d\vect{z}) \right)^{1/2} \\
  & = \tfrac{\varepsilon}{2} + \norm{ Q^{\prime} - Q^{\prime \prime} }_{ \infty } \leq \tfrac{\varepsilon}{2} +
  \tfrac{\varepsilon}{2} \leq \varepsilon\,,
\end{align*}
which establishes the proof of \cref{thm:universal.approx.L2.Borel}.

The following proof of \cref{thm:universal.approx.L2.Lebesgue} extends that of~\cite[Thm.~4]{chen96ebf} to the case
where the covariance matrices of the Gaussians are not necessarily diagonal.

Assume for a contradiction that $\cup_{K\in \IntegerPP} \mathcal{Q}_K$ is not dense in $L_2( \Real^{D_z} )$. First,
recall the well-known fact that $\mathscr{G}(\cdot \given \vect{m}, \vect{C}) \in L_2( \Real^{D_z} )$, $\forall
(\vect{m}, \vect{C}) \in \Real^{D_z} \times \PS^{D_z}$. Hence, $\cup_{K\in \IntegerPP} \mathcal{Q}_K$ is a linear
subspace of $L_2( \Real^{D_z} )$. The hypothesis on the non-denseness and the Hahn-Banach theorem~\cite[Chapter III,
  Cor.~6.14]{Conway:FuncAna:90} suggest that there exists a \textit{nonzero}\/ continuous linear functional $l(\cdot)$
on $L_2( \Real^{D_z} )$ such that $l(Q) = 0$, $\forall Q \in \cup_{K\in \IntegerPP} \mathcal{Q}_K$. Consequently, by the
Riesz representation theorem~\cite[Chapter I, Cor.~3.5]{Conway:FuncAna:90}, there exists a \textit{nonzero}\/ $R(\cdot)
\in L_2( \Real^{D_z} )$ such that $\forall Q \in \cup_{K\in \IntegerPP} \mathcal{Q}_K$,
\begin{equation} \label{eq:hahn-banach-riesz}
  l(Q) = \int Q(\vect{z}) R(\vect{z})\, d\vect{z} = \innerp{Q}{R}_{L_2( \Real^{D_z} )} = 0 \,,
\end{equation}
which holds true also for the special Q-function $\mathscr{G}(\cdot \given \vect{m}, \vect{C}) \in \cup_{K\in
  \IntegerPP} \mathcal{Q}_K$, $\forall (\vect{m}, \vect{C}) \in \Real^{D_z} \times \PS^{D_z}$:
\begin{equation*}
  \int \mathscr{G}(\vect{z} \given \vect{m}, \vect{C}) R(\vect{z})\, d\vect{z} = 0 \,.
\end{equation*}

Now, consider \textit{any}\/ function $W(\cdot)$ from the set of all (Schwartz) rapidly decreasing functions $S(
\Real^{D_z} ) \subset L_2( \Real^{D_z} )$~\cite[Chapter~X, Def.~6.3]{Conway:FuncAna:90} to obtain
\begin{equation}\label{eq:pre-fourier}
  \int W(\vect{m}) \Bigl( \int \mathscr{G}(\vect{z} \given \vect{m}, \vect{C}) R(\vect{z})\, d\vect{z} \Bigr) d\vect{m}
  = 0 \,.
\end{equation}
Define $\vect{x} \coloneqq \vect{C}^{-1/2}(\vect{z} - \vect{m})$. Then, $(\vect{z}-\vect{m})^\intercal \vect{C}^{-1}
(\vect{z} - \vect{m}) = \norm{\vect{x}}^2$. Notice that $\mathscr{G}(\cdot \given \vect{0}, \vect{I})$ is also a rapidly
decreasing function, and that by dividing the nonzero $\lvert \det\vect{C}^{1/2} \rvert$ from both sides, Fubini's
theorem~\cite{Conway:FuncAna:90} and \eqref{eq:pre-fourier} yields
\begin{align}
  & \int W(\vect{m}) \int \mathscr{G}(\vect{x} \given \vect{0}, \vect{I})\, R(\vect{C}^{1/2}
  \vect{x} + \vect{m})\, d\vect{x}\ d\vect{m}\notag \\
  & = \int \mathscr{G}(\vect{x} \given \vect{0}, \vect{I}) \Big( \int W(\vect{m})\, R(\vect{C}^{1/2}\vect{x} +
  \vect{m})\, d\vect{m}\, \Big) d\vect{x} \notag \\
  & = \int \mathscr{G}(\vect{x} \given \vect{0}, \vect{I}) \Big( \int W(\vect{m})\, \check{R}(-\vect{C}^{1/2}\vect{x} -
  \vect{m})\, d\vect{m} \Big) d\vect{x} \notag \\
  & = \int \mathscr{G}(\vect{x} \given \vect{0}, \vect{I})\ (W * \check{R}) (-\vect{C}^{1/2} \vect{x})\, d\vect{x} =
  0 \,, \label{eq:pre-fourier-simplify}
\end{align}
where $\check{R}(\cdot) \coloneqq R(-\cdot)$ and $(W*\check{R})(\cdot)$ denotes the convolution of $W$ and
$\check{R}$~\cite{Conway:FuncAna:90}.

With $\mathcal{F}[\cdot]$ denoting the $D_z$-dimensional Fourier transform, let $\mathcal{G}(\vectgr{\omega}) \coloneqq
\mathcal{F}[ \mathscr{G}(\cdot \given \vect{0}, \vect{I}) ](\vectgr{\omega})$, which is well known to be isotropic,
$\mathcal{W}(\vectgr{\omega}) \coloneqq \mathcal{F}[ W(\cdot) ](\vectgr{\omega})$, $\check{ \mathcal{R} }
(\vectgr{\omega}) \coloneqq \mathcal{F}[\check{R}(\cdot)](\vectgr{\omega})$. Recall also that the Fourier transform is
unitary, i.e., $\mathcal{F}^* = \mathcal{F}^{-1}$, where $\mathcal{F}^*$ is the adjoint operator of
$\mathcal{F}$~\cite[Chapter~X, Thm.~6.17]{Conway:FuncAna:90}. Now, by \eqref{eq:pre-fourier-simplify},
\begin{align}
  0 & = \innerp{ \mathscr{G}( \cdot \given \vect{0}, \vect{I}) }{ (W * \check{R}) (-\vect{C}^{1/2} \cdot ) }_{L_2(
    \Real^{D_z} )} \notag\\
  & = \innerp{ \mathcal{F}^{-1} \mathcal{F} [ \mathscr{G}( \cdot \given \vect{0}, \vect{I}) ] }{ (W * \check{R})
    (-\vect{C}^{1/2} \cdot ) }_{L_2 ( \Real^{D_z} ) } \notag\\
  & = \innerp{ \mathcal{F} [ \mathscr{G}( \cdot \given \vect{0}, \vect{I}) ] }{ \mathcal{F}^{-*} [ (W * \check{R})
    (-\vect{C}^{1/2} \cdot ) ] }_{L_2 ( \Real^{D_z} ) } \notag \\
  & = \innerp{ \mathcal{F} [ \mathscr{G}( \cdot \given \vect{0}, \vect{I}) ] }{ \mathcal{F} [ (W * \check{R})
    (-\vect{C}^{1/2} \cdot ) ] }_{L_2 ( \Real^{D_z} ) } \notag \\
  & = \int \mathcal{G}(\vectgr{\omega})\, \mathcal{W}(-\vect{C}^{-1/2}\vectgr{\omega})\,
  \check{ \mathcal{R} } (-\vect{C}^{-1/2}\vectgr{\omega})\, d\vectgr{\omega} \notag \\
  & = \int \mathcal{G}(-\vect{C}^{1/2} \vectgr{\omega}')\, \mathcal{W}(\vectgr{\omega}')\,
  \check{ \mathcal{R} } (\vectgr{\omega}')\, d\vectgr{\omega}' \quad (\vectgr{\omega}' \coloneqq
  -\vect{C}^{-1/2}\vectgr{\omega}) \notag \\
  & = \innerp{ \mathcal{W}( \cdot) }{ \mathcal{G}(\vect{C}^{1/2} \cdot )\, \check{ \mathcal{R} } ( \cdot) }_{L_2 (
    \Real^{D_z} ) } \notag\\
  & = \innerp{ \mathcal{F}[ W ](\cdot) }{ \mathcal{G}(\vect{C}^{1/2} \cdot )\, \check{ \mathcal{R} } ( \cdot) }_{L_2 (
    \Real^{D_z} ) } \notag \\
  & = \innerp{ W }{ \mathcal{F}^* [ \mathcal{G}(\vect{C}^{1/2} \cdot )\, \check{ \mathcal{R} } ( \cdot) ] }_{L_2 (
    \Real^{D_z} ) } \notag \\
  & = \innerp{ W }{ \mathcal{F}^{-1} [ \mathcal{G}(\vect{C}^{1/2} \cdot )\, \check{ \mathcal{R} } ( \cdot) ] }_{L_2 (
    \Real^{D_z} ) } \,, \label{eq:pos-fourier-L2}
\end{align}
where $\mathcal{G}(-\vect{C}^{1/2} \cdot ) = \mathcal{G}(\vect{C}^{1/2} \cdot )$ because $\mathcal{G}(\cdot)$ is
isotropic, and $\mathcal{G}(\vect{C}^{1/2} \cdot )\, \check{ \mathcal{R} } (\cdot) \in L_2 ( \Real^{D_z} )$ because
$\mathcal{G}(\vect{C}^{1/2} \cdot )$ is bounded and $\check{ \mathcal{R} } (\cdot)\in L_2 ( \Real^{D_z} )$. Note that
\eqref{eq:pos-fourier-L2} holds true for any $W \in S( \Real^{D_z} )$.

Owing to the fact that $S( \Real^{D_z} )$ is dense in $L_2 ( \Real^{D_z} )$~\cite[Chapter~X,
  Prop.~6.5]{Conway:FuncAna:90}, consider a sequence $(W_m)_{m\in \IntegerP} \in S( \Real^{D_z} )$ such that
$\lim\nolimits_{m\to \infty} W_m = \mathcal{F}^{-1} [ \mathcal{G} (\vect{C}^{1/2} \cdot )\, \check{ \mathcal{R}}
  (\cdot)]$ in the strong topology of $L_2 ( \Real^{D_z} )$. It can be therefore deduced by \eqref{eq:pos-fourier-L2}
that
\begin{align*}
  & \norm{ \mathcal{G}(\vect{C}^{1/2} \cdot )\, \check{ \mathcal{R} } (\cdot) }^2_{ L_2 ( \Real^{D_z} ) } \\
  & = \innerp{ \mathcal{G}(\vect{C}^{1/2} \cdot )\, \check{ \mathcal{R} } (\cdot) }{ \mathcal{G}(\vect{C}^{1/2} \cdot
    )\, \check{ \mathcal{R} } (\cdot) }_{L_2 ( \Real^{D_z} ) } \\
  & = \innerp{ \mathcal{F} \mathcal{F}^{-1} [\mathcal{G}(\vect{C}^{1/2} \cdot )\, \check{ \mathcal{R} } (\cdot)] }{
    \mathcal{G}(\vect{C}^{1/2} \cdot )\, \check{ \mathcal{R} } (\cdot) }_{L_2 ( \Real^{D_z} ) } \\
  & = \innerp{ \mathcal{F}^{-1} [\mathcal{G}(\vect{C}^{1/2} \cdot )\, \check{ \mathcal{R} } (\cdot)] }{
    \mathcal{F}^{-1} [ \mathcal{G}(\vect{C}^{1/2} \cdot )\, \check{ \mathcal{R} } (\cdot) ] }_{L_2 ( \Real^{D_z} )} \\
  & = \innerp{ \lim\nolimits_{m\to \infty} W_m }{ \mathcal{F}^{-1} [\mathcal{G}(\vect{C}^{1/2} \cdot
    )\, \check{ \mathcal{R} } (\cdot) ] }_{L_2 ( \Real^{D_z} )} \\
  & = \lim\nolimits_{m\to \infty} \innerp{ W_m }{ \mathcal{F}^{-1} [ \mathcal{G}(\vect{C}^{1/2} \cdot
    )\, \check{ \mathcal{R} } (\cdot) ] }_{L_2 ( \Real^{D_z} ) } = 0 \,.
\end{align*}
Consequently, $\mathcal{G}(\vect{C}^{1/2} \cdot )\, \check{ \mathcal{R} } (\cdot) = 0$, almost everywhere (a.e.) with
respect to the Lebesgue measure $\Rightarrow \check{ \mathcal{R} } = 0$, a.e., because $\mathcal{G}(\vect{C}^{1/2} \cdot
)$ is nonzero everywhere, and thus $\check{R} = \mathcal{F}^{-1}[ \check{ \mathcal{R} } ] = 0$, a.e. Hence, $R (\cdot) =
\check{R} (-\cdot) = 0$, a.e., which contradicts the initial hypothesis that $R$ is nonzero,
asserting~\cref{thm:universal.approx.L2.Lebesgue}.

\section{Proof of \cref{prop:inf.Q_K}}\label{app:prop:inf.Q_K}

For an arbitrarily fixed $\epsilon \in \RealPP$, there exists $Q \in C( \mathfrak{Z} )$ s.t.\
\begin{align*}
  \hat{\mathcal{L}}_{\mu}[T] ( Q ) \leq \inf_{ Q^{\prime} \in C( \mathfrak{Z} ) } \hat{\mathcal{L}}_{\mu}[T] ( Q^{\prime} ) +
  \epsilon \,.
\end{align*}
Define $\forall t\in \overline{1,T}$ the function
\begin{align*}
  \mathscr{l}_t
  & \colon \Real^2 \to \RealP \colon ( q_t^{\prime}, q_t ) \mapsto \mathscr{l}_t ( q_t^{\prime}, q_t ) \coloneqq ( g_t +
  \alpha q_t^{\prime} - q_t )^2 \,,
\end{align*}
which is continuous everywhere in $\Real^2$, and hence continuous at $(\, Q(\vect{z}_t^{\prime}), Q(\vect{z}_t)\,
)$. Thus, there exists a sufficiently small $\delta_t\in \RealPP$ s.t.\
\begin{alignat}{2}
  &&& \negphantom{ \mathscr{l}_t ( q_t^{\prime}, q_t ) } \norm{ ( q_t^{\prime}, q_t ) - (\, Q(\vect{z}_t^{\prime}),
    Q(\vect{z}_t)\, ) }_{ \Real^2 } \leq \delta_t \notag \\
  \Rightarrow {} & \mathscr{l}_t ( q_t^{\prime}, q_t ) && \leq \mathscr{l}_t (\, Q(\vect{z}_t^{\prime}), Q(\vect{z}_t)\,
  ) + \epsilon \notag \\
  &&& = [ g_t + \alpha Q(\vect{z}_t^{\prime}) - Q(\vect{z}_t) ]^2 + \epsilon \,. \label{loss.t.continuous}
\end{alignat}

Define now $\delta \coloneqq \min_{ t\in \overline{1,T} } \delta_t$. By \cref{thm:universal.approx.C}, there exists
$\tilde{Q}\in \cup_{K\in \IntegerPP} \mathcal{Q}_K$ s.t.\ $\norm{ \tilde{Q} - Q }_{\infty} \leq \delta / \sqrt{2}$, and
consequently, $\forall t\in \overline{1,T}$,
\begin{align*}
  \lvert \tilde{Q}(\vect{z}_t^{\prime}) - Q(\vect{z}_t^{\prime}) \rvert & \leq \delta / \sqrt{2} \,, \qquad \lvert
  \tilde{Q}(\vect{z}_t) - Q(\vect{z}_t) \rvert \leq \delta / \sqrt{2} \,.
\end{align*}
Then, the tuple $(q_t^{\prime}, q_t) \coloneqq (\, \tilde{Q}(\vect{z}_t^{\prime}), \tilde{Q}(\vect{z}_t)\, )$ satisfies
the sufficient condition in \eqref{loss.t.continuous}, and hence,
\begin{align*}
  [ g_t + \alpha \tilde{Q}(\vect{z}_t^{\prime}) - \tilde{Q}(\vect{z}_t) ]^2
  & = \mathscr{l}_t (\, \tilde{Q}(\vect{z}_t^{\prime}), \tilde{Q}(\vect{z}_t)\,) \\
  & \leq [ g_t + \alpha Q(\vect{z}_t^{\prime}) - Q(\vect{z}_t) ]^2 + \epsilon \,.
\end{align*}
Summing up all previous terms $\forall t\in \overline{1,T}$, and dividing finally by $T$ yields
\begin{align*}
  & \inf\nolimits_{ \tilde{Q}^{\prime} \in \cup_{K\in \IntegerPP} \mathcal{Q}_K } \hat{\mathcal{L}}_{\mu}[T] (
  \tilde{Q}^{\prime} ) \\
  & \leq \hat{\mathcal{L}}_{\mu}[T] ( \tilde{Q} ) = \tfrac{1}{T} \sum \nolimits_{t=1}^T [ g_t + \alpha
    \tilde{Q}(\vect{z}_t^{\prime}) - \tilde{Q}(\vect{z}_t) ]^2 \\
  & \leq \tfrac{1}{T} \sum \nolimits_{t=1}^T [ g_t + \alpha Q(\vect{z}_t^{\prime}) - Q(\vect{z}_t) ]^2 + \epsilon \\
  & = \hat{\mathcal{L}}_{\mu}[T] ( Q ) + \epsilon \leq \inf \nolimits_{ Q^{\prime} \in C( \mathfrak{Z} ) }
  \hat{\mathcal{L}}_{\mu}[T] ( Q^{\prime} ) + 2 \epsilon \,.
\end{align*}
Because $\epsilon$ was chosen arbitrarily, the previous discussion leads to $\inf\nolimits_{ Q \in \cup_{K\in
    \IntegerPP} \mathcal{Q}_K } \hat{\mathcal{L}}_{\mu}[T] ( Q ) \leq \inf \nolimits_{ Q \in C( \mathfrak{Z} ) }
\hat{\mathcal{L}}_{\mu}[T] ( Q )$. Moreover, because $\cup_{K\in \IntegerPP} \mathcal{Q}_K \subset C( \mathfrak{Z} )$, $\inf
\nolimits_{ Q \in C( \mathfrak{Z} )} \hat{\mathcal{L}}_{\mu}[T] ( Q ) \leq \inf\nolimits_{ Q \in \cup_{K\in \IntegerPP}
  \mathcal{Q}_K } \hat{\mathcal{L}}_{\mu}[T] ( Q )$, which completes the proof.

\section{Derivation of Gradients in \eqref{all.gradients}}\label{dL.deriv}

To simplify notation in the following proofs, $\mathscr{G}(\cdot \mid \vect{m}_k, \vect{C}_k)$ will be written as
$\mathscr{G}_k(\cdot)$ or $\mathscr{G}_{\vect{C}_k} (\cdot)$; the hyperparameter $T$ will be suppressed in the loss
notation, and the superscript $(j)$ will likewise be omitted. The derivation of \eqref{eq:dL.dxi} is straightforward and
therefore skipped.

\subsection{Derivation of \eqref{eq:dL.dm}}\label{dL.dm.deriv}

The standard chain rule of differentiation suggests that
\begin{alignat*}{2}
  &&& \negphantom{ {} = {} } \frac{ \partial \hat{\mathcal{L}}_{\mu} }{ \partial \vect{m}_k } \\
  & {} = {} && \tfrac{1}{T} \sum\nolimits_{t=1}^{T} \frac{\partial}{\partial \vect{m}_k} \Bigl [ g_t +
    \sum\nolimits_{i=1}^{K} \xi_i \left(\, \alpha \mathscr{G}_i (\vect{z}^\prime_t) - \mathscr{G}_i (\vect{z}_t)\,
    \right) \Bigr ]^2 \\
  & = && \tfrac{1}{T}\sum\nolimits_{t=1}^{T} 2\delta_t \xi_k \frac{\partial}{\partial \vect{m}_k} \left(\,
  \alpha \mathscr{G}_k (\vect{z}_t^\prime) - \mathscr{G}_k (\vect{z}_t)\, \right) \\
  & = && \tfrac{1}{T}\sum\nolimits_{t=1}^{T} 2\delta_t \xi_k \big[ 2\alpha \mathscr{G}_k(\vect{z}^\prime_t)
    \cdot \vect{C}_k^{-1} (\vect{z}^\prime_t - \vect{m}_k) \\
    &&& \hphantom{ \tfrac{1}{T}\sum\nolimits_{t=1}^{T} 2\delta_t \xi_k \big[ } - 2 \mathscr{G}_k(\vect{z}_t)
      \cdot \vect{C}_k^{-1} (\vect{z}_t - \vect{m}_k) \big] \\
  & = && \tfrac{1}{T}\sum\nolimits_{t=1}^{T} 4\delta_t \xi_k \vect{C}_k^{-1} \vect{d}_{tk} \,,
\end{alignat*}
which establishes \eqref{eq:dL.dm}.

\subsection{Derivation of \eqref{eq:dL.dC.ai}}\label{dL.dC.F.deriv}

The (partial) derivative of the loss $\hat{\mathcal{L}}_{\mu}$ with respect to $\vect{C}_k$, at the point
$\vectgr{\Omega}$ and along an arbitrarily fixed direction $\vectgr{\Gamma} \in T_{\vect{C}_k} \PS^{D_z} =
\mathbb{S}^{D_z}$, together with the chain rule of differentiation and the fact that the derivative of the inverse
matrix function $\inv\colon \PS^{D_z} \to \PS^{D_z} \colon \vect{C} \mapsto \inv( \vect{C} ) \coloneqq \vect{C}^{-1}$
along $\vectgr{\Gamma}$ is $\Df \inv( \vect{C} ) [ \vectgr{\Gamma} ]= -\vect{C}^{-1} \vectgr{\Gamma}
\vect{C}^{-1}$~\cite[Appendix~A.5]{Absil:OptimManifolds:08}, yields
\begin{alignat*}{2}
  &&& \negphantom{ {} = {} } \Df_{ \vect{C}_k }\! \hat{\mathcal{L}}_{\mu} ( \vectgr{\Omega} ) [ \vectgr{\Gamma} ] \\
  & {} = {} && \tfrac{1}{T} \sum_{t=1}^{T} \Df_{ \vect{C}_k } \Bigl( \Bigl [ g_t +
    \sum\nolimits_{i=1}^{K} \xi_i \left(\, \alpha \mathscr{G}_i (\vect{z}^\prime_t) - \mathscr{G}_i (\vect{z}_t)\,
    \right) \Bigr ]^2 \Bigr) [ \vectgr{\Gamma} ] \\
  & {} = {} && \tfrac{1}{T}\sum_{t=1}^{T} 2\delta_t \xi_k \Bigl [ - \alpha \mathscr{G}_k(\vect{z}^\prime_t)
    (\vect{z}^\prime_t - \vect{m}_k)^\intercal \Df\inv( \vect{C}_k )[ \vectgr{\Gamma} ] (\vect{z}^\prime_t -
    \vect{m}_k) \\
    &&& \hphantom{ \tfrac{1}{T}\sum_{t=1}^{T} 2\delta_t \xi_k \Bigl [ } + \mathscr{G}_k(\vect{z}_t) (\vect{z}_t -
      \vect{m}_k)^\intercal \Df\inv( \vect{C}_k )[ \vectgr{\Gamma} ] (\vect{z}_t - \vect{m}_k) \Bigr ] \\
  & = && \tfrac{1}{T} \sum_{t=1}^{T} 2\delta_t \xi_k \Bigl [ \alpha \mathscr{G}_k(\vect{z}^\prime_t)
    (\vect{z}^\prime_t - \vect{m}_k)^\intercal \vect{C}_k^{-1} \vectgr{\Gamma} \vect{C}_k^{-1} (\vect{z}^\prime_t -
      \vect{m}_k) \\
    &&& \hphantom{ \tfrac{1}{T} \sum_{t=1}^{T} 2\delta_t \xi_k \Bigl [ } - \mathscr{G}_k(\vect{z}_t) (\vect{z}_t -
        \vect{m}_k)^\intercal \vect{C}_k^{-1} \vectgr{\Gamma} \vect{C}_k^{-1} (\vect{z}_t - \vect{m}_k) \Bigr ] \\
  & = && \tfrac{1}{T} \sum_{t=1}^{T} 2\delta_t \xi_k \trace \Bigl( \vect{C}_k^{-1} \Bigl[ \alpha
        \mathscr{G}_k(\vect{z}_t^\prime) (\vect{z}^\prime_t - \vect{m}_k) (\vect{z}_t^\prime - \vect{m}_k)^\intercal \\
    &&& \hphantom{ \tfrac{1}{T} \sum_{t=1}^{T} 2\delta_t \xi_k \Bigl [ } - \mathscr{G}_k(\vect{z}_t) (\vect{z}_t -
          \vect{m}_k)(\vect{z}_t - \vect{m}_k)^\intercal \Bigr] \vect{C}_k^{-1} \vectgr{\Gamma} \Bigr) \\
  & = && \tfrac{1}{T} \sum_{t=1}^{T} 2\delta_t \xi_k \trace \left( \vect{C}_k^{-1} \vect{B}_{tk} \vect{C}_k^{-1}
        \vectgr{\Gamma} \right) \,.
\end{alignat*}

Recall now the following identity connecting the gradient with the derivative~\cite[Appendix~A.5]{Absil:OptimManifolds:08}:
\begin{alignat}{3}
  && \Df_{ \vect{C}_k }\! \hat{\mathcal{L}}_{\mu} ( \vectgr{\Omega} ) [ \vectgr{\Gamma} ] & {} = {} && \innerp{
    \tfrac{\partial \hat{\mathcal{L}}_{\mu} }{ \partial\vect{C}_k } }{ \vectgr{\Gamma} }_{\vect{C}_k} \notag \\
  \Rightarrow {} && \tfrac{1}{T} \sum_{t=1}^{T} 2\delta_t \xi_k \trace \left( \vect{C}_k^{-1} \vect{B}_{tk}
  \vect{C}_k^{-1} \vectgr{\Gamma} \right) & {} = {} && \innerp{ \tfrac{\partial \hat{\mathcal{L}}_{\mu} }{
      \partial\vect{C}_k } }{ \vectgr{\Gamma} }_{\vect{C}_k} \,, \label{gradient.derivative}
\end{alignat}
where $\innerp{\cdot}{\cdot}_{\vect{C}_k}$ stands for the adopted Riemannian metric.

In the case of the AffI metric, \eqref{gradient.derivative} yields
\begin{align*}
  \tfrac{1}{T} \sum_{t=1}^{T} 2\delta_t \xi_k \trace \left( \vect{C}_k^{-1} \vect{B}_{tk} \vect{C}_k^{-1}
  \vectgr{\Gamma} \right) = \trace ( \vect{C}_k^{-1} \tfrac{\partial \hat{\mathcal{L}}_{\mu}}{\partial \vect{C}_k}
  \vect{C}_k^{-1} \vectgr{\Gamma} ) \,,
\end{align*}
and because $\vectgr{\Gamma}$ was chosen arbitrarily,
\begin{align*}
  \frac{\partial \hat{\mathcal{L}}_{\mu}}{\partial \vect{C}_k} = \tfrac{1}{T} \sum\nolimits_{t=1}^{T} 2\delta_t \xi_k
  \vect{B}_{tk} \,,
\end{align*}
which establishes \eqref{eq:dL.dC.ai}.


\section{Proof of~\cref{prop:Bellman.contraction.L2}}\label{prop:Bellman.contraction.L2.proof}

To conserve space, the proof that the classical Bellman mappings take $L_2(\mathfrak{P})$ into itself is omitted, as it
can be obtained straightforwardly by applying elementary inequalities, such as
\begin{align*}
  \norm{ \Bellman^{\diamond}_{\mu} Q }^2_{ L_2(\mathfrak{P}) }
  {} \leq {} & 2 \norm{ g }^2_{ L_2(\mathfrak{P}) } \\
  & + 2 \alpha^2 \mathbb{E}\{\, \lvert\, \mathbb{E}_{\bm{s}^{\prime} \given \bm{z}} \{\, \lvert\, Q(\, \zeta(
  \bm{s}^{\prime}, \mu(\bm{s}^{\prime}))\, ) \, \}\, \rvert^2\, \} \,,
\end{align*}
for example, to establish $\norm{ \Bellman^{\diamond}_{\mu} Q }^2_{ L_2(\mathfrak{P}) } < +\infty$, $\forall Q\in
L_2(\mathfrak{P})$, and by mirroring arguments from the subsequent proof.

First, for any subset $\mathcal{E}$ of $\Lambda$ within the probability space $(\Lambda, \mathfrak{F}, \mathbb{P})$,
define the indicator function $I_{ \mathcal{E} } \colon \Lambda \to \{ 0, 1\}$ with $I_{ \mathcal{E} } (\lambda) = 1$,
if $\lambda \in \mathcal{E}$, while $I_{ \mathcal{E} } (\lambda) = 0$, if $\lambda \notin \mathcal{E}$. Then, owing to
well-known properties of the conditional expectation~\cite[\S9.7]{Williams:ProMargin:91}, such as Jensen's inequality,
it can be verified that $\forall Q_1, Q_2 \in L_2(\mathfrak{P})$ and with $\bm{z}^{\prime} \coloneqq \zeta
(\bm{s}^{\prime}, \bm{a}^{\prime})$,
\begin{align*}
  & \norm{\Bellman^{\diamond}_{\mu} Q_1 - \Bellman^{\diamond}_{\mu} Q_2}^2_{ L_2(\mathfrak{P}) } \\
  & = \alpha^2 \mathbb{E}\{\, \lvert\, \mathbb{E}_{\bm{s}^{\prime} \given \bm{z}} \{\, \lvert\, Q_1(\, \zeta(
  \bm{s}^{\prime}, \mu(\bm{s}^{\prime}))\, ) - Q_2(\, \zeta( \bm{s}^{\prime}, \mu(\bm{s}^{\prime}))\, ) \, \}\,
  \rvert^2\, \} \\
  & \leq \alpha^2 \mathbb{E}\, \mathbb{E}_{\bm{s}^{\prime} \given \bm{z}} \{\, \lvert\, Q_1(\, \zeta( \bm{s}^{\prime},
  \mu(\bm{s}^{\prime}))\, ) - Q_2(\, \zeta( \bm{s}^{\prime}, \mu(\bm{s}^{\prime}))\, ) \, \rvert^2\, \} \\
  & = \alpha^2 \mathbb{E} \{\, \lvert\, Q_1(\, \zeta( \bm{s}^{\prime},
  \mu(\bm{s}^{\prime}))\, ) - Q_2(\, \zeta( \bm{s}^{\prime}, \mu(\bm{s}^{\prime}))\, ) \, \rvert^2\, \} \\
  & = \alpha^2 \mathbb{E} \{\, \lvert\, Q_1(\, \zeta(\bm{s}^{\prime}, \bm{a}^{\prime})\, ) - Q_2(\,
  \zeta(\bm{s}^{\prime}, \bm{a}^{\prime})\, )\, \rvert^2\, I_{ \{ \bm{a}^{\prime} = \mu ( \bm{s}^{\prime} ) \} } \, \} \\
  & \leq \alpha^2 \mathbb{E} \{\, \lvert\, Q_1(\, \zeta(\bm{s}^{\prime}, \bm{a}^{\prime})\, ) - Q_2(\,
  \zeta(\bm{s}^{\prime}, \bm{a}^{\prime})\, ) \, \rvert^2\, \} \\
  & = \alpha^2 \mathbb{E} \{\, \lvert\, Q_1(\bm{z}^{\prime}) - Q_2(\bm{z}^{\prime}) \, \rvert^2\, \} = \alpha^2
  \norm{Q_1 - Q_2}^2_{ L_2(\mathfrak{P}) } \,,
\end{align*}
which establishes~\eqref{eq:Bellman.mu.contract}.

The following fact, taken from~\cite[Lemma~11]{akiyama24nonparametric}, is adapted to the current setting.

\begin{fact}[{\protect\cite[Lemma~11]{akiyama24nonparametric}}]\label{fact:inf.Qznext}
  For any $Q_1, Q_2 \in L_2( \mathfrak{P} )$ and any $\vect{s} \in \mathfrak{S}$, there exists a $\delta(\vect{s}) \neq
  0$ s.t.\ for any $\epsilon(\vect{s}) \in (0, \sqrt{ \lvert \delta (\vect{s}) \rvert })$, an $\vect{a}_{\sharp} (
  \vect{s}) \in \mathfrak{A}$ can be always selected with
  \begin{align}
    & \lvert\, \inf\nolimits_{ \bar{\vect{a}} \in \mathfrak{A}} Q_1(\, \zeta(\vect{s}, \bar{\vect{a}} )\, ) - \inf
    \nolimits_{ \bar{\vect{a}} \in \mathfrak{A}} Q_2(\, \zeta(\vect{s}, \bar{\vect{a}} )\, )\, \rvert^2 \notag \\
    & \leq \lvert\, Q_1(\, \zeta(\vect{s}, \vect{a}_{\sharp} ( \vect{s} ) )\, ) - Q_2(\, \zeta(\vect{s},
    \vect{a}_{\sharp}(\vect{s}) )\, )\, \rvert^2 + \epsilon(\vect{s}) \,.  \label{eq:inf.Qznext}
  \end{align}
\end{fact}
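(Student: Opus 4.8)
The plan is to reduce the claim to a one-dimensional comparison of the two infima over the action variable and then to exhibit the witnessing action $\vect{a}_{\sharp}(\vect{s})$ as a near-minimizer of whichever of the two functions carries the smaller infimum. Fix $\vect{s}$ and abbreviate $q_i(\bar{\vect{a}}) \coloneqq Q_i(\zeta(\vect{s}, \bar{\vect{a}}))$ and $I_i \coloneqq \inf_{\bar{\vect{a}} \in \mathfrak{A}} q_i(\bar{\vect{a}})$ for $i \in \overline{1,2}$. Since both sides of \eqref{eq:inf.Qznext} are symmetric under interchanging $Q_1$ and $Q_2$, I would assume without loss of generality that $I_1 \geq I_2$ and set $\delta(\vect{s}) \coloneqq I_1 - I_2 \geq 0$, so that $\lvert \delta(\vect{s}) \rvert = \delta(\vect{s})$. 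The case $\delta(\vect{s}) = 0$ is trivial, since then the left-hand side of \eqref{eq:inf.Qznext} vanishes and any $\vect{a}_{\sharp}(\vect{s})$ works; hence I would treat the nontrivial case $\delta(\vect{s}) \neq 0$, i.e.\ $\delta(\vect{s}) > 0$, which is exactly the one asserted by the existence of a nonzero $\delta(\vect{s})$.

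First I would fix $\epsilon(\vect{s}) \in (0, \sqrt{\delta(\vect{s})})$, pick a tolerance $\eta \in (0, \delta(\vect{s})]$ to be calibrated below, and choose $\vect{a}_{\sharp}(\vect{s}) \in \mathfrak{A}$ as an $\eta$-approximate minimizer of $q_2$, i.e.\ $q_2(\vect{a}_{\sharp}(\vect{s})) \leq I_2 + \eta$; such an action exists by the definition of the infimum, and since $\mathfrak{A}$ is finite the infimum is in fact attained, so one may even take $\eta = 0$. Combining this with the trivial lower bound $q_1(\vect{a}_{\sharp}(\vect{s})) \geq I_1$ gives
\begin{align*}
  q_1(\vect{a}_{\sharp}(\vect{s})) - q_2(\vect{a}_{\sharp}(\vect{s})) \geq I_1 - I_2 - \eta = \delta(\vect{s}) - \eta \geq 0 \,,
\end{align*}
so that, squaring the nonnegative quantity $\delta(\vect{s}) - \eta$, one obtains $\lvert q_1(\vect{a}_{\sharp}(\vect{s})) - q_2(\vect{a}_{\sharp}(\vect{s})) \rvert^2 \geq (\delta(\vect{s}) - \eta)^2$.

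It then remains to calibrate $\eta$ so that $(\delta(\vect{s}) - \eta)^2 \geq \delta(\vect{s})^2 - \epsilon(\vect{s}) = \lvert I_1 - I_2 \rvert^2 - \epsilon(\vect{s})$, which is precisely \eqref{eq:inf.Qznext}. If $\epsilon(\vect{s}) \geq \delta(\vect{s})^2$ the right-hand side is nonpositive and any $\eta \in (0, \delta(\vect{s})]$ suffices; otherwise I would take $\eta \coloneqq \delta(\vect{s}) - \sqrt{\delta(\vect{s})^2 - \epsilon(\vect{s})} \in (0, \delta(\vect{s}))$, for which $(\delta(\vect{s}) - \eta)^2 = \delta(\vect{s})^2 - \epsilon(\vect{s})$ holds with equality, and the prescribed range $\epsilon(\vect{s}) < \sqrt{\delta(\vect{s})}$ guarantees this choice is admissible. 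The only genuinely delicate point is this last calibration together with the book-keeping of the edge cases (equal infima, and the sign of $\delta(\vect{s})^2 - \epsilon(\vect{s})$); everything else is a direct consequence of the elementary estimate $\lvert I_1 - I_2 \rvert \leq \sup_{\bar{\vect{a}}} \lvert q_1(\bar{\vect{a}}) - q_2(\bar{\vect{a}}) \rvert$, of which this Fact is merely a quantitative, action-pointwise refinement.
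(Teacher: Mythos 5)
The paper never proves this Fact: it is imported, with notation adapted, from \cite[Lemma~11]{akiyama24nonparametric}, so there is no internal proof to compare against, and your proposal should be judged as a self-contained derivation. As such, it is correct. The core of your argument---assume $I_1 \geq I_2$ by symmetry, pick $\vect{a}_{\sharp}(\vect{s})$ as an $\eta$-approximate minimizer of $q_2$, deduce $q_1(\vect{a}_{\sharp}) - q_2(\vect{a}_{\sharp}) \geq I_1 - I_2 - \eta \geq 0$, and square---is sound, and since the paper's $\mathfrak{A}$ is finite the infimum is attained, so $\eta = 0$ already yields $\lvert q_1(\vect{a}_{\sharp}) - q_2(\vect{a}_{\sharp})\rvert^2 \geq \lvert I_1 - I_2\rvert^2$, i.e., \eqref{eq:inf.Qznext} with no $\epsilon(\vect{s})$ slack at all; your $\eta$-calibration only matters for non-attained infima and is carried out correctly there. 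Two small repairs are needed in the write-up. First, when $I_1 = I_2$ your definition $\delta(\vect{s}) \coloneqq I_1 - I_2$ yields $\delta(\vect{s}) = 0$, which the statement explicitly forbids; you must say that in this degenerate case you set $\delta(\vect{s})$ to an arbitrary nonzero value, whereupon the left-hand side of \eqref{eq:inf.Qznext} vanishes and the inequality is trivial for every admissible $\epsilon(\vect{s})$ and every action---your text gestures at this but never exhibits the required nonzero $\delta(\vect{s})$. Second, your closing remark that the range $\epsilon(\vect{s}) < \sqrt{\delta(\vect{s})}$ is what makes $\eta = \delta(\vect{s}) - \sqrt{\delta(\vect{s})^2 - \epsilon(\vect{s})}$ admissible is off: admissibility only requires $\epsilon(\vect{s}) < \delta(\vect{s})^2$, which is exactly the case split you already made, so the upper bound $\sqrt{\lvert\delta(\vect{s})\rvert}$ plays no role anywhere in your argument. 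That last observation is actually a strength: your proof establishes the conclusion for every $\epsilon(\vect{s}) > 0$, showing that the peculiar quantifier structure inherited from the cited lemma is unnecessary in the finite-action setting of this paper.
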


According to \cref{fact:inf.Qznext}, for arbitrarily fixed $Q_1, Q_2 \in L_2( \mathfrak{P} )$, $\varepsilon \in
\RealPP$, and $\vect{s}\in \mathfrak{S}$, choose any $\epsilon(\vect{s}) \in (0, \min \{ \varepsilon / \alpha^2, \sqrt{
  \lvert \delta (\vect{s}) \rvert } \})$ together with its associated $\vect{a}_{\sharp}(\vect{s})$ which satisfies
\eqref{eq:inf.Qznext}. Then, by applying Jensen's inequality again,
\begin{alignat*}{2}
  &&& \negphantom{ {} \leq {} } \norm{\Bellman^{\diamond} Q_1 - \Bellman^{\diamond}Q_2}^2_{ L_2(\mathfrak{P}) } \\
  & {} \leq {} && \alpha^2 \mathbb{E}\, \mathbb{E}_{ \bm{s}^{\prime} \given \bm{z} } \{\, \lvert\, \inf_{
    \bar{\vect{a}} \in \mathfrak{A}} Q_1(\, z( \bm{s}^{\prime}, \bar{\vect{a}} )\, ) - \inf_{ \bar{\vect{a}} \in
    \mathfrak{A}} Q_2(\, z( \bm{s}^{\prime}, \bar{\vect{a}} )\, )\, \rvert^2\, \} \\
  & = && \alpha^2 \mathbb{E} \{\, \lvert\, \inf_{ \bar{\vect{a}} \in \mathfrak{A}} Q_1(\, z( \bm{s}^{\prime},
  \bar{\vect{a}} )\, ) - \inf_{ \bar{\vect{a}} \in \mathfrak{A}} Q_2(\, z( \bm{s}^{\prime}, \bar{\vect{a}} )\, )\,
  \rvert^2\, \} \\
  & \leq && \alpha^2 \mathbb{E} \{\, \lvert\, Q_1(\, z(\bm{s}^{\prime}, \bm{a}_{\sharp} ( \bm{s}^{\prime} ) )\, ) -
  Q_2(\, z(\bm{s}^{\prime}, \bm{a}_{\sharp} (\bm{s}^{\prime}) )\, )\, \rvert^2 \,\} \\
  &&& + \alpha^2 \mathbb{E} \{ \epsilon(\bm{s}^{\prime}) \} \\
  & = && \alpha^2 \mathbb{E} \{\, \lvert\, Q_1(\, z(\bm{s}^{\prime}, \bm{a}^{\prime} )\, ) - Q_2(\,
  z(\bm{s}^{\prime}, \bm{a}^{\prime} )\, )\, \rvert^2 I_{ \{ \bm{a}^{\prime} = \bm{a}_{\sharp} ( \bm{s}^{\prime} ) \} }
  \,\} \\
  &&& + \alpha^2 \mathbb{E} \{ \epsilon(\bm{s}^{\prime}) \} \\
  & \leq && \alpha^2 \mathbb{E} \{\, \lvert\, Q_1( \bm{z}^{\prime} ) - Q_2( \bm{z}^{\prime} )\, \rvert^2 \,\} +
  \alpha^2 \mathbb{E} \{ \tfrac{\varepsilon}{\alpha^2} \} \\
  & = && \alpha^2 \norm{ Q_1 - Q_2 }^2_{ L_2 (\mathfrak{P}) } + \varepsilon \,,
\end{alignat*}
which establishes \eqref{eq:Bellman.contract} because $\varepsilon$ is an arbitrarily chosen positive real number.

\section{Proof of \cref{lemma:ensemble.loss}}\label{proof:lemma:ensemble.loss}

According to basic properties of the conditional expectation~\cite[\S9.7]{Williams:ProMargin:91},
\begin{alignat}{2}
  \mathcal{L}_{\mu} (Q)
  & {} = {} && \alpha^2\, \mathbb{E}\{\, [\,  Q(\bm{z}^{\prime}) - \mathbb{E}_{ \bm{s}^{\prime} \given \bm{z} } \{
    Q(\bm{z}^{\prime}) \}\, ]^2\, \} \notag \\
  &&& + \mathbb{E} \{\, [\, g(\bm{z}) + \alpha \mathbb{E}_{ \bm{s}^{\prime} \given \bm{z} } \{ Q(\bm{z}^{\prime})
  \} - Q(\bm{z})\, ]^2\, \} \notag \\
  &&& + 2 \alpha \mathbb{E} \{\, [\, g(\bm{z}) + \alpha \mathbb{E}_{ \bm{s}^{\prime} \given \bm{z} } \{
    Q(\bm{z}^{\prime}) \} - Q(\bm{z})\, ] \notag \\
  &&& \hphantom{ + 2 \alpha\, \mathbb{E}\, \mathbb{E}_{ \bm{s}^{\prime} \given \bm{z} } \{\, } \cdot [\,
    Q(\bm{z}^{\prime}) - \mathbb{E}_{ \bm{s}^{\prime} \given \bm{z} } \{ Q(\bm{z}^{\prime}) \}\, ]\, \} \notag \\
  & {} = {} && \alpha^2\, \mathbb{E}\, \{\, \overbrace{ \mathbb{E}_{ \bm{s}^{\prime} \given \bm{z} } \{\, [\,
      Q(\bm{z}^{\prime}) - \mathbb{E}_{ \bm{s}^{\prime} \given \bm{z} } \{ Q(\bm{z}^{\prime}) \}\, ]^2\, \} }^{
    \mathbb{V}_{ \bm{s}^{\prime} \given \bm{z} } \{ Q( \bm{z}^{\prime} ) \} }\, \} \notag \\
  &&& + \mathbb{E} \{\, [\, g(\bm{z}) + \alpha \mathbb{E}_{ \bm{s}^{\prime} \given \bm{z} } \{ Q(\bm{z}^{\prime})
  \} - Q(\bm{z})\, ]^2\, \} \notag \\
  &&& + 2 \alpha\, \mathbb{E}\, \mathbb{E}_{ \bm{s}^{\prime} \given \bm{z} } \{\, [\, g(\bm{z}) + \alpha \mathbb{E}_{
      \bm{s}^{\prime} \given \bm{z} } \{ Q(\bm{z}^{\prime}) \} - Q(\bm{z})\, ] \notag \\
  &&& \hphantom{ + 2 \alpha\, \mathbb{E}\, \mathbb{E}_{ \bm{s}^{\prime} \given \bm{z} } \{\, } \cdot [\,
    Q(\bm{z}^{\prime}) - \mathbb{E}_{ \bm{s}^{\prime} \given \bm{z} } \{ Q(\bm{z}^{\prime}) \}\, ]\, \} \notag \\
  & = && \alpha^2\, \mathbb{E} \{\, \mathbb{V}_{ \bm{z}^{\prime} \given \bm{z} } \{ Q( \bm{z}^{\prime} ) \} \, \} +
  \norm{Q - \Bellman^{\diamond}_{\mu} Q}^2_{ L_2(\mathfrak{P}) } \notag \\
  &&& + 2 \alpha\, \mathbb{E} \{\, [\, g(\bm{z}) + \alpha \mathbb{E}_{
      \bm{s}^{\prime} \given \bm{z} } \{ Q(\bm{z}^{\prime}) \} - Q(\bm{z})\, ] \notag \\
  &&& \hphantom{ + 2 \alpha\, \mathbb{E}\, \mathbb{E}_{ \bm{s}^{\prime} \given \bm{z} } \{\, } \cdot \mathbb{E}_{
    \bm{s}^{\prime} \given \bm{z} } \{ Q(\bm{z}^{\prime}) - \mathbb{E}_{ \bm{s}^{\prime} \given \bm{z} } \{
  Q(\bm{z}^{\prime}) \} \} \, \} \notag \\
  & = && \alpha^2\, \mathbb{E} \{\, \mathbb{V}_{ \bm{s}^{\prime} \given \bm{z} } \{ Q( \bm{z}^{\prime} ) \} \, \} +
  \norm{Q - \Bellman^{\diamond}_{\mu} Q}^2_{ L_2(\mathfrak{P}) } \,. \notag
\end{alignat}
This establishes \cref{lemma:ensemble.loss}.

\section{Proof of~\cref{prop:limsups}}\label{proof:prop:limsups}

\subsection{Proof of~\cref{prop:Qn.Qmun}}\label{prop:Qn.Qmun.proof}


Consider any sample point from the nonempty event $\mathcal{E}_{ \text{ev} }$ of
\cref{ass:data.from.event}. Then, by \cref{lemma:ensemble.loss}, with $\bm{z}^{\prime} = \zeta(
\bm{s}^{\prime}, \mu_n( \bm{s}^{\prime} ) )$, and for all sufficient large $n$,
\begin{subequations}
  \begin{alignat}{2}
    &&& \negphantom{ {} \leq {} } \norm{Q_n - \Bellman^{\diamond}_{\mu_n} Q_n}^2_{ L_2(\mathfrak{P}) } \notag \\
    & = && \mathcal{L}_{\mu_n} (Q_n) - \alpha^2 \mathbb{E} \{\, \mathbb{V}_{ \bm{s}^{\prime} \given \bm{z} } \{
    Q_n( \bm{z}^{\prime} ) \}\, \} \leq \mathcal{L}_{\mu_n}(Q_n) \notag \\
    & {} \leq {} && \hat{\mathcal{L}}_{\mu_n} [T] (Q_n) + \delta_{ {\sharp} } \label{aux.1} \\
    & {} \leq {} && \hat{\mathcal{L}}_{\mu_n} [T] (Q^*_{n})
      + \delta_{\text{c}} + \delta_{ {\sharp} } 
      \leq \Delta + \delta_{\text{c}} + \delta_{
          {\sharp} } \,,
      \label{aux.2}
  \end{alignat}
\end{subequations}
where \cref{ass:data.from.event} was used in \eqref{aux.1}
and \cref{ass:critical.bound,ass:value.bounded} were used
in~\eqref{aux.2}. 
Further, by \cref{prop:Bellman.contraction.L2},
\begin{alignat*}{2}
  &&& \negphantom{ {} \leq {} } \norm{Q_n - Q^{\diamond}_{\mu_n}}_{ L_2(\mathfrak{P}) } \\
  & {} \leq {} && \norm{Q_n - \Bellman^{\diamond}_{\mu_n} Q_n}_{ L_2(\mathfrak{P}) } + \norm{\Bellman^{\diamond}_{\mu_n}
    Q_n - \Bellman^{\diamond}_{\mu_n} Q^{\diamond}_{\mu_n} }_{ L_2(\mathfrak{P}) } \\
    & \leq && ( \Delta + \delta_{\text{c}} + \delta_{{\sharp}} )^{1/2} +  \alpha \norm{Q_n - Q^{\diamond}_{\mu_n}}_{ L_2(\mathfrak{P}) } \,,
\end{alignat*}
and thus, 
\begin{align*}
  \norm{Q_n - Q^{\diamond}_{\mu_n}}_{ L_2(\mathfrak{P}) } \leq \tfrac{ \sqrt{
      \Delta + \delta_{\text{c}} + \delta_{{\sharp}}
  } }{ 1 - \alpha }\,,
\end{align*}
for all sufficiently large $n$, which establishes~\cref{prop:Qn.Qmun}.

\subsection{Proof of~\cref{thm:limsup.Qn}}\label{app:limsup.Qn.proof}

The following auxiliary lemma is first in order.

\begin{lemma}\label{lemma:diff.nplus.n}
  For all sufficiently large $n$,
  \begin{alignat*}{2}
    \norm{Q^{\diamond}_{\mu_{n+1}} - Q^{\diamond}}_{L_2 (\mathfrak{P}) }
    & {} \leq {} && \alpha \norm{Q^{\diamond}_{\mu_n} - Q^{\diamond}}_{ L_2(\mathfrak{P}) } \\
    &&& + \alpha (2 \Delta_1 + \Delta_2) \,.
  \end{alignat*}
\end{lemma}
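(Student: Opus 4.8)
The plan is to exploit the fact that, by the policy-improvement rule~\eqref{eq:policy-impr}, the policy $\mu_{n+1}$ is \emph{greedy} with respect to $Q_n$. Since $\mu_{n+1}(\vect{s}^{\prime}) = \arg\min_{\vect{a}^{\prime} \in \mathfrak{A}} Q_n(\zeta(\vect{s}^{\prime}, \vect{a}^{\prime}))$ for every $\vect{s}^{\prime} \in \mathfrak{S}$, one has $\min_{\vect{a}^{\prime} \in \mathfrak{A}} Q_n(\zeta(\vect{s}^{\prime}, \vect{a}^{\prime})) = Q_n(\zeta(\vect{s}^{\prime}, \mu_{n+1}(\vect{s}^{\prime})))$, and comparing the definitions~\eqref{Bellman.standard.mu} and~\eqref{Bellman.standard} yields the pointwise identity $\Bellman^{\diamond}_{\mu_{n+1}} Q_n = \Bellman^{\diamond} Q_n$. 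This identity is the linchpin: it bridges the policy-specific mapping $\Bellman^{\diamond}_{\mu_{n+1}}$, whose fixed point is $Q^{\diamond}_{\mu_{n+1}}$, and the greedy mapping $\Bellman^{\diamond}$, whose fixed point is $Q^{\diamond}$.

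With this in hand, I would invoke the fixed-point relations $Q^{\diamond}_{\mu_{n+1}} = \Bellman^{\diamond}_{\mu_{n+1}} Q^{\diamond}_{\mu_{n+1}}$ and $Q^{\diamond} = \Bellman^{\diamond} Q^{\diamond}$, both guaranteed by~\cref{prop:Bellman.contraction.L2}, and interpolate through $Q_n$:
\begin{align*}
  \norm{Q^{\diamond}_{\mu_{n+1}} - Q^{\diamond}}_{L_2(\mathfrak{P})}
  & = \norm{\Bellman^{\diamond}_{\mu_{n+1}} Q^{\diamond}_{\mu_{n+1}} - \Bellman^{\diamond} Q^{\diamond}}_{L_2(\mathfrak{P})} \\
  & \leq \norm{\Bellman^{\diamond}_{\mu_{n+1}} Q^{\diamond}_{\mu_{n+1}} - \Bellman^{\diamond}_{\mu_{n+1}} Q_n}_{L_2(\mathfrak{P})} + \norm{\Bellman^{\diamond}_{\mu_{n+1}} Q_n - \Bellman^{\diamond} Q^{\diamond}}_{L_2(\mathfrak{P})} \,.
\end{align*}
The first term is at most $\alpha \norm{Q^{\diamond}_{\mu_{n+1}} - Q_n}_{L_2(\mathfrak{P})}$ by the contraction property of $\Bellman^{\diamond}_{\mu_{n+1}}$. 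For the second term I would substitute the greedy identity $\Bellman^{\diamond}_{\mu_{n+1}} Q_n = \Bellman^{\diamond} Q_n$ and then apply the contraction of $\Bellman^{\diamond}$ to bound it by $\alpha \norm{Q_n - Q^{\diamond}}_{L_2(\mathfrak{P})}$, yielding
\begin{align*}
  \norm{Q^{\diamond}_{\mu_{n+1}} - Q^{\diamond}}_{L_2(\mathfrak{P})} \leq \alpha \norm{Q^{\diamond}_{\mu_{n+1}} - Q_n}_{L_2(\mathfrak{P})} + \alpha \norm{Q_n - Q^{\diamond}}_{L_2(\mathfrak{P})} \,.
\end{align*}

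It then remains to split each of the two norms with the triangle inequality so the established constants surface. Using $\norm{Q^{\diamond}_{\mu_{n+1}} - Q_n} \le \norm{Q^{\diamond}_{\mu_{n+1}} - Q^{\diamond}_{\mu_n}} + \norm{Q^{\diamond}_{\mu_n} - Q_n}$ and $\norm{Q_n - Q^{\diamond}} \le \norm{Q_n - Q^{\diamond}_{\mu_n}} + \norm{Q^{\diamond}_{\mu_n} - Q^{\diamond}}$, I would bound $\norm{Q_n - Q^{\diamond}_{\mu_n}}_{L_2(\mathfrak{P})}$ by $\Delta_1$ via~\cref{prop:Qn.Qmun} and $\norm{Q^{\diamond}_{\mu_{n+1}} - Q^{\diamond}_{\mu_n}}_{L_2(\mathfrak{P})}$ by $\Delta_2$ via~\cref{ass:classic.Bellman.n.n}, both valid for sufficiently large $n$. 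Collecting the terms produces $\alpha \norm{Q^{\diamond}_{\mu_n} - Q^{\diamond}}_{L_2(\mathfrak{P})} + \alpha(2\Delta_1 + \Delta_2)$, exactly the asserted bound.

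The one genuinely nontrivial ingredient is the greedy identity $\Bellman^{\diamond}_{\mu_{n+1}} Q_n = \Bellman^{\diamond} Q_n$; once it is secured, the remainder is a routine telescoping argument combining the two contraction estimates of~\cref{prop:Bellman.contraction.L2} with the triangle inequality. The only point requiring care is that~\cref{prop:Qn.Qmun} furnishes a $\limsup$ bound, so strictly one only has $\norm{Q_n - Q^{\diamond}_{\mu_n}}_{L_2(\mathfrak{P})} \le \Delta_1 + \epsilon$ for every $\epsilon \in \RealPP$ and all $n$ beyond a threshold (and similarly for~\cref{ass:classic.Bellman.n.n}); I would therefore carry an arbitrarily small slack $\epsilon$ through the estimates and let it be absorbed when the lemma is subsequently fed into the parent~\cref{thm:limsup.Qn}, which takes a further $\limsup$ and drives the slack to zero.
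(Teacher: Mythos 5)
Your proof is correct and follows essentially the same route as the paper's: the greedy identity $\Bellman^{\diamond}_{\mu_{n+1}} Q_n = \Bellman^{\diamond} Q_n$, the two contraction estimates of \cref{prop:Bellman.contraction.L2}, and triangle-inequality splits through $Q_n$ and $Q^{\diamond}_{\mu_n}$ that surface $\Delta_1$ and $\Delta_2$---only the bookkeeping of the splits differs cosmetically. Your concern about the $\limsup$ in \cref{prop:Qn.Qmun} is also moot, since the paper's proof of that proposition actually establishes the pointwise bound $\norm{Q_n - Q^{\diamond}_{\mu_n}}_{L_2(\mathfrak{P})} \leq \Delta_1$ for every $n$, so no $\epsilon$-slack needs to be carried.
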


\begin{proof}
  Note that under the greedy policy improvement $\mu_{n+1} (\vect{s}) \coloneqq \arg \min \nolimits_{ \vect{a} \in
    \mathfrak{A}} Q_n(\, \zeta(\vect{s}, \vect{a}) \,) \Rightarrow Q_n(\, \zeta(\vect{s}, \mu_{n+1}(\vect{s})) \,) =
  \min_{ \vect{a} \in \mathfrak{A}} Q_n(\, \zeta(\vect{s}, \vect{a}) \,)$, so that by \eqref{Bellman.maps.standard},
  \begin{equation}
    \Bellman^{\diamond} Q_n = \Bellman^{\diamond}_{\mu_{n+1}} Q_n \,. \label{eq:Bellman.nplus.Qn}
  \end{equation}
  Then, the claim of \cref{lemma:diff.nplus.n} can be established as follows,
  \begin{subequations}
    \begin{alignat}{2}
      &&& \negphantom{ {} \leq {} } \norm{Q^{\diamond}_{\mu_{n+1}} - Q^{\diamond}}_{ L_2(\mathfrak{P}) } \notag \\
      & {} \leq {} && \norm{Q^{\diamond}_{\mu_{n+1}} - \Bellman^{\diamond} Q_n}_{ L_2(\mathfrak{P}) } +
      \norm{\Bellman^{\diamond} Q_n - Q^{\diamond}}_{ L_2(\mathfrak{P}) } \notag \\
      & = && \norm{Q^{\diamond}_{\mu_{n+1}} - \Bellman^{\diamond} Q_n}_{ L_2(\mathfrak{P}) } + \norm{\Bellman^{\diamond}
        Q_n
        - \Bellman^{\diamond} Q^{\diamond}}_{ L_2(\mathfrak{P}) } \notag\\
      & \leq && \norm{Q^{\diamond}_{\mu_{n+1}} - \Bellman^{\diamond} Q_n}_{ L_2(\mathfrak{P}) } + \alpha \norm{Q_n -
        Q^{\diamond}}_{ L_2(\mathfrak{P}) } \label{aux.II.1} \\
      & \leq && \norm{Q^{\diamond}_{\mu_{n+1}} - \Bellman^{\diamond} Q_n}_{ L_2(\mathfrak{P}) } + \alpha \norm{Q_n -
        Q^{\diamond}_{\mu_n}}_{ L_2(\mathfrak{P}) } \notag \\
      &&& + \alpha \norm{Q^{\diamond}_{\mu_n} - Q^{\diamond}}_{ L_2(\mathfrak{P}) } \notag \\
      & \leq && \alpha\norm{Q_{\mu_n}^{\diamond} - Q^{\diamond}}_{ L_2(\mathfrak{P}) } + \alpha \Delta_1 +
      \norm{Q^{\diamond}_{\mu_{n+1}} - \Bellman^{\diamond} Q_n}_{ L_2(\mathfrak{P}) } \label{aux.II.2} \\
      & \leq && \alpha\norm{Q_{\mu_n}^{\diamond} - Q^{\diamond}}_{ L_2(\mathfrak{P}) } + \alpha \Delta_1 \notag \\
      &&& + \norm{Q^{\diamond}_{\mu_{n+1}} - \Bellman^{\diamond}_{\mu_{n+1}} Q_{\mu_n}^{\diamond} }_{ L_2(\mathfrak{P}) }
      \notag \\
      &&& + \norm{\Bellman^{\diamond}_{\mu_{n+1}} Q^{\diamond}_{\mu_n} - \Bellman^{\diamond}_{\mu_{n+1}} Q_n}_{
        L_2(\mathfrak{P}) } \notag \\
      &&& + \norm{\Bellman^{\diamond}_{\mu_{n+1}} Q_n - \Bellman^{\diamond} Q_n}_{ L_2(\mathfrak{P}) } \notag \\
      & = && \alpha\norm{Q_{\mu_n}^{\diamond} - Q^{\diamond}}_{ L_2(\mathfrak{P}) } + \alpha \Delta_1 \notag \\
      &&& + \norm{ \Bellman^{\diamond}_{\mu_{n+1}} Q^{\diamond}_{\mu_{n+1}} - \Bellman^{\diamond}_{\mu_{n+1}}
        Q_{\mu_n}^{\diamond} }_{ L_2(\mathfrak{P}) } \notag \\
      &&& + \norm{\Bellman^{\diamond}_{\mu_{n+1}} Q^{\diamond}_{\mu_n} - \Bellman^{\diamond}_{\mu_{n+1}} Q_n}_{
        L_2(\mathfrak{P}) } \notag \\
      &&& + \norm{\Bellman^{\diamond}_{\mu_{n+1}} Q_n - \Bellman^{\diamond} Q_n}_{ L_2(\mathfrak{P}) } \notag \\
      & \leq && \alpha \norm{Q^{\diamond}_{\mu_n} - Q^{\diamond}}_{ L_2(\mathfrak{P}) } + \alpha \Delta_1 +
      \alpha \norm{ Q^{\diamond}_{\mu_{n+1}} - Q_{\mu_n}^{\diamond} }_{ L_2(\mathfrak{P}) } \notag \\
      &&& + \alpha \norm{Q^{\diamond}_{\mu_n} - Q_n}_{ L_2(\mathfrak{P}) } \label{aux.II.3} \\
      & \leq && \alpha \norm{Q^{\diamond}_{\mu_n} - Q^{\diamond}}_{ L_2(\mathfrak{P}) } + \alpha (2 \Delta_1 +
      \Delta_2) \label{aux.II.4}\,,
    \end{alignat}
  \end{subequations}
  where \cref{prop:Bellman.contraction.L2} was used in \eqref{aux.II.1} and \eqref{aux.II.3}, \cref{prop:Qn.Qmun} in
  \eqref{aux.II.2} and \eqref{aux.II.4}, while \eqref{eq:Bellman.nplus.Qn} and \cref{ass:classic.Bellman.n.n} were
  utilized in \eqref{aux.II.4}.
\end{proof}

For all sufficiently large $n$, application of~\cref{lemma:diff.nplus.n} for an arbitrarily fixed $I \in
\IntegerPP$ number of times yields
\begin{align*}
  & \norm{Q^{\diamond}_{\mu_{n + I}} - Q^{\diamond}}_{ L_2(\mathfrak{P}) } \\
  & \leq \alpha^I \norm{Q^{\diamond}_{\mu_{n}} - Q^{\diamond}}_{ L_2(\mathfrak{P}) } + \sum\nolimits_{i=0}^{I-1}
  \alpha^{i+1} ( 2 \Delta_1 + \Delta_2 ) \\
  & \leq \alpha^I \norm{Q^{\diamond}_{\mu_{n}} - Q^{\diamond}}_{ L_2(\mathfrak{P}) } + \underbrace{ \tfrac{ \alpha }{1 -
      \alpha} ( 2 \Delta_1 + \Delta_2 ) }_{\Delta^{\prime} } \,.
\end{align*}
Because of $\alpha \in [0, 1)$,
  \begin{align*}
    & \limsup_{n \to \infty}\, \norm{Q^{\diamond}_{\mu_{n}} - Q^{\diamond}}_{ L_2(\mathfrak{P}) } \\
    & = \limsup_{I \to \infty}\, \norm{Q^{\diamond}_{\mu_{n + I}} - Q^{\diamond}}_{ L_2(\mathfrak{P}) } \leq
    \Delta^{\prime} \,.
\end{align*}
Now, the triangle inequality and \cref{prop:Qn.Qmun} suggest
\begin{align*}
  & \norm{Q_n - Q^{\diamond}}_{ L_2(\mathfrak{P}) } \\
  & \leq \norm{Q_n - Q^{\diamond}_{\mu_n}}_{ L_2(\mathfrak{P}) } + \norm{Q^{\diamond}_{\mu_{n}} - Q^{\diamond}}_{
    L_2(\mathfrak{P}) } \\
  & \leq \Delta_1 + \norm{Q^{\diamond}_{\mu_{n}} - Q^{\diamond}}_{ L_2(\mathfrak{P}) } \,.
\end{align*}
An application of $\limsup\nolimits_{n \to \infty}$ to the previous inequality establishes~\cref{thm:limsup.Qn}.

\section{Computational Complexity and FLOP Accounting}\label{app:flops}

Computational cost via FLOP accounting is measured as follows.  \Cref{algo:PI} is performed on
$\mathfrak{M}_K \coloneqq \Real^{K} \times \Real^{K \times D_z} \times (\PS^{D_z})^K$.

The dominant computational cost arises from the Riemannian update on $K$ manifolds
$\PS^{D_z}$. Specially, the retraction via exponential maps involves in exponential evaluations. In case
of symmetric matrices $\vect{C} \in \PS^{D_z}$, matrix exponential is computed via eigendecomposition,
which requires approximately $10 D^3_z / 3$ FLOPs.  Additionally, matrix multiplications associated with
the exponential mapping contribute approximately $6 D_z^3$ FLOPs. As a result, every update on the
covariance matrix components requires $28D_z^3/3$ FLOPs per iteration.

The Euclidean gradients computations associated with $\vectgr{\xi} \in \Real^K$ and $\vect{M} \coloneqq
[\vect{m}_1, \dots, \vect{m}_K] \in \Real^{D_z \times K}$ require $2 K^2$ and $2 K^2 D_z$ FLOPs,
respectively.

Moreover, all of the above computations are operated within~\cref{algo:armijo} within $J$ iterations, a
pre-defined $M_\text{a}$ line-search budget, and $T$ samples. Thus, the total FLOPs inflicted per policy
iteration is
\[
\text{FLOPs} = JM_\text{a}T (2 K^2 + 2 K^2 D_z + \frac{28}{3}KD_z^3) \,.
\]

\section{Additional Numerical Tests}
\subsection{Inverted pendulum}\label{subsec:inverted-pendulum}
\begin{figure}[ht]
    \centering
    \includegraphics[width=.35\columnwidth]{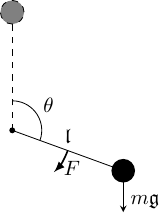}
    \caption[]{Inverted pendulum. The dynamics were given by $\dot{\theta}=\omega$ and
          $m\mathfrak{l}^2\dot{\omega} = -\mu_f\omega + m\mathfrak{g}\mathfrak{l} \sin\theta +
          \vect{a}$. The physical parameters were
          $m=1~\text{kg},~\mathfrak{l}=1~\text{m},~\mathfrak{g}=9.8~\text{m/s}^2,~\mu_f=0.01$ and force
          $\vect{a} \in \mathfrak{A}$. The time interval for simulation is $0.05~\text{s}$.
    }\label{fig:pendulum}
\end{figure}

This environment concerns the task of swinging a pendulum from its lowest position to the upright one
using a limited set of available torques~\cite{doya00rl}. The optimal policy is nontrivial to obtain
because, given the torque constraints and the effect of gravity, the controller must swing the pendulum
back and forth until sufficient kinetic energy is accumulated to reach the desired upright position
(\cref{fig:pendulum}).

The state $\vect{s} \coloneqq [\theta, \dot{\theta}]^{\intercal}$, where $\theta\in [-\pi, \pi]$ is the
angular position ($\theta = 0$ corresponds to the upright position), and $\dot{\theta}\in [-4,
  4]\text{s}^{-1}$ is the angular velocity. The action space is the set of torques $\mathfrak{A}
\coloneqq \Set{-5, -3, 0, 3, 5}\text{N}$. The one-step loss is defined as $g(\zeta(\vect{s}, \vect{a}))
\coloneqq 0$, if $\theta = 0$, and $g(\zeta(\vect{s}, \vect{a})) \coloneqq 1$, if $\theta \neq 0$.

To collect data samples $\mathcal{D}_{\mu_n}[T]$ in~\cref{algo:PI}, the pendulum starts from the
lowest position and explores a number of actions under the current policy $\mu_n$. This exploration is called an
episode, and per iteration $n$ in ~\cref{algo:PI}, data $\mathcal{D}_{\mu_n}[T]$ with $T \coloneqq (
\text{number of episodes}) \times (\text{number of actions}) = 20 \times 70 = 1400$ are collected.
KLSPI~\cite{xu07klspi} and OBR~\cite{onlineBRloss:16} use the Gaussian kernel with bandwidth
$\sigma_\kappa = 2$, while their ALD threshold is $\delta_{\textnormal{ALD}}=0.01$. KLSPI and OBR need $T
= 5000$ to reach their ``optimal'' performance for the task at hand. DQN~\cite{mnih13dqn} and dueling
DDQN~\cite{duelingddqn} use a fully-connected neural network for Q-functions with \num{2} hidden layers
of size \num{128}, with batch size of \num{64}, and a replay buffer (experience data) of size \num{1e5},
while PPO~\cite{PPO} uses an additional neural network with the same configuration for the policy. For
EM-GMMRL~\cite{agostini17gmmrl}, $T = 500$, while its threshold to add new Gaussian functions in its
dictionary is $10^{-4}$. 

\begin{figure}[ht]
    \centering
    \includegraphics[width=.9\columnwidth]{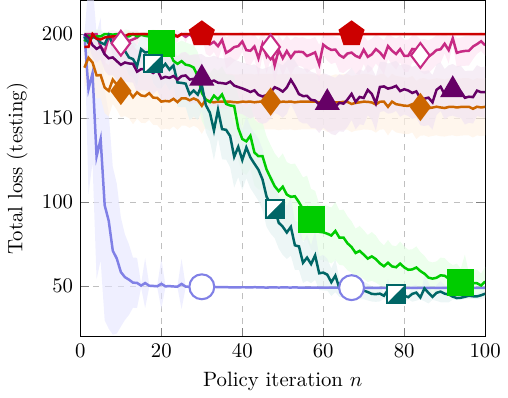}
    \caption[]{ Inverted-pendulum dataset. Curve markers: \cref{algo:PI} (AffInv) with
      $K=5$:\quicksymbol{proposed!50!white}{mark=*, mark options={line width=1pt, fill=white}},
      KLSPI~\cite{xu07klspi}:\klspi, OBR~\cite{onlineBRloss:16}:\obr, DQN~\cite{mnih13dqn}:\dqn, Dueling
      DDQN~\cite{duelingddqn}:\quicksymbol{teal!80!black}{mark=halfsquare*, mark options={rotate=45, line
          width=1pt}}, PPO~\cite{PPO}:\ppo, EM-GMMRL~\cite{agostini17gmmrl}:\emgmm.  }
    \label{fig:pendulum-exp}
\end{figure}

As shown in~\cref{fig:pendulum-exp}, the proposed~\cref{algo:PI} achieves the best performance without
using a replay buffer (experience data), in contrast to DQN~\cite{mnih13dqn} and dueling
DDQN~\cite{duelingddqn}, which require large replay buffers and exhibit slower learning and higher
variance than GMM-QFs. PPO~\cite{PPO} underperforms within the first \num{100} iterations. It is well
known that PPO, as well as other policy-based methods, converges more slowly and requires more data than
value-based approaches. KLSPI~\cite{xu07klspi} attains suboptimal performance, while
OBR~\cite{onlineBRloss:16} and EM-GMMRL~\cite{agostini17gmmrl} fail to achieve satisfactory
results. Notably, KLSPI and OBR are provided with more exploration data than~\cref{algo:PI}. Finally, EM
algorithms are known to be sensitive to initialization~\cite{Figueiredo:mixtures:02}, a fact that was
also verified in the present tests through the use of several initialization strategies.

\subsection{Mountain car}\label{subsec:mountain-car}
\begin{figure}[ht]
    \centering
    \includegraphics[width=.6\columnwidth]{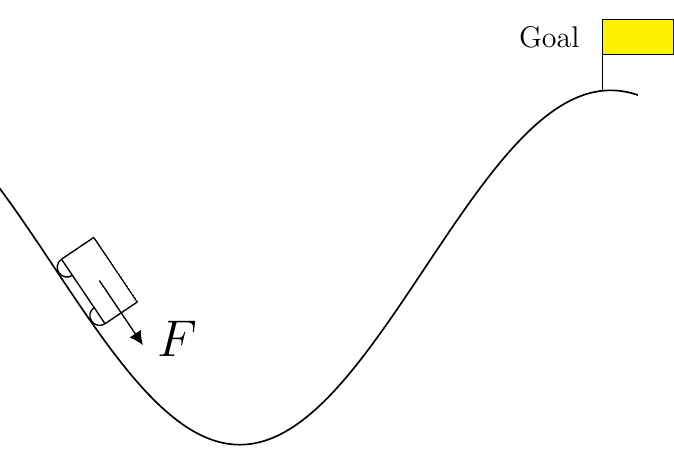}
    \caption[]{Mountain car. The dynamics were given by $v^\prime = v + \vect{a}F -
          \mathfrak{g}_\textnormal{s}\cos(3x)$. The physical parameters in simulation were given as
          $\mathfrak{g}_{\textnormal{s}}=0.0025~\text{m/s}^2$ and the force $F = 0.005~\text{N}$, action
          $\vect{a} \in \Set{-1, 0, 1}$.
    }\label{fig:mountain-car}
\end{figure}

This setting describes a car placed at the bottom of a sinusoidal valley, where the slope is given by $y
= \sin(3x)$ in the $xy$-plane, with $x \in [-1.2, 0.6]$~\cite{moore90}. The only available actions are
accelerations applied to the car in either direction. The objective is to strategically accelerate the
car, using both directions, to accumulate sufficient kinetic energy to reach the terminal (goal) state at
the top of the hill (\cref{fig:mountain-car}).

The state $\vect{s}\coloneqq [x, v]^{\intercal}$, where the velocity of the car $v\in [-0.07, 0.07]$. The
goal is achieved when the car gets beyond $x_g \coloneqq 0.5$ with velocity larger than or equal to $v_g
\coloneqq 0$, that is, whenever the car reaches a state in $\mathfrak{S}_g \coloneqq \Set{ [x,
    v]^{\intercal} \given x \geq x_g, v \geq v_g }$. The one-step loss is defined as $g(\vect{s}, a)
\coloneqq 1$, if $\vect{s} \notin \mathfrak{S}_g$, while $g(\zeta(\vect{s}, \vect{a})) \coloneqq 0$, if
$\vect{s} \in \mathfrak{S}_g$. The state-action $\zeta(\vect{s}, \vect{a})$ is defined similarly
to~\cref{subsec:inverted-pendulum}.

With regards to the data $\mathcal{D}_{\mu_n}[T]$ in~\cref{algo:PI}, a strategy similar to that of the
inverted pendulum is used. More specifically, $T = 1000$ for the proposed GMM-QFs, while $T = 20000$ for
KLSPI~\cite{xu07klspi} and $T = 1000$ for OBR~\cite{onlineBRloss:16}. A Gaussian kernel with width of
$\sigma_\kappa = 0.1$ is used for KLSPI~\cite{xu07klspi} and OBR~\cite{onlineBRloss:16}. The
implementation of DQN~\cite{mnih13dqn}, dueling DDQN~\cite{duelingddqn}, PPO~\cite{PPO} is identical to
ones for the inverted-pendulum case, while exploration length is set as $T = 100$ for
EM-GMMRL~\cite{agostini17gmmrl}.

Due to the fluctuation in performance of all algorithms, moving average values on the curves, with
window size of \num{10}, are employed hereafter.

\begin{figure}[ht]
    \centering
    \includegraphics[width=.9\columnwidth]{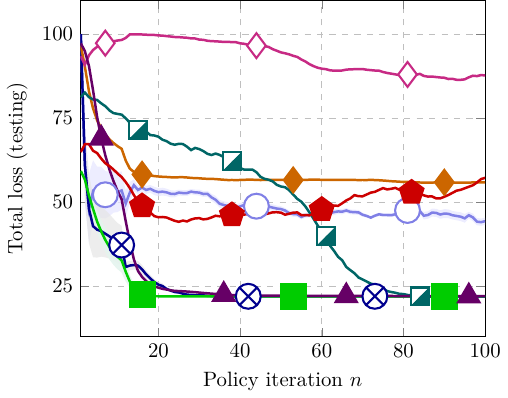}
    \caption[]{ Mountain-car dataset. Curve markers: GMM-QFs (AffI) with
      $K=500$:\quicksymbol{proposed!70!black}{mark=otimes*, mark options={line width=1pt, fill=white}},
      others follow \cref{fig:pendulum-exp}.  }
    \label{fig:mountain-car-exp}
\end{figure}

In~\cref{fig:mountain-car-exp}, DQN~\cite{mnih13dqn} achieves the best performance, followed by
PPO~\cite{PPO} and GMM-QFs with $K = 500$. Note that DQN relies on a large amount of experience data
(with a replay buffer of size \num{1e5}), whereas the proposed GMM-QFs require \textit{no}\/ experience
data to attain the performance shown in \cref{fig:mountain-car-exp}. It can also be observed that
increasing the number $K$ of Gaussian components in~\eqref{eq:gmm-q} allows GMM-QFs to reach the optimal
performance of DQN in \cref{fig:mountain-car-exp}, albeit at the cost of increased computational
complexity; see also \cref{fig:mountaincar-compare-K}. OBR~\cite{onlineBRloss:16} and
EM-GMMRL~\cite{agostini10gmmrl} perform better in this setting than in the inverted pendulum task, with
EM-GMMRL exhibiting stronger learning behavior than OBR.

\begin{figure}[ht]
    \centering
    \subfloat[Affine invariant]{ \includegraphics[width =
        0.95\columnwidth]{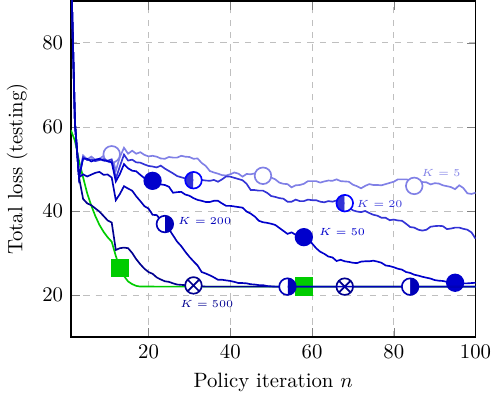} \label{mountaincar-compare-K-ai}
    } \\
    \subfloat[Bures-Wasserstein]{ \includegraphics[width =
        .95\columnwidth]{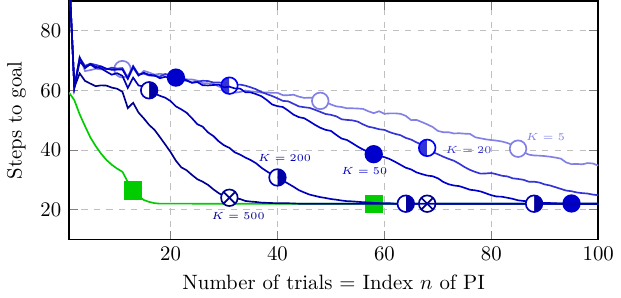} \label{mountaincar-compare-K-bw}
    }
    \caption[]{Effect of different $K$ in \cref{algo:PI} for the setting of mountain car. Curve markers
      (for both AffI and BW) $K=20$:\quicksymbol{proposed!80!white}{mark=halfcircle*, mark
        options={rotate=90, line width=1pt, fill=proposed!80!white}}, $K=50$:\proposed,
      $K=200$:\quicksymbol{proposed!90!black}{mark=halfcircle*, mark options={rotate=-90, line width=1pt,
          fill=proposed!90!black}}. DQN~\cite{mnih13dqn}:\dqn as a baseline.  The larger the $K$, the
      richer the hyperparameter space $\mathfrak{M}_K$ becomes and the faster the agent learns through
      the feedback from the environment, at the expense of increased computational
      complexity.} \label{fig:mountaincar-compare-K}
\end{figure}

\printbibliography
\end{document}